\newtheorem{lemma}{Lemma}
\newtheorem{theorem}{Theorem}
\newtheorem{definition}{Definition}
\newtheorem{remark}{Remark}
\newcommand\ceil[1]{\lceil#1\rceil}
\newcommand{\cmark}{\ding{51}}%
\newcommand{\xmark}{\ding{55}}%
\begin{document}
%
\title{Online Product Quantization}
%
%
%
%

\author{Donna~Xu,
        Ivor~W.~Tsang,
        and~Ying~Zhang
\IEEEcompsocitemizethanks{\IEEEcompsocthanksitem D. Xu, I. W. Tsang and Y. Zhang are with the Centre for Artificial Intelligence, FEIT, University of Technology Sydney, NSW, Australia.\protect\\
E-mail: donna.xu@student.uts.edu.au, ivor.tsang@uts.edu.au, ying.zhang@uts.edu.au
}
}

\IEEEtitleabstractindextext{%
\begin{abstract}
Approximate nearest neighbor (ANN) search has achieved great success in many tasks.
However, existing popular methods for ANN search, such as hashing and quantization methods, are designed for static databases only.
They cannot handle well the database with data distribution evolving dynamically, due to the high computational effort for retraining the model based on the new database.
In this paper, we address the problem by developing an online product quantization (online PQ) model and incrementally updating the quantization codebook that accommodates to the incoming streaming data.
Moreover, to further alleviate the issue of large scale computation for the online PQ update, we design two budget constraints for the model to update partial PQ codebook instead of all.
We derive a loss bound which guarantees the performance of our online PQ model.
Furthermore, we develop an online PQ model over a sliding window with both data insertion and deletion supported, to reflect the real-time behaviour of the data.
The experiments demonstrate that our online PQ model is both time-efficient and effective for ANN search in dynamic large scale databases compared with baseline methods and the idea of partial PQ codebook update further reduces the update cost.
\end{abstract}

\begin{IEEEkeywords}
Online indexing model, product quantization, nearest neighbour search.
\end{IEEEkeywords}}

\maketitle

\IEEEdisplaynontitleabstractindextext

%
\IEEEpeerreviewmaketitle

\IEEEraisesectionheading{\section{Introduction}\label{sec:introduction}}

%
%
%
%
\IEEEPARstart{A}{pproximate} nearest neighbor (ANN) search in a static database has achieved great success in supporting many tasks, such as information retrieval, classification and object detection.
However, due to the massive amount of data generation at an unprecedented rate daily in the era of big data, databases are dynamically growing with data distribution evolving over time, and existing ANN search methods would achieve unsatisfactory performance without new data incorporated in their models.
In addition, it is impractical for these methods to retrain the model from scratch for the continuously changing database due to the large scale computational time and memory. 
Therefore, it is increasingly important to handle ANN search in a dynamic database environment.

ANN search in a dynamic database has a widespread applications in the real world.
For example, a large number of news articles are generated and updated on hourly/daily basis, so a news searching system \cite{DBLP:journals/tkde/MoffatZS97} requires to support news topic tracking and retrieval in a frequently changing news database.
For object detection in video surveillance \cite{DBLP:conf/www/PopoviciWG14}, video data is continuously recorded, so that the distances between/among similar or dissimilar objects are continuously changing. 
For image retrieval in dynamic databases \cite{DBLP:conf/icmcs/DongB03}, relevant images are retrieved from a constantly changing image collection, and the retrieved images could therefore be different over time given the same image query.
In such an environment, real-time query needs to be answered based on all the data collected to the database so far.

In recent years, there has been an increasing concern over the computational cost and memory requirement dealing with continuously growing large scale databases, and therefore there are many online learning algorithm works \cite{DBLP:journals/jmlr/CrammerDKSS06,DBLP:conf/icml/ZhangYJXZ16} proposed to update the model each time streaming data coming in.
Therefore, we consider the following problem. 
Given a dynamic database environment, develop an online learning model accommodating the new streaming data with low computational cost for ANN search.

Recently, several studies on online hashing \cite{DBLP:conf/ijcai/HuangYZ13,huang2017online,DBLP:journals/corr/GhashamiA15,DBLP:conf/cvpr/LengWC0L15,DBLP:conf/iccv/CakirS15,DBLP:conf/ijcai/YangHZL13,cakir2016online} show that hashing based ANN approaches can be adapted to the dynamic database environment by updating hash functions accommodating new streaming data and then updating the hash codes of the exiting stored data via the new hash functions.
Searching is performed in the Hamming space which is efficient and has low computational cost.
However, an important problem that these works have not addressed is the computation of the hash code maintenance.
To handle the streaming fashion of the data, the hash functions are required to be frequently updated, which will result in constant hash code recomputation of all the existing data in the reference database.
This will inevitably incur an increasing amount of update time as the data volume increases.
In addition, these online hashing approaches require the system to keep the old data so that the new hash code of the old data can be updated each time, leading to inefficiency in memory and computational load.
Therefore, computational complexity and storage cost are still our major concerns in developing an online indexing model.
\begin{figure*}
	\centering
	\includegraphics[scale=0.43]{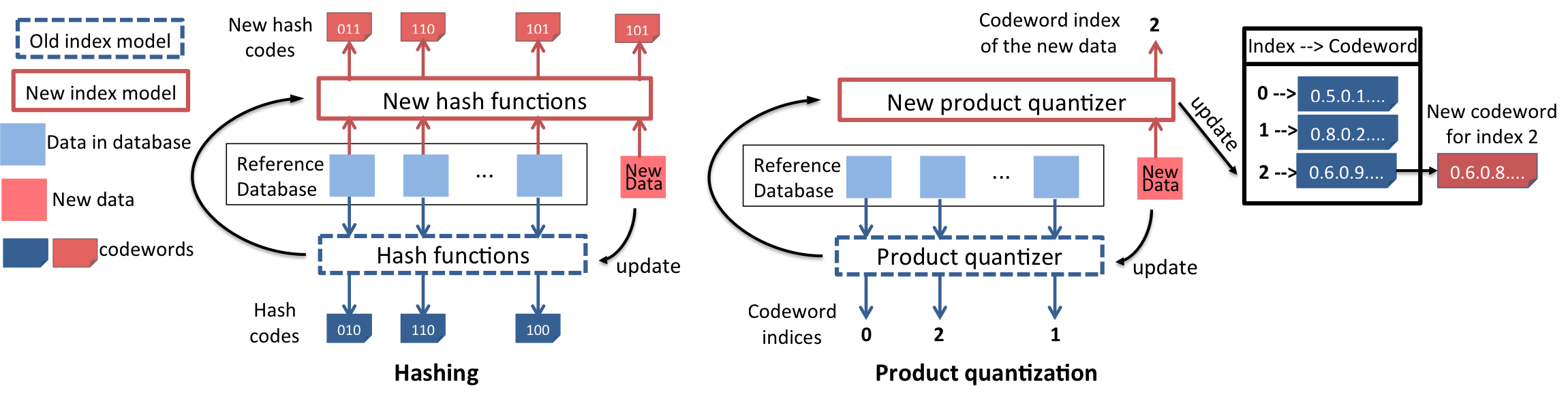}
	\caption{\label{hashing_vs_pq} Hashing vs PQ in online update. The hash codes of the data points in the reference database will get updated if the hash functions get updated by the new data. \emph{The index of the codewords in the PQ codebook, on the other hand, will remain the same even though the codebook gets updated by the new data.} Thus online PQ is able to save severely much time by avoiding codewords maintenance of the reference database. (Best viewed in colors)}
\end{figure*}

Product quantization (PQ) \cite{DBLP:journals/pami/JegouDS11} is an effective and successful alternative solution for ANN search.
PQ partitions the original space into a Cartesian product of low dimensional subspaces and quantizes each subspace into a number of sub-codewords.
In this way, PQ is able to produce a large number of codewords with low storage cost and perform ANN search with inexpensive computation.
Moreover, it preserves the quantization error and can achieve satisfactory recall performance. 
Most importantly, unlike hashing-based methods representing each data instance by a hash code, which depends on a set of hash functions, quantization-based methods represent each data instance by an index, which associates with a codeword that is in the same vector space with the data instance.
However, PQ is a batch mode method which is not designed for the problem of accommodating streaming data in the model.
Therefore, to address the problem of handling streaming data for ANN search and tackle the challenge of hash code recomputation, we develop an online PQ approach, which updates the codewords by streaming data without the need to update the indices of the existing data in the reference database, to further alleviate the issue of large scale update computational cost.

Figure \ref{hashing_vs_pq} compares hashing method and PQ in the code representation and maintenance, which illustrates the advantage of PQ over hashing in computational cost and memory efficiency.
Once the index models get updated by the streaming data, the updated hash functions in hashing methods will produce new hash codes for each data point in the reference database, which will incur expensive cost for large scale databases.
The updated product quantizer in PQ, on the other hand, updates the codewords in the codebook, but it does not change the index of the updated codewords of each data point in the reference database.
To further reduce the update computational cost, we illustrate the idea of partial codebook update \cite{DBLP:journals/tip/MaTPL17} and present two budget constraints for the model to update the codebook partially instead of all.
Furthermore, we derive a loss bound which guarantees the performance of online PQ.
Unlike traditional analysis, our model is a non-convex problem with matrices as the variables, so its theoretical analysis is not trivial to be handled.
To emphasize the real-time data for querying, we also propose an online PQ model over a sliding window, which support both data insertion and deletion.

\section{Related Works}
\begin{table*}[]
\centering
\caption{Comparison between the existing methods and ours}
\label{attribute-table}
\scalebox{0.8}{
	\begin{tabular}{|c|c|c|c|c|c|}
		\hline
		Method & \begin{tabular}[c]{@{}c@{}}Applicable to\\ streaming data\end{tabular} & \begin{tabular}[c]{@{}c@{}}Preserves the\\ quantization error\end{tabular} & 
		\begin{tabular}[c]{@{}c@{}}Does not require \\ codewords maintenance\end{tabular} &
		\begin{tabular}[c]{@{}c@{}}Does not \\ require labels\end{tabular} &
		\begin{tabular}[c]{@{}c@{}}Does not require \\ to keep old data\end{tabular}  \\ \hline
		\begin{tabular}[c]{@{}c@{}}Supervised data-dependent hashing\\ (\cite{DBLP:conf/icml/NorouziF11,DBLP:conf/cvpr/LiuWJJC12})\end{tabular} & \xmark & \xmark & \xmark & \xmark & \xmark \\ \hline
		\begin{tabular}[c]{@{}c@{}}Unsupervised data-dependent hashing\\ (\cite{DBLP:conf/cvpr/GongL11,DBLP:conf/nips/KongL12,DBLP:conf/nips/WeissTF08})\end{tabular} & \xmark & \xmark & \xmark & \cmark & \xmark \\ \hline
		\begin{tabular}[c]{@{}c@{}} Data-independent hashing\\ (\cite{DBLP:conf/vldb/GionisIM99})\end{tabular} & \cmark & \xmark & \xmark & \cmark & \cmark \\ \hline
		\begin{tabular}[c]{@{}c@{}} Online supervised hashing\\ (\cite{DBLP:conf/ijcai/HuangYZ13,huang2017online,DBLP:conf/iccv/CakirS15,DBLP:conf/ijcai/YangHZL13,cakir2016online})\end{tabular} & \cmark & \xmark & \xmark & \xmark & \xmark \\ \hline
		\begin{tabular}[c]{@{}c@{}} Online unsupervised hashing\\ (\cite{DBLP:journals/corr/GhashamiA15,DBLP:conf/cvpr/LengWC0L15})\end{tabular} & \cmark & \xmark & \xmark & \cmark & \xmark \\ \hline
		\begin{tabular}[c]{@{}c@{}} Quantization that requires codewords maintenance: \\AQ, CQ, SQ, TQ, KMH, ABQ, SSH (\cite{DBLP:conf/cvpr/BabenkoL14,DBLP:conf/icml/ZhangDW14, DBLP:conf/cvpr/ZhangQTW15,DBLP:conf/cvpr/BabenkoL15,DBLP:conf/cvpr/HeWS13,DBLP:conf/ecai/LiLWS16,DBLP:journals/tip/LiuLDT17,DBLP:journals/tcyb/LiuDDLL16})\end{tabular} & \xmark & \cmark & \xmark & \cmark & \xmark \\ \hline
		\begin{tabular}[c]{@{}c@{}} Quantization that does not require codewords maintenance: \\PQ, OPQ (\cite{DBLP:journals/pami/JegouDS11,DBLP:journals/pami/GeHK014})\end{tabular} & \xmark & \cmark & \cmark & \cmark & \xmark \\ \hline
		Proposed model & \cmark & \cmark & \cmark & \cmark & \cmark \\ \hline
	\end{tabular} }
\end{table*}
Hashing methods generate a set of hash functions to map a data instance to a hash code in order to facilitate fast nearest neighbor search.
Existing hashing methods are grouped in data-independent hashing and data-dependent hashing.
One of the most representative work for data-independent hashing is Locality Sensitive Hashing (LSH) \cite{DBLP:conf/vldb/GionisIM99}, where its hashing functions are randomly generated.
LSH has the theoretical performance guarantee that similar data instances will be mapped to similar hash codes with a certain probability.
Since data-independent hashing methods are independent from the input data, they can be easily adopted in an online fashion.
Data-dependent hashing, on the other hand, learns the hash functions from the given data, which can achieve better performance than data-independent hashing methods.
Its representative works are Spectral Hashing (SH) \cite{DBLP:conf/nips/WeissTF08,DBLP:journals/tcyb/ChenXTL14}, which uses spectral method to encode similarity graph of the input into hash functions, IsoH \cite{DBLP:conf/nips/KongL12} which finds a rotation matrix for equal variance in the projected dimensions and ITQ \cite{DBLP:conf/cvpr/GongL11} which learns an orthogonal rotation matrix for minimizing the quantization error of data items to their hash codes.

To handle nearest neighbor search in a dynamic database, online hashing methods \cite{DBLP:conf/ijcai/HuangYZ13,huang2017online,DBLP:journals/corr/GhashamiA15,DBLP:conf/cvpr/LengWC0L15,DBLP:conf/iccv/CakirS15,DBLP:conf/ijcai/YangHZL13,cakir2016online} have attracted a great attention in recent years. They allow their models to accommodate to the new data coming sequentially, without retraining all stored data points.
Specifically, Online Hashing \cite{DBLP:conf/ijcai/HuangYZ13,huang2017online}, AdaptHash \cite{DBLP:conf/iccv/CakirS15} and Online Supervised Hashing \cite{cakir2016online} are online supervised hashing methods, requiring label information, which might not be commonly available in many real-world applications.
Stream Spectral Binary Coding (SSBC) \cite{DBLP:journals/corr/GhashamiA15} and Online Sketching Hashing (OSH) \cite{DBLP:conf/cvpr/LengWC0L15} are the only two  existing online unsupervised hashing methods which do not require labels, where both of them are matrix sketch-based methods to learn to represent the data seen so far by a small sketch.
However, all the online hashing methods suffer from the existing data storage and the high computational cost of hash code maintenance on the existing data.
Each time new data comes, they update their hash functions accommodating to the new data and then update the hash codes of all stored data according to the new hash functions, which could be very time-consuming for a large scale database.

Multi-codebook quantization (MCQ) methods \cite{DBLP:journals/pami/JegouDS11,DBLP:journals/pami/GeHK014,DBLP:conf/cvpr/BabenkoL14,DBLP:conf/icml/ZhangDW14, DBLP:conf/cvpr/ZhangQTW15,DBLP:conf/cvpr/BabenkoL15} are derived by minimizing the quantization error between the original input data and their corresponding codewords.
Each codeword is represented by a set of sub-codewords selected from multiple codebooks.
Please refer to \cite{matsui2018survey} for a comprehensive literature survey.
To the best of our knowledge, no MCQ methods have been explored to an online fasion.
Some of the popular works such as Composite Quantization (CQ) \cite{DBLP:conf/icml/ZhangDW14}, Sparse Composite Quantization (SQ) \cite{DBLP:conf/cvpr/ZhangQTW15}, Additive Quantization (AQ) \cite{DBLP:conf/cvpr/BabenkoL14} and Tree Quantization (TQ) \cite{DBLP:conf/cvpr/BabenkoL15} require the codeword maintenance of the old data to update the codewords due to the constraints or structure of their models.
Product Quantization (PQ) \cite{DBLP:journals/pami/JegouDS11}, as one of the most classical MCQ method for fast nearest neighbor search, decomposes the input space into a Cartesian product of subspaces.
The codeword of a data instance is represented by the concatenation of the sub-codeword of the data in all subspaces.
More specifically, each sub-codeword is represented by an index.
Thus the codeword of a data instance is the concatenation of the sub-codeword indices of all subspaces.
Assuming that the codebook will not change much with an update by a streaming data, an online PQ model can be proposed.
Under this assumption, there is no need of codewords maintenance of the existing data, as the indices of the sub-codewords of the data remain the same even though the actual sub-codewords are updated.
It is different from online hashing methods, which requires the hash code update for the existing data each time the hash functions get updated.
Extension works of PQ such as Optimized Product Quantization (OPQ) \cite{DBLP:journals/pami/GeHK014}, K-means Hashing (KMH) \cite{DBLP:conf/cvpr/HeWS13}, Adaptive Binary Quantization (ABQ) \cite{DBLP:conf/ecai/LiLWS16,DBLP:journals/tip/LiuLDT17} and Structure Sensitive Hashing (SSH) \cite{DBLP:journals/tcyb/LiuDDLL16} can also be developed to an online fashion.
Specifically, KMH, ABQ and SSH approximate the distance between codewords by their Hamming distance, so they will require codewords maintenance in the online setting.
In this paper, we focus on PQ method, so a novel online paradigm for PQ is proposed.
The difference between existing methods and ours are summarized in Table \ref{attribute-table}.

\section{Online Product Quantization with Budget Constraints}
\begin{figure}
	\centering
	\includegraphics[scale=0.5]{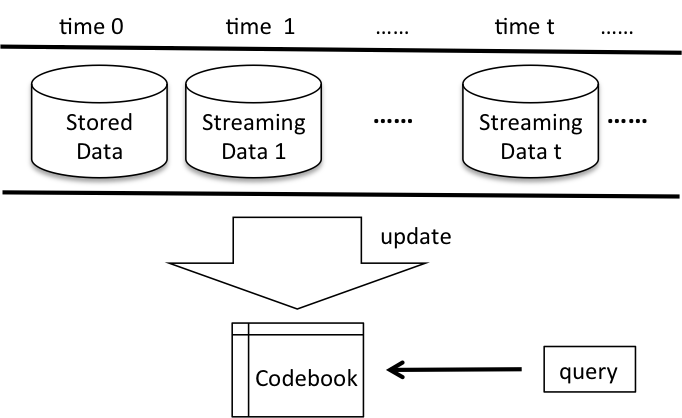}
	\caption{\label{online_update} A general procedure for Online Product Quantization update. At each iteration, the new codebook gets updated by the streaming data. Searching can be performed using the latest codebook.}
\end{figure}

The traditional PQ method assumes that the database is static and hence it is not suitable for non-stationary data setting especially when the data is time-varying.
Therefore, it is crucial to develop an online version of PQ to deal with dynamic database.
A general procedure of our online product quantization framework is illustrated in Figure \ref{online_update}.
The codebook at each iteration gets updated by the streaming data without retraining all the collected data.
ANN search can be conducted against the latest codebook in terms of user queries.
Unlike online hashing methods which update hashing functions and hash codes of the existing data, online PQ updates codebooks only and the codeword index of the existing data remains the same.

\subsection{Preliminaries}
We first define the vector quantization approach \cite{gray1998quantization} and the concept of quantization error, and then introduce Product quantization.
Table \ref{notations} summarizes the notations frequently used in this paper.
\begin{table}[]
	\centering
	\caption{Summary of Notations}
	\label{notations}
	\scalebox{0.87}{
	\begin{tabular}{|l|l|}
		\hline
		\textbf{Notation} & \textbf{Definition} \\ \hline
		$\mathcal{X}$ & a set of data points \\ \hline
		$\mathcal{Z}$ & codebook \\ \hline
		$||\cdot||$ & $l_2$-norm \\ \hline
		$||\cdot||_{F}$ & Frobenius norm \\ \hline
		$t$ & iteration number. $t = 1, 2, ..., T$ \\ \hline
		$M$ & the number of subspaces (subquantizers) \\ \hline
		$K$ & the number of sub-codewords in each subspace \\ \hline
		$x^t \in \mathbb{R}^{D}$ & the streaming data in iteration $t$ \\ \hline
		$x_m^t \in \mathbb{R}^{\frac{D}{M}}$ & the subvector of the data in the $m$th subspace in iteration $t$ \\ \hline
		$z_{m,k}^t \in \mathbb{R}^{\frac{D}{M}}$ & the $k$th sub-codeword of the $m$th subspace in iteration $t$ \\ \hline
		$Z_{m}^{t} \in \mathbb{R}^{\frac{D}{M} \times K}$ & sub-codewords concatenation $[z_{m,1}^{t}, ..., z_{m,K}^{t}]$ \\ \hline
		$\mathcal{C}_{m,k}^{t}$ &  the set of vector index assigned to the sub-codeword $z_{m,k}^t$ \\ \hline
		$n_{m,k}$ &  the number of vectors assigned to the sub-codeword $z_{m,k}^t$ \\ \hline
		$\Delta z_{m,k}^t \in \mathbb{R}^{\frac{D}{M}}$ & \begin{tabular}[c]{@{}l@{}}the difference of the $z_{m,k}^t$ in two consecutive iterations\\ i.e., $z_{m,k}^{t+1} = z_{m,k}^{t} + \Delta z_{m,k}^t$\end{tabular} \\ \hline
		$\Delta Z_m^t \in \mathbb{R}^{\frac{D}{M} \times K}$ & differences concatenation $[\Delta z_{m,1}^t,...,\Delta z_{m,K}^t]$ \\ \hline
		$B$ & size of the mini-batch \\ \hline
		$\alpha$ & the number of subspaces to be updated \\ \hline
		$\lambda$ & the percentage of sub-codewords to be updated \\ \hline
		$L$ & size of the sliding window \\ \hline
	\end{tabular}}
\end{table}
\begin{definition}[Vector quantization \cite{gray1998quantization}] \label{def1}
\emph{Vector quantization} approach quantizes a vector $x \in {\rm I\!R}^D$ to its codeword $z_k$ in a codebook $\mathcal{Z} = \{z_k\}$ where $k \in \{1,...,K\}$. 
\end{definition}
\begin{definition}[Quantization error] \label{def2}
Given a finite set of data points $\mathcal{X}$, vector quantization aims to minimize the \emph{quantization error} which is defined in the following:
\begin{align*}
\min_{\substack{\mathcal{C}_{1},...,\mathcal{C}_{K}\\ z_{1},...,z_{K}}} \sum_{i=1}^{|\mathcal{X}|}\|x^i - z_{k} \|^2 
\end{align*}
where $\mathcal{C}_{k}$ is the set of data indices assigned to the codeword $z_k$. \\
\textbf{\emph{Remark.}} In batch methods, ${C_k}$ can be automatically created after ${z_k}$ are decided. In online methods, however, ${C_k}$ is updated over time and it is not recomputed with respect to ${z_k}$ at each iteration.

\end{definition}
According to the first Lloyd's condition, $x^i$ should be mapped to its nearest codeword $z_k$ in the codebook.
A codeword $z_k$ can be computed as the centroid of the vectors with index in $\mathcal{C}_k$.
All of the codewords form the codebook $\mathcal{Z}$ with size $K$.

Unlike vector quantization which uses one quantizer to map a vector, product quantization (PQ) uses $M$ subquantizers.
It represents any $x \in {\rm I\!R}^D$ as a concatenation of $M$ sub-vectors $[x_1,...,x_m,...,x_M]$ where $x_m \in {\rm I\!R}^{D/M}$, assuming that $D$ is a multiple of $M$ for simplicity.
The PQ codebook is then composed of $M$ sub-codebooks and each of the sub-codebook contains $K$ sub-codewords quantized from a distinct subquantizer.
Any codeword belongs to the Cartesian product of the sub-codewords in each sub-codebook.
The codeword of x is constructed by the concatenation of M sub-codewords $z = [z_{1,k_1},...,z_{m,k_m},...,z_{M,k_M}]$, where $z_{m,k_m}$ is the sub-codeword of $x_m$.

\subsection{Online Product Quantization}
Inspired by product quantization and online learning, the objective function of the online product quantization at each iteration $t$ is shown in the following:
\begin{equation}\label{3}
\begin{split}
\min_{\substack{\mathcal{C}_{1,1}^t,...,\mathcal{C}_{m,k}^t,...,\mathcal{C}_{M,K}^t\\ z_{1,1}^t,...,z_{m,k}^t,...,z_{M,K}^t}} \sum_{m=1}^M\|x_m^{t} - z_{m,k}^t \|^2 \\
\end{split}
\end{equation}
where  $x_m^t$ is the streaming data in the $m$th subspace in the $t$th iteration and its nearest sub-codeword is $z_{m,k}^t$.
We expect to minimize the quantization error of the data at the current iteration $t$.
Inspired by sequential vector quantization algorithm \cite{Alpaydin:2010:IML:1734076} to update the codebook, the solution of online PQ is shown in Algorithm \ref{alg:1}.
\begin{algorithm}
	\caption{\label{alg:1} Online PQ}
	\begin{algorithmic}[1]
		\STATE initialize PQ with the $M*K$ sub-codewords $z_{1,1}^0,...,z_{m,k}^0,...,z_{M,K}^0$ using a initial set of data  \\
		\STATE initialize $C_{1,1}^0,...,C_{m,k}^0,...,C_{M,K}^0$ to be the cluster sets that contain the index of the initial data that belong to the cluster
		\STATE create counters $n_{1,1},...,n_{m,k},...,n_{M,K}$ for each cluster and initialize each $n_{m,k}$ to be the number of initial data points assigned to the corresponding $C_{m,k}^0$ \\
		\FOR{$t=1, 2, 3, ...$}
		\STATE get a new data $x^t$\\
		\STATE partition $x^t$ into $M$ subspaces $[x_1^t,...,x_M^t]$\\
		\STATE in each subspace $m \in \{1,...,M\}$, determine and assign the nearest sub-codeword $z_{m,k}^t$ for each sub-vector $x_m^t$\\ \label{alg:codeword}
		\STATE update the cluster set $C_{m,k}^t \leftarrow C_{m,k}^{t-1} \cup \{ind\}$ $\forall m \in \{1,...,M\}$ where $ind$ is the index number of $x^{t}$ \\
		\STATE update the number of points for each sub-codeword: $n_{m,k} \leftarrow n_{m,k} + 1$ $\forall m \in \{1,...,M\}$ \\ \label{alg:update1}
		\STATE update the sub-codeword: $z_{m,k}^{t+1} \leftarrow z_{m,k}^t + \frac{1}{n_{m,k}}(x_m^t - z_{m,k}^t)$ $\forall m \in \{1,...,M\}$ \\ \label{alg:update2}
		\ENDFOR
	\end{algorithmic}
\end{algorithm}

\subsection{Mini-batch Extension}
In addition to processing one streaming data at a time, our framework can also handle a mini-batch of data at a time.
In the case of processing mini-batch of data, we assume that each time we get a new batch of data points $X^t \in {\rm I\!R}^{B \times D}$ where $B$ is the size of the mini-batch. Its objective function is stated as the following:
\begin{equation}\label{3-1}
\begin{split}
\min_{\substack{\mathcal{C}_{1,1}^t,...,\mathcal{C}_{m,k}^t,...,\mathcal{C}_{M,K}^t\\ z_{1,1}^t,...,z_{m,k}^t,...,z_{M,K}^t}} \sum_{m=1}^M\sum_{i=1}^{B}\|x_m^{t,i} - z_{m,k}^t \|^2 \\
\end{split}
\end{equation}
where  $x_m^{t,i}$ is the $i$th streaming data of the current mini-batch in the $m$th subspace at the $t$th iteration and its nearest sub-codeword is $z_{m,k}^t$.

Follow the aforementioned algorithm, we determine the sub-codeword for each sub-vector in each subspace for all data in the mini-batch, and update the counters accordingly for the determined sub-codewords.
Finally, each determined sub-codewords can be updated as $z_{m,k}^{t+1} \leftarrow z_{m,k}^t + \frac{1}{n_{m,k}}\sum_{\{i \in \mathcal{C}_{m,k}^t\}}(x_m^{t,i} - z_{m,k}^t)$ $\forall m \in \{1,...,M\}$ where  $x_m^{t,i}$ is a streaming data point in the $m$th subspace in the $t$th iteration with $z_{m,k}^t$ as its nearest sub-codeword.

\subsection{Partial Codebook Update}
\begin{figure}
	\centering
	\includegraphics[scale=0.31]{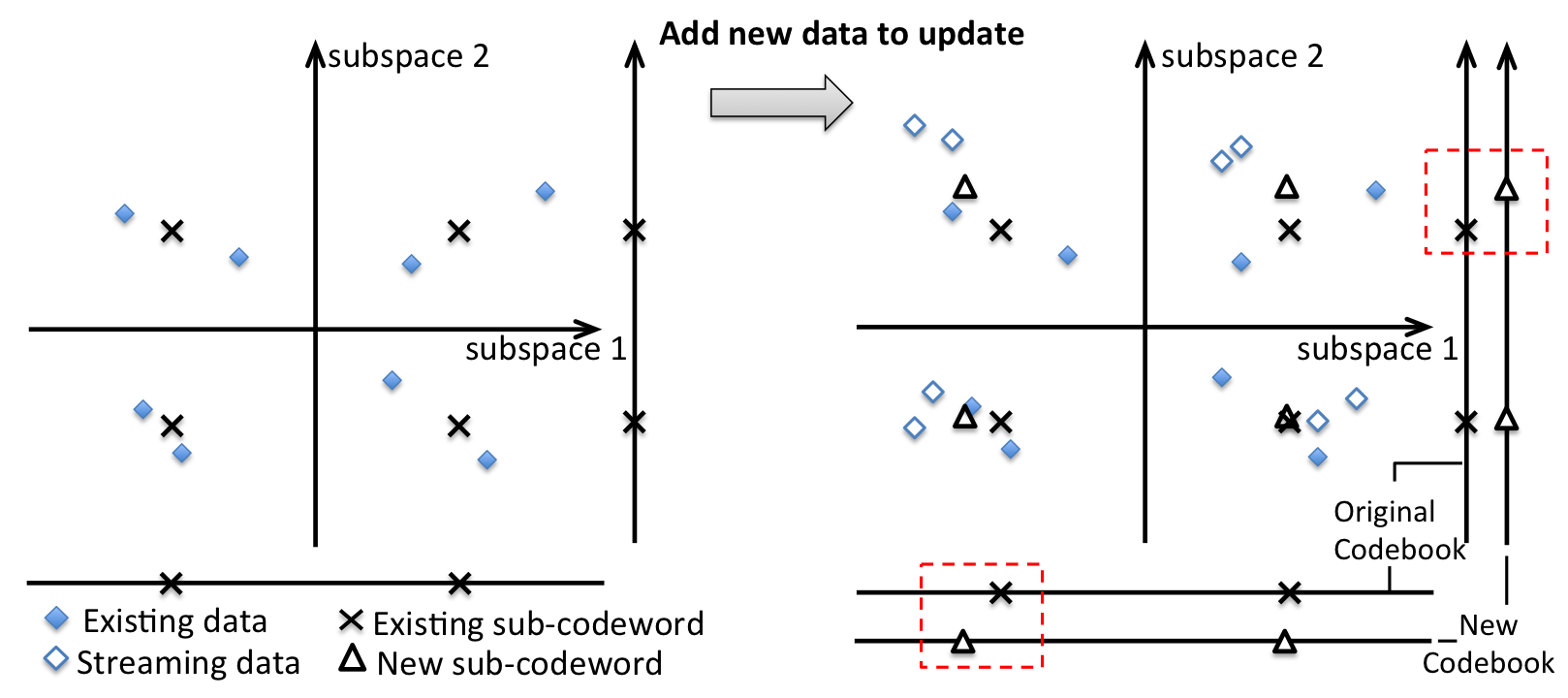}
	\caption{\label{budget_constraint} A schematic figure of online product quantization with budget constraints. There are two subspaces where each subspace has two sub-codewords. After the codebook adapting to the new data, two of the four sub-codewords get hugely changed (highlighted in a red dashed rectangle) and the rest two sub-codewords barely changed.}
\end{figure}
As we mentioned in the introduction, one of the issues is that online indexing model might incur high computational cost in update.
Each new incoming data point might contribute in different significance of changes in different subspaces of nearest sub-codeword update.
An obvious example of this is that, given a new streaming data, one of its sub-vector is far from its nearest sub-codeword and another of its sub-vector is close to its nearest sub-codeword, then the first one contributes more in PQ index update than the second one.
Moreover, mini-batch streaming data sometimes would result in a number of sub-codewords to be updated across different subspaces, and different sub-codewords (within or outside the same subspace) would have different significance of changes in update.
It is worthless to update the sub-codeword when the update change is minimal.
To better illustrate the idea, we show the update of a mini-batch of streaming data in Figure \ref{budget_constraint}.
After assigning the nearest codeword to the new data, it shows that there is one sub-codeword in each subspace that is hugely different from its previous sub-codeword.
The other two sub-codewords barely changed.
Therefore, the update cost can be further reduced by ignoring the update of these sub-codewords that have less significant changes.
Thus we can tackle the issue of possible high computational cost of update as we mentioned in the introduction by employing partial codebook update strategy, which can be achieved by adding one of the two budget constraints: the number of subspaces and the number of sub-codewords to be updated.
\subsubsection{Constraint On Subspace Update}
Each streaming data point is assigned to a sub-codeword in each subspace, so at least one sub-codeword in each subspace needs to be updated at each iteration.
It is possible that the features in some subspaces of the new data have a vital contribution in their corresponding sub-codeword update and the features in some other subspaces have trivial contribution.
Therefore, we target at updating the sub-codewords in the subspaces with significant update changes only.
The subspace update constraint we add to our framework is used to update a subset of subspaces $\phi$.
\begin{equation*}
\phi \subseteq \{1,...,M\}, |\phi| \leq \alpha
\end{equation*}
where $\alpha$ represents the number of subspaces to be updated and $1 \leq \alpha \leq M$.
We apply heuristics to get the optimal solution by selecting the top $\alpha$ subspaces with the most significant update.
The significance of the subspace update can be computed by the sum of the quantization errors of streaming data.
Thus Steps~\ref{alg:update1} and ~\ref{alg:update2} in Algorithm \ref{alg:1} are applied for determined sub-codewords in the selected top $\alpha$ subspaces.

\subsubsection{Constraint On Sub-codeword Update}
Specifically in the case of mini-batch streaming data update, it is likely that each mini-batch consists of different classes of data, which results in a number of sub-codewords in each subspace to be updated.
Similar to subspace update constraint, we propose a sub-codeword update constraint to select a subset of sub-codewords $\psi$ to update.

\begin{equation*}
\psi \subseteq \{(1,1), ...,(m,k),...,(M,K)\}, |\psi| \leq \lambda MK
\end{equation*}

\noindent
where $\lambda$ represents the percentage of sub-codewords to be updated and $0 \leq \lambda \leq 1$.
Each $(m,k)$ tuple in $\psi$ represents the index of the sub-codeword $z_{m,k}$.
Similar to the solution for handling the subspace update constraint, we select the top $\lambda MK$ sub-codewords with the highest quantization error and apply the update steps ~\ref{alg:update1} and ~\ref{alg:update2} to the selected sub-codwords. 

\subsection{Complexity Analysis}
For our online PQ model with $M$ subspaces and $K$ sub-codewords in each subspace, the codebook update complexity for each iteration by a mini-batch of streaming data with size $B$ ($B \ge 1$) and $D$ dimensions is $\mathcal{O}(BKD + BM + BD)$, where these three elements represent the complexity of streaming data encoding, codewords counter update and codewords update respectively.
This complexity can be reduced in two ways.
First, the complexity of streaming data encoding, requiring the distance computation between data and each of the codewords, can be reduced by applying random projection methods such as LSH.
In this case, its complexity can be reduced from $\mathcal{O}(BKD)$ to $\mathcal{O}(BKb)$, where $b$ is the number of bits learned by the random projection method.
Thus both $\mathcal{O}(BKb)$ and $\mathcal{O}(BD)$ are dominant in the online model, especially for high-dimensional dataset (large $D$) with short hash codes (small $b$).
Second, by applying the budget constraints in our model, the complexity of the codewords update $\mathcal{O}(BD)$ will get proportionally decreased to the constraint parameters.
Note the overall update complexity does not depend on the volume of the database at the current iteration.

\section{Loss Bound}
In this section we study the relative loss bounds for our online product quantization, assuming our framework processes streaming data one at a time. 
Traditional analysis for online models are convex problems with vectors as variables.
Our model, on the other hand, is non-convex and has matrices as variables, which makes the analysis non-trivial to be handled.
Moreover, each of the continuously learned codewords may not be consistently matching with each codeword in the best fixed batch model.
For example, a new incoming data may be assigned to the codeword with index 1 in our model but to the codeword with index 3 in the best batch model.
This will make the loss bound even more difficult to study.
Without using the properties of the convex function, we derive the loss bound of our model.

Here we define ``loss" and ``codeword" analogous to ``prediction loss" and ``prediction", and follow the analysis in \cite{DBLP:journals/jmlr/CrammerDKSS06}.
Since all $M$ subquantizers are independent to each other, we focus on the loss bound in a subquantizer $m$.
The instantaneous quantization error function for our algorithm using the $m$th subquantizer during iteration $t$ is defined as:
\begin{equation}\label{loss1}
\begin{split}
&  \ell^{t}(Z_{m}^{t}) = ||x_{m}^{t} - z_{m,k}^t||^2
\end{split}
\end{equation}
where $Z_{m}^{t} \in \mathbb{R}^{\frac{D}{M} \times K}$ is the concatenation of all sub-codewords in the $m$th subspace $[z_{m,1}^{t}, ..., z_{m,K}^{t}]$ and $z_{m,k}^t$ is the closest sub-codeword to $x_{m}^{t}$.
For convenience, we use $\ell_{Z_{m}}^{t}$ to denote $\ell^{t}(Z_{m}^{t})$ given $Z_{m}^t$ as the sub-codewords.
To study the relative loss bound of our online model, by following \cite{DBLP:journals/jmlr/CrammerDKSS06}, we introduce an arbitrary matrix $U$, and we will prove that if there exists a matrix $U$ as the concatenated sub-codewords learned from the best batch model in hindsight (with the minimum quantization error for any $t$), our online model can converge to this batch model. 
Here we assume that $U \in {\rm I\!R}^{\frac{D}{M} \times K}$ concatenates $K$ sub-codewords $[u_{1}, ..., u_{K}]$.
We arrange the order of these sub-codewords in the way that the column vectors of $Z_{m}^t$ and $U$ are paired by minimum distance, \emph{i.e.} $z_{m,k}^t$ matches $u_{k}$ by sub-codeword index to achieve a minimum $\sum_{k=1}^{K} {||z_{m,k}^t - u_{k}||^2}$, so that we will have streaming data assigned to the same index of the sub-codewords in $Z_{m}^t$ and $U$ as likely as possible.
We use $\ell_{U}^{t}$ to denote the quantization error given $U$ as the sub-codewords:
\begin{equation}\label{loss2}
\begin{split}
&  \ell_{U}^{t} = \ell^t(U) = ||x_{m}^{t} - u_{k*}||^2
\end{split}
\end{equation}
where $u_{k*}$ is the closest sub-codeword to $x_{m}^{t}$.

Following Lemma 1 in \cite{DBLP:journals/jmlr/CrammerDKSS06}, we derive the following lemma:
\begin{lemma}\label{lemma1}
	Let $x_{m}^{1}, ..., x_{m}^{T}$ be a sequence of examples for the $m$th subspace where $x_{m}^{t} \in {\rm I\!R}^{\frac{D}{M}}$.
	Assume $||x_{m}^{t} - u_{k}|| - ||x_{m}^{t} - u_{k*}|| \leq \beta$ where $\beta$ is a constant. Let $\mathcal{T}_{n}$ be the set of iteration numbers where $z_{m,k}^{t}$ does not match $u_{k*}$, \emph{i.e.} $||u_{k} - u_{k*}|| \ne 0$. Using the notation provided in Eq.\ref{loss1} and Eq.\ref{loss2}, then
	\begin{equation*}
	\begin{split}
	& \sum_{t=1}^{T}\frac{1}{n_{m,k}^t} ((1 - \frac{1}{n_{m,k}^t})\ell_{Z_{m}}^{t} - \ell_{U}^{t} - ||x_{m}^{t}||^2 - ||u_{k}||^2) \\
	& \indent \indent \indent \indent \indent \indent - \beta \sum_{t \in \mathcal{T}_{n}}\frac{1}{n_{m,k}^t} \leq ||Z_{m}^1 - U||^2
	\end{split}
	\end{equation*}
\end{lemma}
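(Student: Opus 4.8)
The plan is to follow the potential-function (telescoping) argument of Crammer et al., taking the squared Frobenius distance to the reference codebook as the potential. I would set $\Phi_t = \|Z_m^t - U\|^2$. Since the right-hand side of the lemma is exactly $\Phi_1 = \|Z_m^1 - U\|^2$, the natural route is the telescoping identity $\sum_{t=1}^{T}(\Phi_t - \Phi_{t+1}) = \Phi_1 - \Phi_{T+1} \le \Phi_1$, where the inequality merely discards the nonnegative terminal potential $\Phi_{T+1}\ge 0$. Everything then reduces to lower-bounding each per-iteration drop $\Phi_t - \Phi_{t+1}$ by the corresponding summand on the left-hand side of the claim. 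For this telescoping to be legitimate one must fix the column pairing between $Z_m^t$ and $U$ once and for all, so that $\Phi_t$ is a well-defined potential across iterations.

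Next I would reduce the matrix computation to a single column. At iteration $t$ Algorithm~\ref{alg:1} updates only the nearest sub-codeword via Step~\ref{alg:update2}, so $\Phi_t - \Phi_{t+1} = \|z_{m,k}^t - u_k\|^2 - \|z_{m,k}^{t+1} - u_k\|^2$, where $u_k$ is the column of $U$ paired to index $k$. Substituting the update $z_{m,k}^{t+1} = z_{m,k}^t + \tfrac{1}{n_{m,k}^t}(x_m^t - z_{m,k}^t)$ and applying the three-point identity $2(a-b)\cdot(c-a) = \|c-b\|^2 - \|a-b\|^2 - \|c-a\|^2$ with $a = z_{m,k}^t$, $b = u_k$, $c = x_m^t$, the drop becomes exactly
\[
\Big(1-\tfrac{1}{n_{m,k}^t}\Big)\tfrac{1}{n_{m,k}^t}\|x_m^t - z_{m,k}^t\|^2 + \tfrac{1}{n_{m,k}^t}\|z_{m,k}^t - u_k\|^2 - \tfrac{1}{n_{m,k}^t}\|x_m^t - u_k\|^2.
\]
I would identify the first term as $\tfrac{1}{n_{m,k}^t}(1-\tfrac{1}{n_{m,k}^t})\ell_{Z_m}^t$ and discard the nonnegative middle term to obtain a lower bound, leaving $-\tfrac{1}{n_{m,k}^t}\|x_m^t - u_k\|^2$ to be controlled.

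The crux is to pass from $\|x_m^t - u_k\|^2$, the column of $U$ paired to our \emph{online} index $k$, to $\ell_U^t = \|x_m^t - u_{k*}\|^2$, the distance to the \emph{genuinely nearest} column $u_{k*}$; these disagree precisely when the online nearest index differs from the batch nearest index, the difficulty highlighted in the text. Concretely I must establish the per-iteration inequality $\|x_m^t - u_k\|^2 \le \ell_U^t + \|x_m^t\|^2 + \|u_k\|^2$, augmented by an extra $+\beta$ on iterations in $\mathcal{T}_n$. I would split on $\mathcal{T}_n$: off $\mathcal{T}_n$ we have $u_k = u_{k*}$, so $\|x_m^t - u_k\|^2 = \ell_U^t$ and the inequality collapses to $0 \le \|x_m^t\|^2 + \|u_k\|^2$; on $\mathcal{T}_n$ the assignment hypothesis $\|x_m^t - u_k\| - \|x_m^t - u_{k*}\| \le \beta$ is the only available handle on the mismatch and must be leveraged, with the slack terms $\|x_m^t\|^2 + \|u_k\|^2$ absorbing the cross terms created when converting the first-power bound into the squared distances. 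Summing the resulting per-iteration inequality over $t=1,\dots,T$ then reproduces $\beta\sum_{t\in\mathcal{T}_n}\tfrac{1}{n_{m,k}^t}$ and, combined with the telescoping bound $\sum_t(\Phi_t-\Phi_{t+1})\le\Phi_1$, yields the claim.

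I expect this index-mismatch step to be the main obstacle. Unlike the convex online-learning setting, there is no gradient inequality $\ell_U^t \ge \ell_{Z_m}^t + \langle\nabla,\,U - Z_m^t\rangle$ available, so the three-point identity together with the $\beta$-assignment hypothesis must play that role, and one must verify that $\mathcal{T}_n$ indeed captures every iteration on which the fixed pairing sends the online code to a non-nearest column of $U$. Additional care is needed to confirm that discarding both $\Phi_{T+1}$ and the nonnegative column term $\tfrac{1}{n_{m,k}^t}\|z_{m,k}^t - u_k\|^2$ preserves the direction of the inequality throughout.
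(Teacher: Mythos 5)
Your proposal follows essentially the same route as the paper's proof: the telescoping potential $\|Z_m^t-U\|_F^2$, the reduction of each drop to the single updated column, the identity for the rank-one update (your three-point identity is just a repackaging of the paper's inner-product expansion), the discarding of the nonnegative $\|z_{m,k}^t-u_k\|^2$ term, and the matched/mismatched case split on $\mathcal{T}_n$ with $\beta$ absorbing the index-mismatch penalty. The per-iteration inequality you isolate as the crux, $\|x_m^t-u_k\|^2 \le \ell_U^t + \beta\,\1_{t\in\mathcal{T}_n} + \|x_m^t\|^2 + \|u_k\|^2$, is exactly the step the paper performs (and glosses over with the same looseness you flag), so the argument is correct in the same sense and to the same degree as the published one.
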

where $Z_{m}^1$ is initialized to be nonzero matrix.

\begin{proof}
Following the proof of Lemma 1 in \cite{DBLP:journals/jmlr/CrammerDKSS06}, define $\Delta_{t}$ to be $||Z_m^t - U||_F^2 - ||Z_{m}^{t+1} - U||_F^2$. We obtain that, 
\begin{equation*}
\sum_{t=1}^{T} \Delta_t \leq ||Z_m^1 - U||_F^2
\end{equation*}

We now try to bound $\Delta_t$. If a sub-codeword of a streaming data point is not selected to be updated due to the budget constraint during iteration $t$, \emph{i.e.} $Z_m^{t+1}$ is the same as $Z_m^t$, then this $\Delta_t = 0$. Therefore, we focus on the iterations for which $||\Delta {Z_m^t}||_F \textgreater 0$. As $z_{m,k}^{t+1} = z_{m,k}^t + \Delta z_{m,k}^t$ where $\Delta z_{m,k}^t = \frac{x_m^t - z_{m,k}^t}{n_{m,k}^t}$ when $z_{m,k}^t$ is the codeword for $x_m^t$, we define $\Phi_{k} (\Delta z_{m,k}^t)$ as the difference between $Z_m^{t+1}$ and $Z_m^{t}$, where $\Delta z_{m,k}^t$ is the difference vector between the $k$th column vector of $Z_m^{t+1}$ and $Z_m^{t}$. We can therefore write $\Delta_t$ as,
\begin{equation*}
\begin{split}
\Delta_t & = ||Z_m^t - U||_F^2 - ||Z_{m}^{t+1} - U||_F^2 \\
& = ||Z_m^t - U||_F^2 - ||Z_{m}^{t} - U + \Phi_{k} (\Delta z_{m,k}^t)||_F^2 \\
& = -2trace((Z_{m}^{t} - U)^{T}\Phi_{k} (\Delta z_{m,k}^t)) \\
& \indent \indent \indent \indent \indent \indent - trace(\Phi_{k} (\Delta z_{m,k}^t)^T \Phi_{k} (\Delta z_{m,k}^t)) \\
& = -2(z_{m,k}^{t} - u_k) \cdot \Delta z_{m,k}^t - ||\Delta z_{m,k}^t||^2 \\
& = \frac{1}{n_{m,k}^t}(-2 z_{m,k}^t \cdot x_m^t + 2 x_m^t \cdot u_k + 2||z_{m,k}^t||^2 \\
& \indent \indent \indent \indent \indent \indent \indent - 2 z_{m,k}^t \cdot u_k - \frac{||x_m^t - z_{m,k}^t||^2}{{n_{m.k}^t}})
\end{split}
\end{equation*}

From Eq.\ref{loss1} and Eq.\ref{loss2}, we obtain that $\ell_{Z_{m}}^{t} - ||x_m^t||^2 = ||z_{m,k}^t||^2 - 2 x_m^t \cdot z_{m,k}^t$ and $-\ell_{U}^{t} \leq 2x_m^t\cdot u_{k*}$.
In addition, $-||u_k||^2 \leq ||z_{m,k}^t||^2 - 2z_{m,k}^t \cdot u_k$,
If $z_{m,k}^t$ matches $u_{k*}$, then $u_{k*}$ is the same as $u_{k}$, then
\begin{equation*}
\Delta_t \geq \frac{1}{n_{m,k}^t}(\ell_{Z_{m}}^{t} - \frac{\ell_{Z_{m}}^{t}}{n_{m,k}^t} - \ell_{U}^{t} - ||x_m^t||^2 - ||u_k||^2)
\end{equation*}

If $z_{m,k}^t$ does not match $u_{k*}$, then based on the assumption $||x_{m}^{t} - u_{k}|| - ||x_{m}^{t} - u_{k*}|| \leq \beta$,
\begin{equation*}
\Delta_t \geq \frac{1}{n_{m,k}^t}(\ell_{Z_{m}}^{t} - \frac{\ell_{Z_{m}}^{t}}{n_{m,k}^t} - \ell_{U}^{t} - \beta - ||x_m^t||^2 - ||u_k||^2)
\end{equation*}

Overall, we obtain our conclusion.
\end{proof}

Following the Theorem 2 in \cite{DBLP:journals/jmlr/CrammerDKSS06} and our Lemma 1, we derive our theorem.
\begin{theorem} \label{the2}
	Let $x_{m}^{1}, ..., x_{m}^{T}$ be a sequence of examples for the $m$th subspace where $x_{m}^{t} \in {\rm I\!R}^{\frac{D}{M}}$ and $||x_{m}^{t}||^2 \leq R^2$.
	Assume that there exists a matrix $U$ such that $\ell_{U}^{t}$ is minimized for all $t$, and $\max_{1\le k \le K} ||u_k||^2 \le F^2$.
	Then, the cumulative quantization error of our algorithm is bounded by
	\begin{equation}\label{bound}
	\begin{split}
	& \sum_{t=1}^{T} \ell_{Z_{m}}^{t} \textless 4(||Z_m^1 - U||_F^2 + \beta \sum_{t \in \mathcal{T}_{n}}\frac{1}{n_{m,k}^t}) \\
	& \indent \indent \indent + 4T(R^2 + F^2) + 8 + 4\sum_{t=1}^{T} \ell_{U}^{t}
	\end{split}	
	\end{equation}
\end{theorem}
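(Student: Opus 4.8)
The plan is to derive Theorem~\ref{the2} directly from Lemma~\ref{lemma1}, following the passage from Lemma~1 to Theorem~2 in \cite{DBLP:journals/jmlr/CrammerDKSS06}, but adapting it to the fact that our effective learning rate is $\frac{1}{n_{m,k}^t}$ rather than a loss-dependent step. Write $\eta_t = \frac{1}{n_{m,k}^t}$ for the coefficient attached to the sub-codeword selected at iteration $t$. The telescoping $\sum_t \Delta_t \le \|Z_m^1 - U\|_F^2$ has already been established inside the proof of Lemma~\ref{lemma1}, so I would start from the inequality that lemma provides rather than re-expanding $\Delta_t$.

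First I would move every term except the algorithm's own loss to the right-hand side, obtaining
\begin{equation*}
\sum_{t=1}^{T}\eta_t(1-\eta_t)\,\ell_{Z_m}^{t} \le \|Z_m^1 - U\|_F^2 + \beta\!\!\sum_{t\in\mathcal{T}_n}\!\!\eta_t + \sum_{t=1}^{T}\eta_t\big(\ell_U^{t} + \|x_m^t\|^2 + \|u_k\|^2\big).
\end{equation*}
On the right-hand side I would insert the two uniform bounds $\|x_m^t\|^2 \le R^2$ and $\|u_k\|^2 \le F^2$, and then use $n_{m,k}^t \ge 1$, i.e. $\eta_t \le 1$, to drop the coefficient on each competitor term: $\sum_t \eta_t \ell_U^t \le \sum_t \ell_U^t$ and $\sum_t \eta_t (R^2+F^2) \le T(R^2+F^2)$. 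This collapses the right-hand side into exactly the combination $\|Z_m^1 - U\|_F^2 + \beta\sum_{t\in\mathcal{T}_n}\eta_t + \sum_t \ell_U^t + T(R^2+F^2)$ that appears, up to the factor $4$, in the claimed bound.

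The delicate part is the left-hand side, where the loss carries the iteration-dependent weight $\eta_t(1-\eta_t)$ instead of a constant. To recover the unweighted $\sum_t \ell_{Z_m}^t$, I would use the algebraic identity $\eta_t(1-\eta_t) = \tfrac14 - \big(\tfrac12 - \eta_t\big)^2$, which rewrites the left-hand side as $\tfrac14\sum_t \ell_{Z_m}^t - \sum_t (\tfrac12 - \eta_t)^2 \ell_{Z_m}^t$. Multiplying through by $4$ and transferring the residual $(\tfrac12-\eta_t)^2$-weighted sum back to the right, the factor $4$ in front of every competitor term appears automatically, and the leftover additive piece is what the constant $8$ is meant to absorb. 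Controlling $\sum_t (\tfrac12 - \eta_t)^2 \ell_{Z_m}^t$ is where boundedness of the iterates enters: since the update $z^{t+1} = (1-\eta_t)z^t + \eta_t x_m^t$ is a convex combination with $\|x_m^t\| \le R$, an induction gives $\|z_{m,k}^t\| \le R$ (initialized codewords and un-updated codewords keep this norm under the budget constraints), hence $\ell_{Z_m}^t \le 4R^2$ uniformly, while $(\tfrac12-\eta_t)^2 \le \tfrac14$.

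I expect the main obstacle to be precisely this last conversion. Unlike the fixed step size in \cite{DBLP:journals/jmlr/CrammerDKSS06}, our weight $\eta_t(1-\eta_t)$ degenerates both when $n_{m,k}^t = 1$ (weight $0$) and as $n_{m,k}^t \to \infty$ (weight $\to 0$), so the weighted loss sum does \emph{not} dominate $\sum_t \ell_{Z_m}^t$ term by term; the identity above is the honest way to quantify the gap. The argument must therefore lean on the uniform bound $\ell_{Z_m}^t \le 4R^2$ and on the column matching of $Z_m^t$ and $U$ by minimum distance (so the $\beta$-penalty enters only on $\mathcal{T}_n$), in order to keep the residual folded into the constant together with the $T(R^2+F^2)$ term rather than into a quantity growing uncontrolled in $T$. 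Verifying that this residual is genuinely bounded by the stated constant $8$, given that $(\tfrac12-\eta_t)^2$ stays bounded away from $0$ for large $n_{m,k}^t$, is the step I would scrutinize most carefully.
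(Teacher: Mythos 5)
Your derivation coincides with the paper's own proof up to the final step: both start from Lemma~\ref{lemma1}, move the competitor terms to the right-hand side, and use $\|x_m^t\|^2\le R^2$, $\|u_k\|^2\le F^2$ and $\tfrac{1}{n_{m,k}^t}\le 1$ to reach
$\sum_{t}\tfrac{1}{n_{m,k}^t}(1-\tfrac{1}{n_{m,k}^t})\ell_{Z_m}^t \le \|Z_m^1-U\|_F^2+\beta\sum_{t\in\mathcal{T}_n}\tfrac{1}{n_{m,k}^t}+T(R^2+F^2)+\sum_t\ell_U^t$
(the paper carries an extra additive $2$ at this stage, which becomes the $8$ after multiplying by $4$). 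The divergence is in how the weighted loss on the left is converted into the unweighted $\sum_t\ell_{Z_m}^t$. The paper simply invokes $\tfrac{1}{n_{m,k}^t}(1-\tfrac{1}{n_{m,k}^t})\le\tfrac14$ and declares the bound proved; that inequality points the wrong way for this purpose --- to lower-bound the left side by $\tfrac14\sum_t\ell_{Z_m}^t$ one would need the weights to be at least $\tfrac14$, whereas they equal $0$ when $n_{m,k}^t=1$ and tend to $0$ as $n_{m,k}^t$ grows.

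Your repair via the identity $\eta_t(1-\eta_t)=\tfrac14-(\tfrac12-\eta_t)^2$ is the honest accounting, but, as you yourself suspect, the residual cannot be absorbed into the constant $8$. Once a cluster has accumulated many points, $(\tfrac12-\eta_t)^2$ is essentially $\tfrac14$, so $\sum_t(\tfrac12-\eta_t)^2\ell_{Z_m}^t$ is essentially $\tfrac14\sum_t\ell_{Z_m}^t$ over those iterations and cancels the very quantity you are trying to bound; even the crude uniform estimate $\ell_{Z_m}^t\le 4R^2$ only yields a residual of order $TR^2$, which after multiplying by $4$ inflates the coefficient of the $TR^2$ term rather than contributing a bounded additive constant. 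So your proposal does not close, and the gap you flag is genuine --- it is equally present in the paper's own one-line final step. A statement that actually follows from Lemma~\ref{lemma1} would either have to retain the weights $\tfrac{1}{n_{m,k}^t}(1-\tfrac{1}{n_{m,k}^t})$ on the left-hand side, or add a hypothesis (for instance a uniform upper bound on the counters $n_{m,k}^t$ keeping $\eta_t$ bounded away from $0$ and $1$) under which the weights are bounded below by a positive constant.
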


\begin{proof}
Since $1 \leq n_t^{m,k} \leq t$, then $\frac{1}{n^t_{m,k}} \leq 1$. Using the facts that $||x_{m}^{t}||^2 \leq R^2$ and $\max_{1\le k \le K} ||u_k||^2 \le F^2$, Lemma 1 implies that, \begin{equation*}
\begin{split}
\sum_{t=1}^{T}\frac{1}{n_{m,k}^t} (1 - \frac{1}{n_{m,k}^t})\ell_{Z_{m}}^{t} - \sum_{t=1}^{T}\ell_{U}^{t} \leq ||Z_{m}^1 - U||_F^2 \\
+ \beta \sum_{t \in \mathcal{T}_{n}}\frac{1}{n_{m,k}^t} + T(R^2 + F^2) + 2
\end{split}
\end{equation*}

Since $\frac{1}{n_{m,k}^t} (1 - \frac{1}{n_{m,k}^t}) \leq \frac{1}{4}$, we get our relative loss bound.
\end{proof}

\begin{remark}
If there exists a concatenated sub-codewords $U$ that is produced by the best fixed batch algorithm in hindsight for any $t$, then $\ell_{U}^{t}$, representing the quantization error of the batch method, can be minimal.
The first term in the inequality (\ref{bound}) is the difference between the initialized codewords  $Z_m^1$ of our online model and the best batch algorithm solution $U$.
The second term represents the summation of the reciprocal of the counter $n_{m,k}$ that belongs to the updated cluster $C_{m,k}^t$ in the iterations when $z_{m,k}^t$ does not match $u_{k*}$, \emph{i.e.}, the streaming data belongs to two different indices of the clusters from our online model and the best batch model.
A tighter bound can be achieved if the initialized sub-codewords is close to the optimal sub-codewords and $z_{m,k}^t$ matches $u_k$ for each iteration.
Since all the terms except for the third one in the inequality (\ref{bound}) are constants, and $(R^2 + F^2)$ is also constant, the cumulative loss bound scales linearly with the number of iterations $T$.
Thus the performance of online PQ model is guaranteed for unseen data.
\end{remark}
\begin{remark}
The proved online loss bound above can be used to obtain the generalization error bound \cite{DBLP:conf/nips/DekelS05}, and the generalization error bound will show that the codewords of our online method are similar to the ones of the batch method.
Therefore, the quantization error of our online model can converge to the error of the batch algorithm. Its corresponding experimental result is shown in Section~\ref{convergence-section}.
\end{remark}

\section{Online Product Quantization over a Sliding Window}
From the cumulative loss bound proved in Theorem~\ref{the2}, we can see that it is important to reduce the number of iterations when $z_{m,k}^t$ does match $u_k$ in order to achieve a tighter bound.
One reason for the mismatched case is the tradeoff between update efficiency and quantization error.
The old data and the new data that belong to the same codeword in the online model may be generated from different data distributions.
Thus it gives us the insight to consider the sliding window approach of handling the streaming data, where the updated codewords will be emphasized on the real-time data and the expired data will be removed.
Using this approach allows us to reduce the effect of the codewords mismatching iterations.

In the context of data streams, data distribution evolves over time.
Some applications may aim at capturing the real-time behaviour of the data streams and thus emphasize the most recent data points in the stream.
As the data stream evolves, some data will be expired based on their arrival times.
Leveraging a sliding window in the model allows us to reflect the real-time and evolving behaviour of the data streams in order to facilitate the most recent data maintenance.
Data insertion is performed to the online indexing model using the recent data in the sliding window, and data deletion removes the contributions of the codebook made by the expired data.
In this section, we present the online PQ model over a time-based sliding window with both data insertion and deletion supported.

\begin{figure}
	\centering
	\includegraphics[scale=0.45]{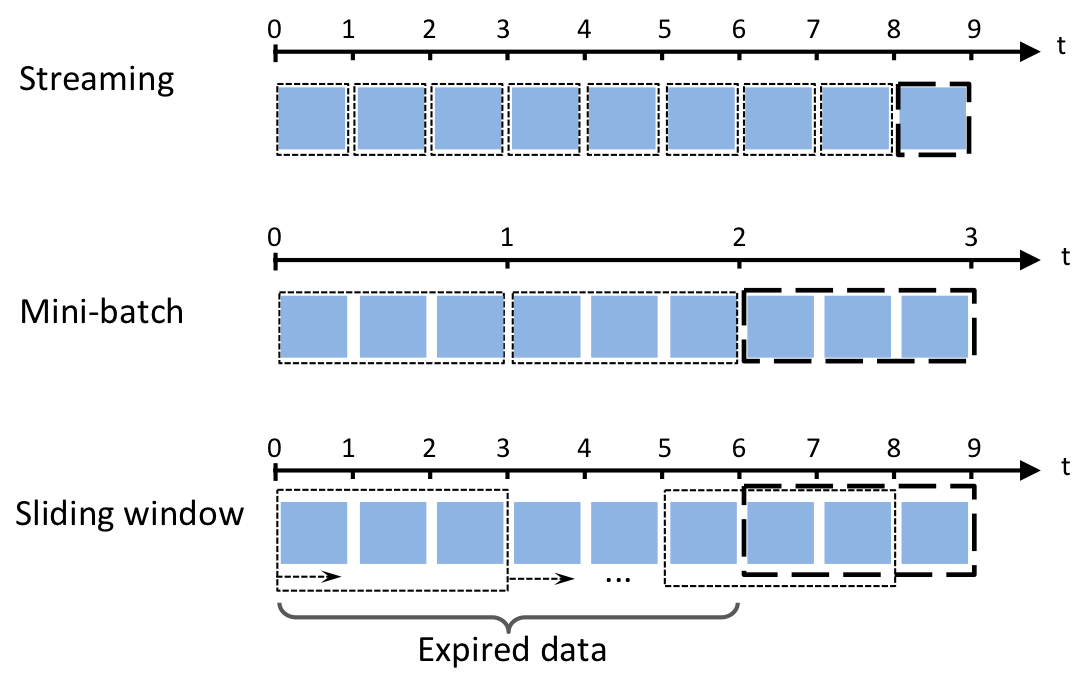}
	\caption{\label{streaming_comparison} Approaches of Handling Data Streams. Streaming: data streams one at a time. Mini-batch: a mini-batch of data with size 3 is processed by the model at each iteration t. Sliding window: a moving window with size 3 applied on continuously changing data.}
\end{figure}

\subsection{Online Product Quantization with Data Insertion and Deletion}
Assume we are given a sliding window of size $L$.
For window at the $t$th iteration, it consists of a stream of data $x^{t,1}, x^{t,2}, ...,x^{t,L}$, where $x^{t,L}$ is the newly inserted data point to the window at the current iteration, and $x^{t-1,1}$ is just expired and removed from the window.
Therefore, the objective function of the Online PQ over a sliding window at the $t$th iteration is stated as the following:
\begin{equation}\label{5}
\begin{split}
\min_{\substack{\mathcal{C}_{1,1}^t,...,\mathcal{C}_{m,k}^t,...,\mathcal{C}_{M,K}^t\\ z_{1,1}^t,...,z_{m,k}^t,...,z_{M,K}^t}} \sum_{m=1}^M\sum_{i=1}^{L}\|x_m^{t,i} - z_{m,k}^t \|^2 \\
\end{split}
\end{equation}
where  $x_m^{t,i}$ is the $i$th streaming data of the window in the $m$th subspace at the $t$th iteration and its nearest sub-codeword is $z_{m,k}^t$.
To emphasize the real-time data in the stream, we want our model only affected by the data in the sliding window at the current iteration.
Therefore, each time a new data streaming into the system, it moves to the sliding window.
We first update the codebook by adding the contributions made by the new data.
Correspondingly, the oldest data in the sliding window will be removed.
We tackle the issue of data expiry by deleting the contribution to the codebook made by the data point that is just removed from the window.
The solution of online PQ over a sliding window to handle insertion and deletion to the codebook is shown in Algorithm \ref{alg:2}:
\begin{algorithm}
	\caption{\label{alg:2} Online PQ over a Sliding Window}
	\begin{algorithmic}[1]
		\STATE initialize PQ with the $M*K$ sub-codewords $z_{1,1}^0,...,z_{m,k}^0,...,z_{M,K}^0$ \\
		\STATE initialize $C_{1,1}^0,...,C_{m,k}^0,...,C_{M,K}^0$ to be the cluster sets that contain the index of the initial data that belong to the cluster
		\STATE create counters $n_{1,1},...,n_{m,k},...,n_{M,K}$ for cluster and initialize each $n_{m,k}$ to be the number of initial data points assigned to the corresponding $C_{m,k}^0$ \\
		\STATE initialize the size of the sliding window $L$
		\FOR{$t=1, 2, 3, ...$}
		\STATE remove $x^{t-1,1}$ from the sliding window
		\STATE $x^{t-1,i} (2 \le i \le L)$ from the $(t-1)$th sliding window is now represented as $x^{t,i-1}$ at the current iteration
		\STATE insert the new data $x^{t,L}$ to the sliding window at position $L$\\
		\STATE partition the newly inserted data $x^{t,L}$ and the deleted one $x^{t-1,1}$ into $M$ subspaces $[x_1^{t,L},...,x_M^{t,L}]$ and $[x_1^{t-1,1},...,x_M^{t-1,1}]$, respectively\\
		\textbf{Insertion:}
		\FOR{$m=1,...,M$}
		\STATE determine and assign the nearest sub-codeword $z_{m,k}^t$ for each sub-vector $x_m^{t,L}$\\
		\STATE update the cluster set $C_{m,k}^t \leftarrow C_{m,k}^{t-1} \cup \{ind\}$ where $ind$ is the the index number of $x^{t,L}$
		\STATE update the counter for $C_{m,k}^t$: $n_{m,k} \leftarrow n_{m,k} + 1$ \\
		\STATE update the sub-codeword: $z_{m,k}^{t+1} \leftarrow z_{m,k}^t + \frac{1}{n_{m,k}}(x_m^{t,L} - z_{m,k}^t)$ \\
		\ENDFOR \\
		\textbf{Deletion:}
		\FOR{$m=1,...,M$}
		\STATE determine and assign the nearest sub-codeword $z_{m,k}^t$ for each sub-vector $x_m^{t-1,1}$\\		
		\STATE update the cluster set $C_{m,k}^t \leftarrow C_{m,k}^{t-1} \setminus \{ind\}$ where $ind$ is the the index number of $x^{t-1,1}$
		\STATE update the counter for $C_{m,k}^t$: $n_{m,k} \leftarrow n_{m,k} - 1$ \\
		\STATE update the sub-codeword: $z_{m,k}^{t+1} \leftarrow z_{m,k}^t - \frac{1}{n_{m,k}}(x_m^{t-1,1} - z_{m,k}^t)$ \\
		\ENDFOR
		\ENDFOR
	\end{algorithmic}
\end{algorithm}

\subsection{Connections among Online PQ Algorithms}
Figure \ref{streaming_comparison} compares three different approaches of handling data streams for our online PQ model.
Streaming Online PQ processes new data one at a time. Mini-batch Online PQ processes a mini-batch of data at a time. If the size of the mini-batch is set to 1, Streaming Online PQ is the same as Mini-batch Online PQ. Online PQ over a Sliding Window involves the deletion of the expired data that is removed from the moving window at the current iteration. If the size of the sliding window is set to be infinite, no data deletion will be performed and it is the same as Streaming or Mini-batch Online PQ depending on the size of the new data to be updated at each iteration.

\section{Experiments}
We conduct a series of experiments on several real-world datasets to evaluate the efficiency and effectiveness of our model.
In this section, we first introduce the datasets used in the experiments.
We then show the convergence of our online PQ model to the batch PQ method in terms of the quantization error, and then compare the online version and the mini-batch version of our online PQ model.
After that, we analyze the impact of the parameters $\alpha$ and $\lambda$ in update constraints.
Finally, we compare our proposed model with existing related hashing methods for different applications.

\subsection{Datasets and evaluation criterion}
There are one text dataset, four image datasets and two video datasets employed to evaluate the proposed method.
\textbf{20 Newsgroups Data} (News20) \cite{DBLP:conf/icml/Lang95} consists of chronologically ordered 18,845 newsgroup messages.
\textbf{Caltech-101} \cite{DBLP:conf/cvpr/LiFP04} consists of 9144 images and each image belongs to one of the 101 categories.
\textbf{Half dome} \cite{DBLP:conf/cvpr/WinderB07} includes 107,732 image patches obtained from Photo Tourism reconstructions from Half Dome (Yosemite).
\textbf{Sun397} \cite{DBLP:conf/cvpr/XiaoHEOT10} contains around 108K images in 397 scenes.
\textbf{ImageNet} \cite{DBLP:conf/cvpr/DengDSLL009} has over 1.2 million images with a total of 1000 classes.
\textbf{YoutubeFaces}\footnote{https://www.cs.tau.ac.il/~wolf/ytfaces/} contains 3,425 videos of 1,595 different people, with a total of 621,126 frames.
\textbf{UQ\_VIDEO}\footnote{http://staff.itee.uq.edu.au/shenht/UQ\_VIDEO/} consists of 169,952 videos with 3,305,525 frames in total.
We use 300-D doc2vec features to represent each news article in News20 and 512-D GIST features to represent each image in the four image datasets. We use two different features, 480-D Center-Symmetric LBP (CSLBP) and 560-D Four-Patch LBP (FPLBP) to represent each frame in YoutubeFaces.
162-D HSV feature is used in UQ\_VIDEO dataset.
Table \ref{tbl:dataset} shows detailed statistical information about datasets used in evaluation.

We measure the performance of our proposed model by the model update time and the search quality measurement recall@R adopted in \cite{DBLP:journals/pami/JegouDS11}.
We use recall@20 which indicates that fraction of the query for which the nearest neighbor is in the top 20 retrieved images by the model.

\begin{table}[]
	\centering
	\caption{Detailed datasets information}
	\label{tbl:dataset}
	\begin{tabular}{|c|c|c|l|c|}
		\hline
		\textbf{Dataset} & \textbf{Class no.} & \multicolumn{2}{c|}{\textbf{Size}} & \textbf{Feature} \\ \hline
		News20 & 20 & \multicolumn{2}{c|}{18,845} & Doc2vec (300) \\ \hline
		Caltech-101 & 101 & \multicolumn{2}{c|}{9,144} & GIST (512) \\ \hline
		Half dome & 28,086 & \multicolumn{2}{c|}{107,732} & GIST (512) \\ \hline
		Sun397 & 397 & \multicolumn{2}{c|}{108,753} & GIST (512) \\ \hline	
		ImageNet & 1000 & \multicolumn{2}{c|}{1,281,167} & GIST (512) \\ \hline
		\textbf{Video Dataset} & \textbf{Video} & \multicolumn{2}{c|}{\textbf{Frame}} & \textbf{Feature} \\ \hline
		YoutubeFaces (CSLBP) & 3,425 & \multicolumn{2}{c|}{621,126} & CSLBP (480) \\ \hline
		YoutubeFaces (FPLBP) & 3,425 & \multicolumn{2}{c|}{621,126} & FPLBP (560) \\ \hline
		UQ\_VIDEO & 169,952 & \multicolumn{2}{c|}{3,304,554} & HSV (162) \\ \hline
	\end{tabular}
\end{table}

\subsection{Convergence}\label{convergence-section}

\begin{figure}[!htbp]
	\centering
	\includegraphics[scale=0.5]{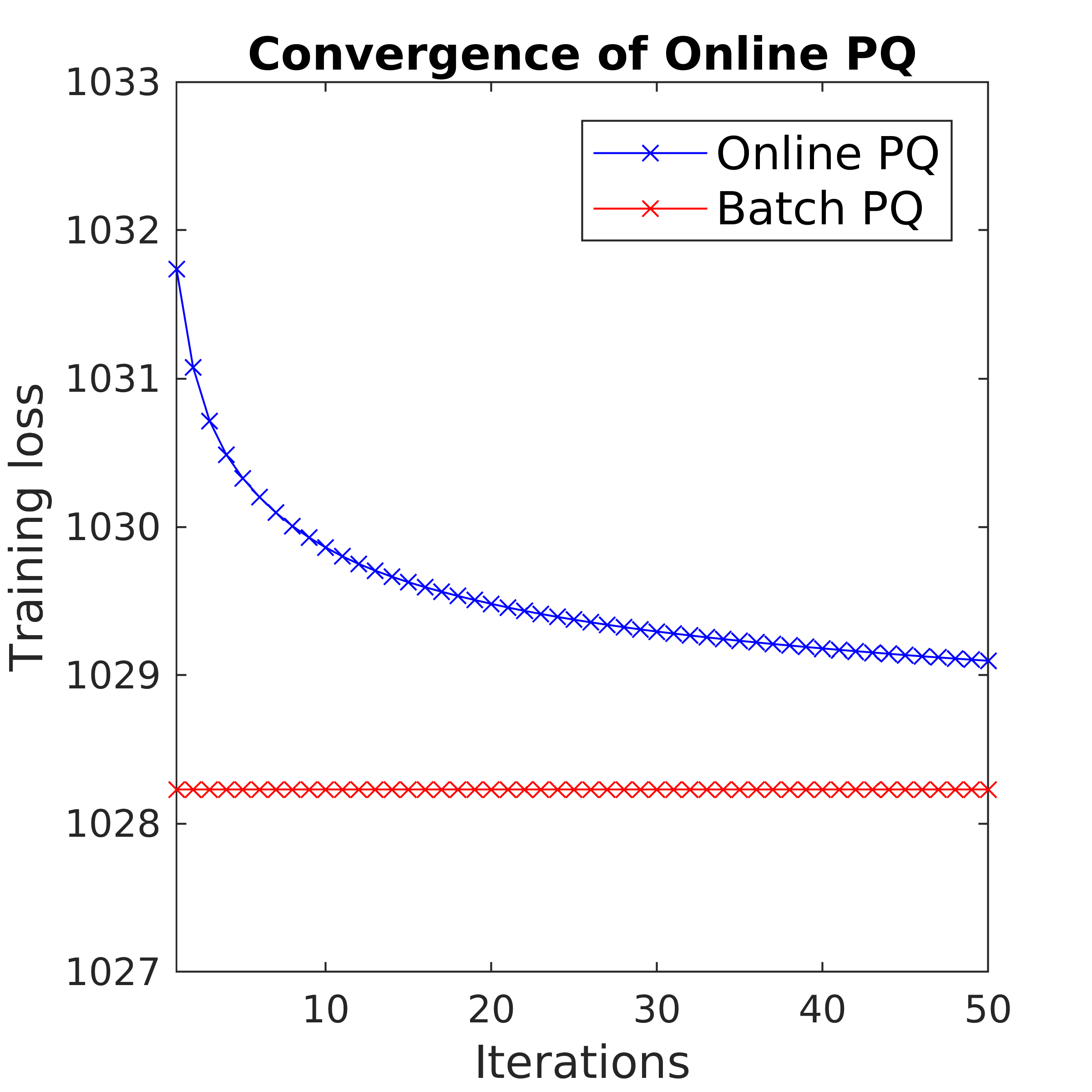}
	\caption{\label{convergence} Convergence of online PQ using ImageNet dataset. Effective iterations are shown on the x-axis.}
\end{figure}

The data instances in the entire dataset are input sequentially to our online PQ model. We run our algorithm for 50 effective iterations\footnote{Running one pass using the entire dataset is an effective iteration}. To show the convergence of our online model, we compare its training loss at each iteration with the one of the batch PQ method. The training loss is computed as the averaged quantization error for all data points in one pass. Figure \ref{convergence} shows that the training loss of our online model converges to the one of the batch model, implying that codewords learned from the online PQ model are similar to the ones learned from the batch PQ approach. Therefore, the performance of the online PQ model converges to the batch PQ performance.

\begin{figure}[!htbp]
	\centering
	\includegraphics[scale=0.41]{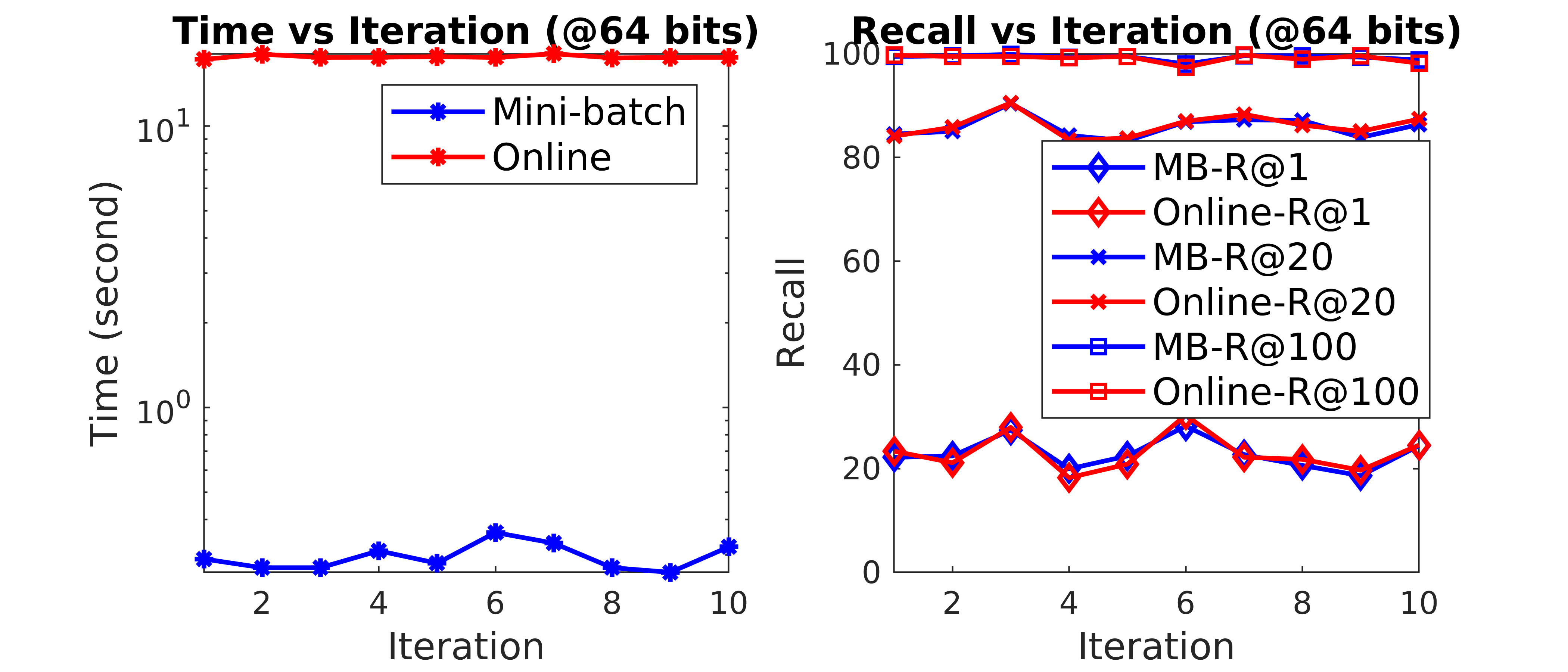}
	\caption{\label{online_vs_minibatch} The left figure shows the update time for each iteration of update. The time of the online version for each iteration sums up the update time of the streaming data corresponding to the ones in the mini-batch. The right figure shows the recall@1, 20 and 100 for each iteration. }
\end{figure}

\subsection{Online vs mini-batch}
In real-world applications, streaming data might be processed one at a time or in small batches.
For example, real-time topic detection in streaming media can be applied on texts, images or videos. If the streaming data is Twitter post, it might be processed one at a time. If the media is video, then the streaming data can be processed in mini-batches of video frames.
Our model can process streaming data either one at a time or in mini-batches.
We compare these two versions of our model on Caltech-101 dataset.
For our model, we use $M = 8$ and $K = 256$.
Since the memory cost of storing each codeword is $M\ceil{\log_{2} K}$ bits \cite{DBLP:journals/pami/JegouDS11}, where $\ceil{.}$ is the ceiling function that maps the value to its nearest integer up, then the number of bits used in our model is 64.
We split the data into twelve groups, where one of the groups is used for learning the codebook, one of the groups is used as the query and each one of the rest of the ten groups is used to update the original codebook, so that we have the performance for ten iterations.
Figure \ref{online_vs_minibatch} shows the comparison of the online version and the mini-batch (MB) version in update time and recall@R measurements.
It indicates that the mini-batch version takes much less update time than the online version but they have similar search quality.
Therefore, we adopt mini-batch version of our model in the rest of the experiments.

\begin{figure}[!htbp]
	\centering
	\includegraphics[scale=0.565]{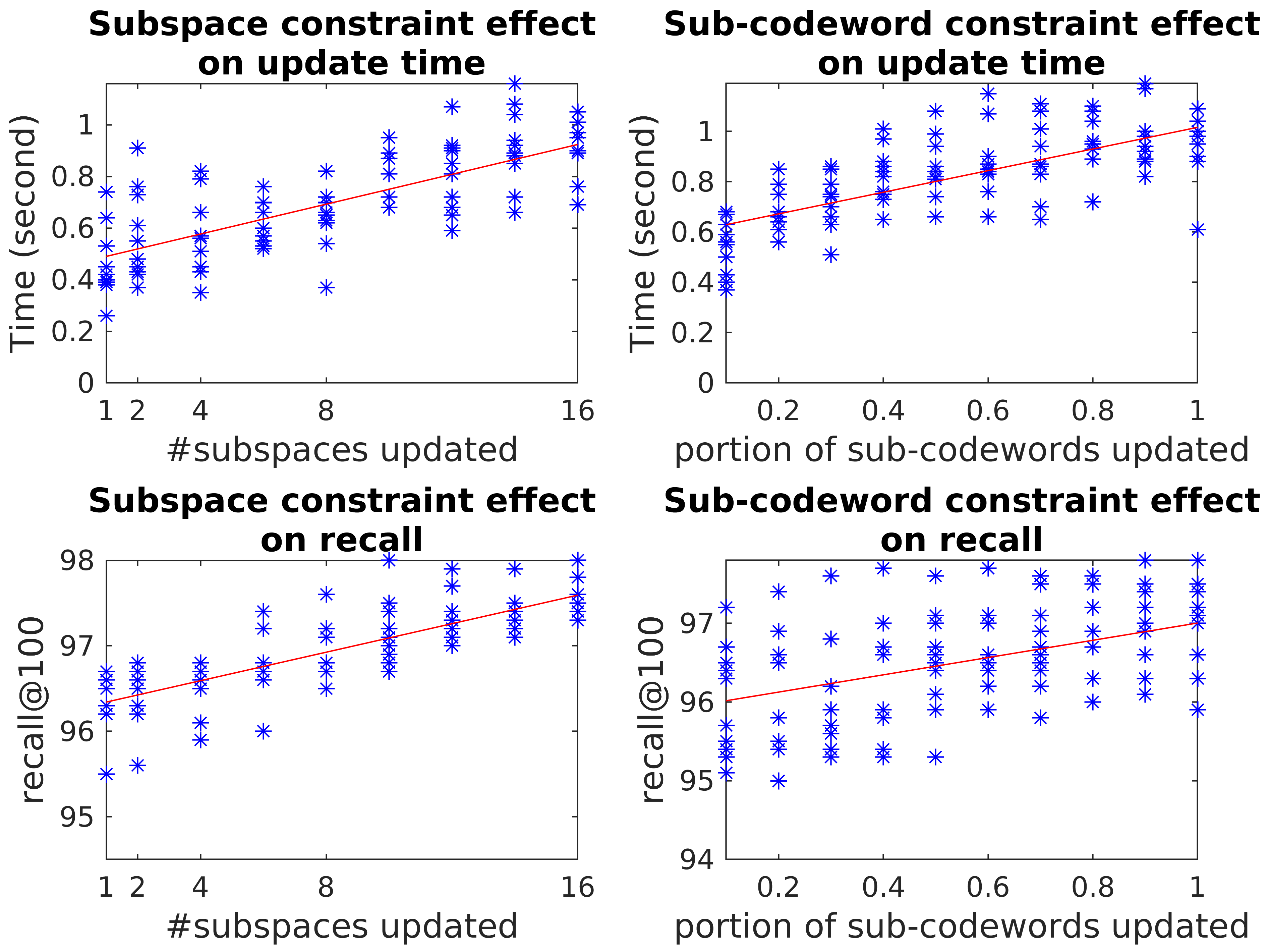}
	\caption{\label{tradeoff} Trade-offs between update time cost and the search accuracy. The first column shows the impact of the subspace update constraint. The second column shows the impact of the sub-codeword update constraint. The red line is the reference line for the scatter plot.}
\end{figure}

\subsection{Update time complexity vs search accuracy}

There are two budget constraints we proposed for the codebook update to further reduce the time cost: number of subspaces and number of sub-codewords to be updated.
In this experiment, we evaluate the impact of these two constraints and the trade-offs between update time cost and the search quality using a synthetics dataset. 
We randomly sampled 12000 data points with 128-D features from multivariate normal distribution.
We use recall@50 as the performance measurement which indicates that fraction of the query for which the nearest neighbor is in the top 50 retrieved data points by the model.
We set $M = 16$ and $K = 256$, and vary the number of updated subspaces from 1 to 16 and the portion of updated sub-codewords from 0.1 to 1 respectively.
We split the dataset evenly into twelve groups and set one of the groups as the learning set to learn the codebook and another one as the query set, and use each of the rest of ten groups to update the learned codebook and record the update time cost and the search accuracy for 10 times while varying the update constraints.
From Figure \ref{tradeoff}, we observe that the search quality and update time cost strongly depend on these two update constraints. As we increase the number of subspaces or the portion of sub-codewords to be updated, the update time cost is increasing, along with the search accuracy.
Therefore, higher update time cost is required for better search accuracy.

\subsection{Baseline methods}
To verify that our online PQ model is time-efficient in update and effective in nearest neighbor search, we make comparison with several related online indexing and batch learning methods.
We evaluate the performance in both search accuracy and update time cost.
We select SSBC \cite{DBLP:journals/corr/GhashamiA15} and OSH \cite{DBLP:conf/cvpr/LengWC0L15} as two of the baseline methods, as they are the only unsupervised online indexing methods to the best of our knowledge.
Specifically, SSBC is only applied on two of our smallest datasets, News20 and Caltech-101, as it takes a significant amount of time to train.
In addition, two supervised online indexing methods OH \cite{DBLP:conf/ijcai/HuangYZ13,huang2017online} and AdaptHash \cite{DBLP:conf/iccv/CakirS15} are selected, with the top 5 percentile nearest neighbors in Euclidean space as the ground truth following the setting as in \cite{DBLP:conf/cvpr/LengWC0L15}.
Further, five batch learning indexing methods are selected.
They are all unsupervised data-dependent methods: PQ \cite{DBLP:journals/pami/JegouDS11}, spectral hashing (SH) \cite{DBLP:conf/nips/WeissTF08}, IsoH \cite{DBLP:conf/nips/KongL12}, ITQ \cite{DBLP:conf/cvpr/GongL11} and KMH \cite{DBLP:conf/cvpr/HeWS13}.
Each of these methods is compared with online PQ in two ways.
The first way uses all the data points seen so far to retrain the model (batch) at each iteration. The second way uses the model trained from the initial iteration to assign the codeword to the streaming data in all the rest of the iterations (no update).
We do not apply the batch learning methods on our large-scale datasets, UQ\_VIDEO and ImageNet, as it takes too much retraining time at each iteration.

\subsection{Object tracking and retrieval in a dynamic database}

In many real-world applications, data is continuously generated everyday and the database needs to get updated dynamically by the newly available data. 
For example, news articles can be posted any time and it is important to enhance user experience in news topic tracking and related news retrieval.
New images with new animal species may be inserted to the large scale image database.
Index update needs to be supported to allow users to retrieve images with expected animal in a dynamically changing database.
A live video or a surveillance video may generate several frame per second, which makes the real-time object tracking or face recognition a crucial task to solve.
In this experiment, we evaluate our model on how it handles dynamic updates in both time efficiency and search accuracy in three different types of data: text, image and video.

\subsubsection{Setting}\label{setting}
\begin{table*}[!htbp]
	\centering
	\caption{\small Number of iterations and average mini-batch size for each dataset}
	\label{tbl:streaming-stats}
	\begin{tabular}{|c|c|c|c|c|c|c|c|}
		\hline
		\textbf{Dataset} & \textbf{News20} & \textbf{Caltech-101} & \textbf{Half dome} & \textbf{Sun397} & \textbf{ImageNet} & \textbf{YoutubeFaces} & \textbf{UQ\_VIDEO} \\ \hline
		\textbf{Iteration No.} & 20 & 12 & 12 & 21 & 101 & 67 & 25 \\ \hline
		\textbf{Avg mini-batch size} & 942.25 & 762 & 8,977.7 & 5,178.7  & 12,684.8 & 8,628.4 & 132,182 \\ \hline
	\end{tabular}
\end{table*}

\begin{figure}
	\centering
	\begin{subfigure}[b]{0.5\textwidth}
		\centering
		\includegraphics[width=1\textwidth]{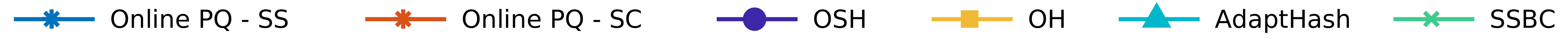}    
	\end{subfigure}
	\begin{subfigure}[b]{0.48\textwidth}
		\centering
		\centerline{\includegraphics[width=9cm]{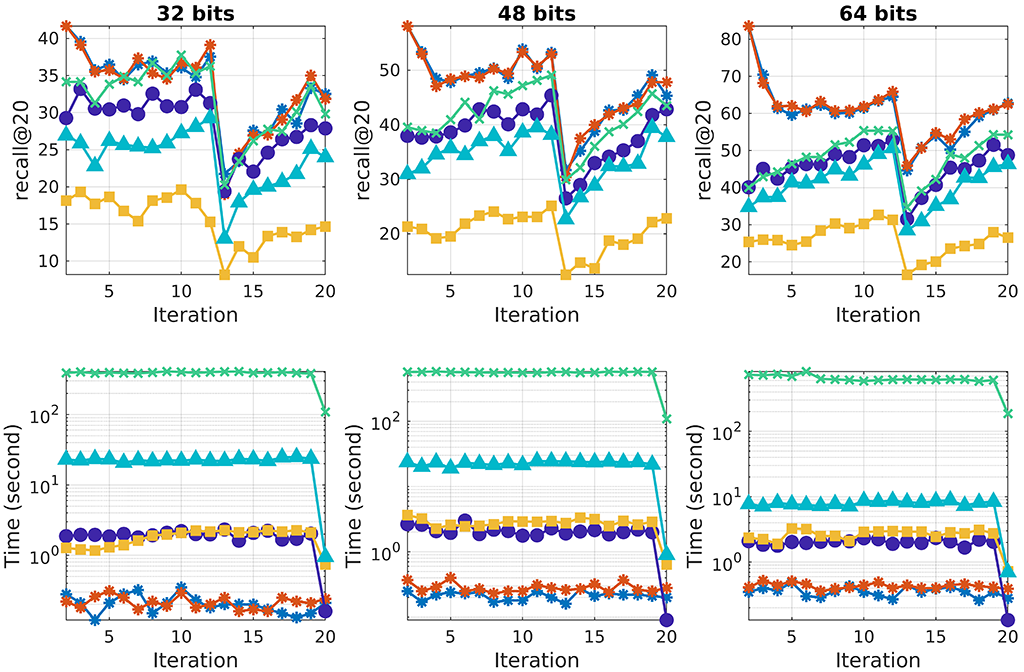}}    
	\end{subfigure}
	
	\caption[]
	{\small Recall@20 performance (1st row) and Update time cost (2nd row) comparison against online hashing methods at each iteration for News20 using different number of bits. 1st column: 32 bits. 2nd column: 48 bits. 3rd column: 64 bits. Time cost is in log scale.} 
	\label{news20_diff_bits}
\end{figure}

For each dataset, we split data into several mini-batches, and stream each mini-batch into the models in turn.
Since the news articles in News20 dataset are ordered by date, we stream the data to the models in its chronological order.
Image datasets consist of different classes.
To simulate data evolution over time, we stream images by classes and each pair of two consecutive mini-batches have half of the images from the same class.
In the video datasets, videos are ordered by their labels, such as the videos belonging to the same person are grouped together and then sets of videos stream to the models in turn.
For text and image datasets, we have dynamic query set in each iteration.
After the initial mini-batch of data used to initialize the model, each time a mini-batch of streaming data comes, we use each of them as the query to search for the nearest neighbors from the existing database, and then update the current model by accommodating these streaming data.
For YoutubeFace video dataset, we have a randomly sampled fixed set of queries consisting 226 videos with 43,020 frames in total.
UQ\_VIDEO dataset provides a fixed set of 24 videos with 902 frames in total.
Table \ref{tbl:streaming-stats} shows detailed information about the data streams.

In this experiment, we compare our method with subspace update constraint (Online PQ - SS) and sub-codeword update constraint (Online PQ - SC) to several batch mode methods and online methods for the task of continuous update in a dynamic setting.
In our model, we set $M = 8$ and $K = 256$. then the number of bits used for vector encoding is 64. 
We constraint the number of the updated subspaces $\alpha$ to be 4 and the portion of the updated sub-codewords $\lambda$ to be 0.5 respectively.
The first batch is used for codebook initialization and the rest of the batches are used to update the existing codebook one at a time.
All the key parameters in the baseline methods are set to the ones recommended in the corresponding papers.
All the  methods compared are implemented in Matlab provided by the authors and all experiments are conducted on a workstation with a 3.10GHZ Intel CPU and 120GB main memory running on a Linux platform.
We show that how our method performs using different number of bits (32, 48 and 64 bits) compared with online baseline methods for News20 dataset. For all other experiments, we use 64 bits for vector encoding in all of the comparison models for fair comparisons.

\begin{figure*}
	\centering
	\begin{subfigure}[b]{1\textwidth}
		\centering
		\includegraphics[width=0.8\textwidth]{online_legend.png}    
	\end{subfigure}
	\begin{subfigure}[b]{0.23\textwidth}
		\centering
		\includegraphics[width=4.2cm,height=3.6cm]{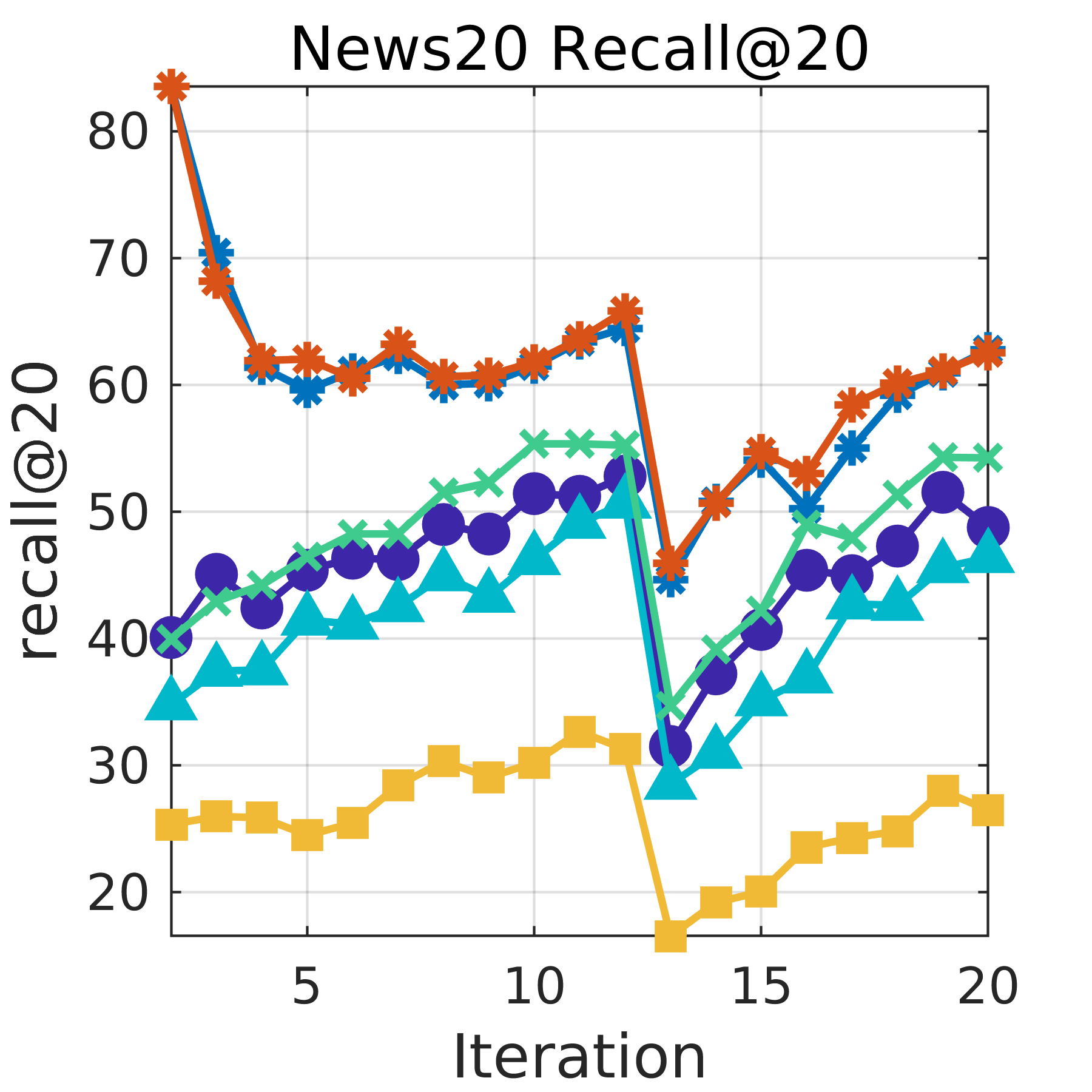}    
	\end{subfigure}
	\begin{subfigure}[b]{0.23\textwidth}  
		\centering 
		\includegraphics[width=4.2cm,height=3.6cm]{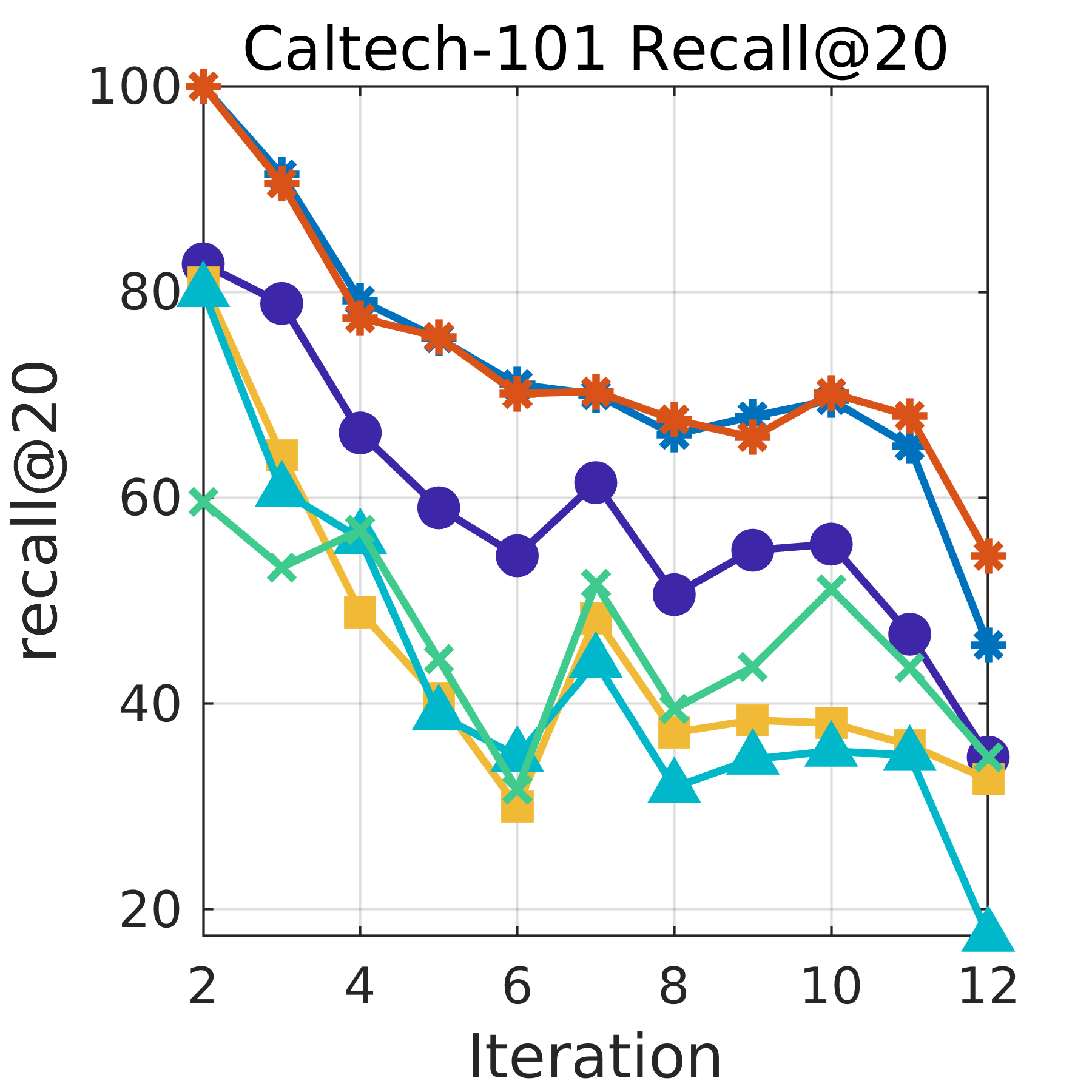}
	\end{subfigure}
	\begin{subfigure}[b]{0.23\textwidth}   
		\centering 
		\includegraphics[width=4.2cm,height=3.6cm]{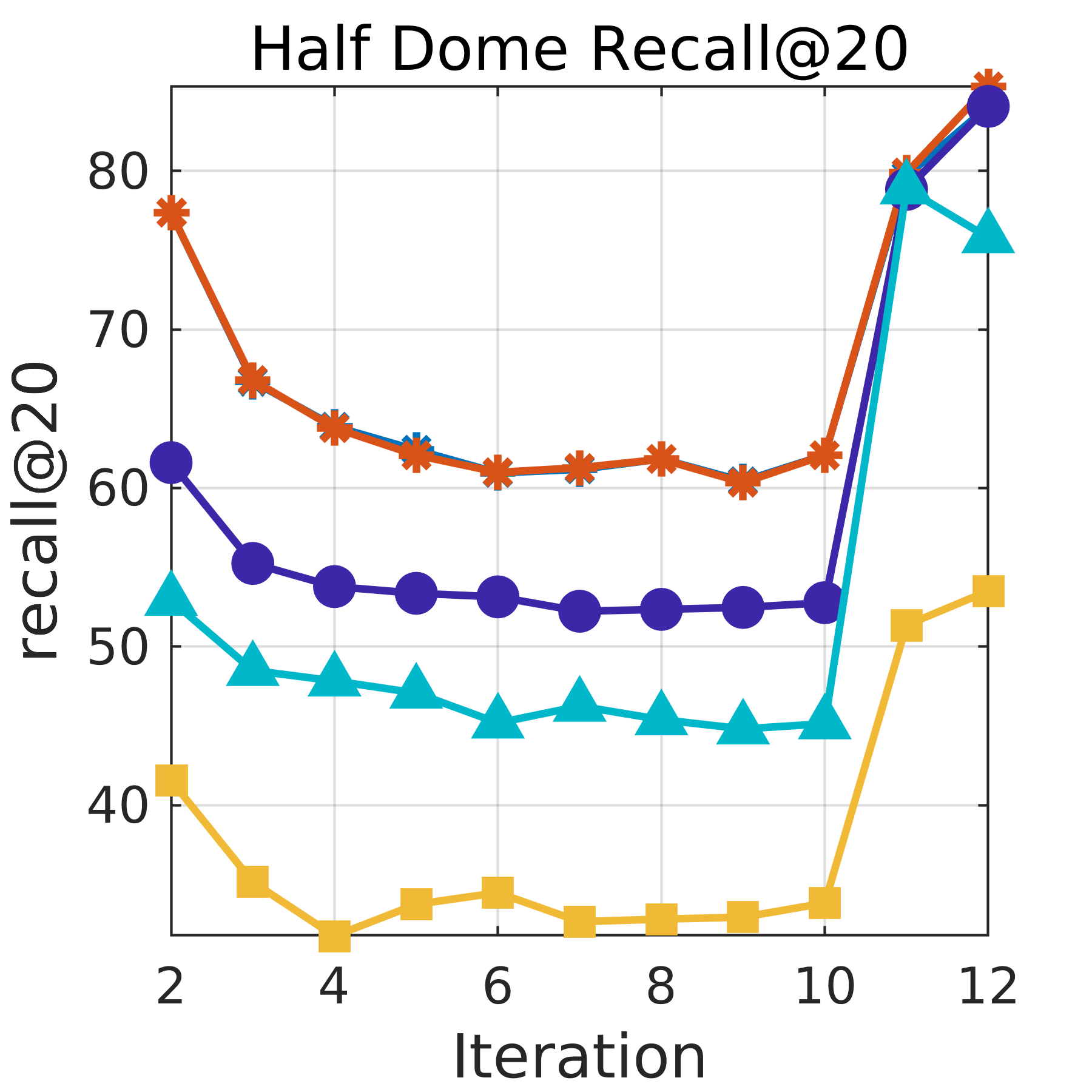}
	\end{subfigure}
	\begin{subfigure}[b]{0.23\textwidth}   
		\centering 
		\includegraphics[width=4.2cm,height=3.6cm]{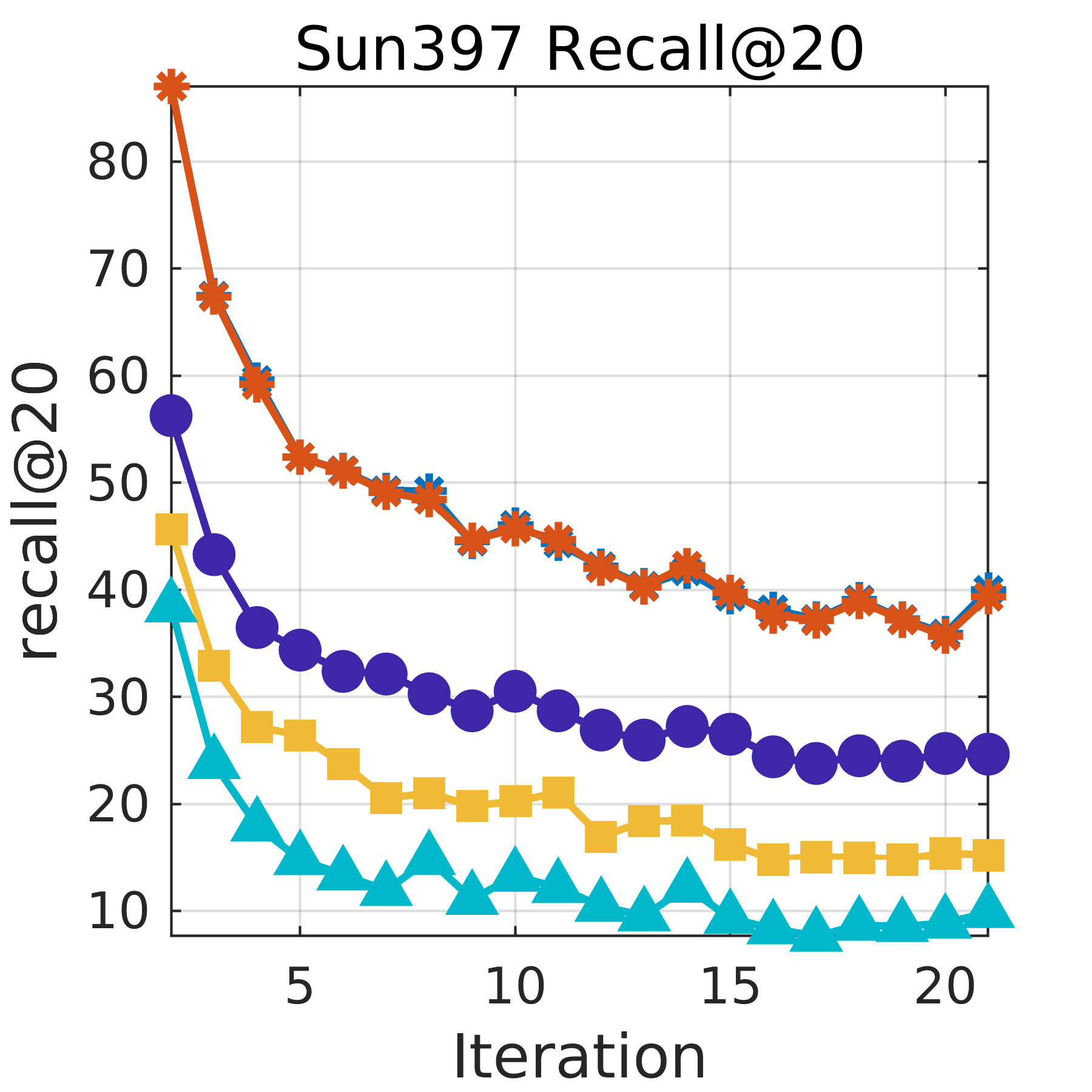}
	\end{subfigure}
	\begin{subfigure}[b]{0.23\textwidth}
		\centering
		\includegraphics[width=4.2cm,height=3.6cm]{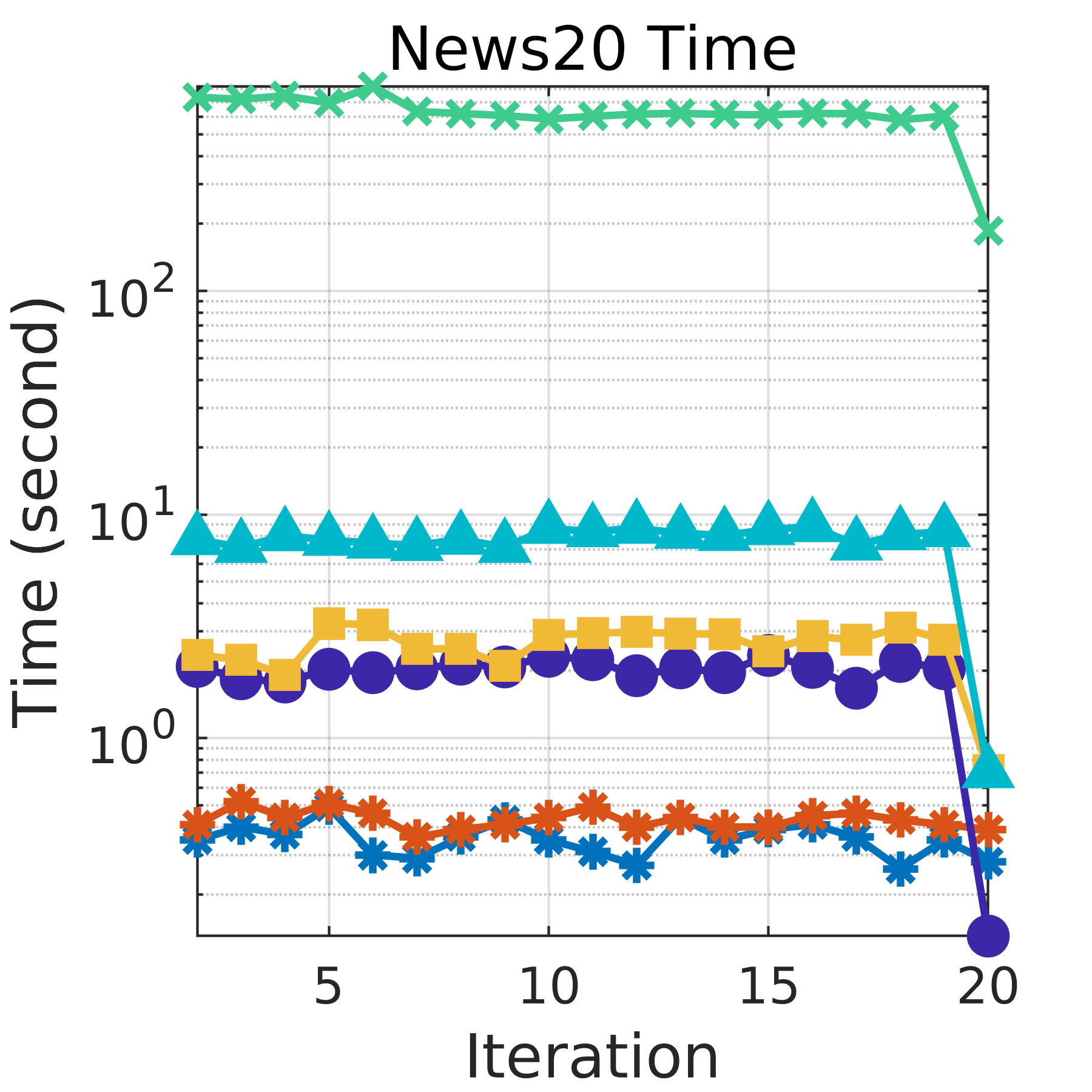}    
	\end{subfigure}
	\begin{subfigure}[b]{0.23\textwidth}  
		\centering 
		\includegraphics[width=4.2cm,height=3.6cm]{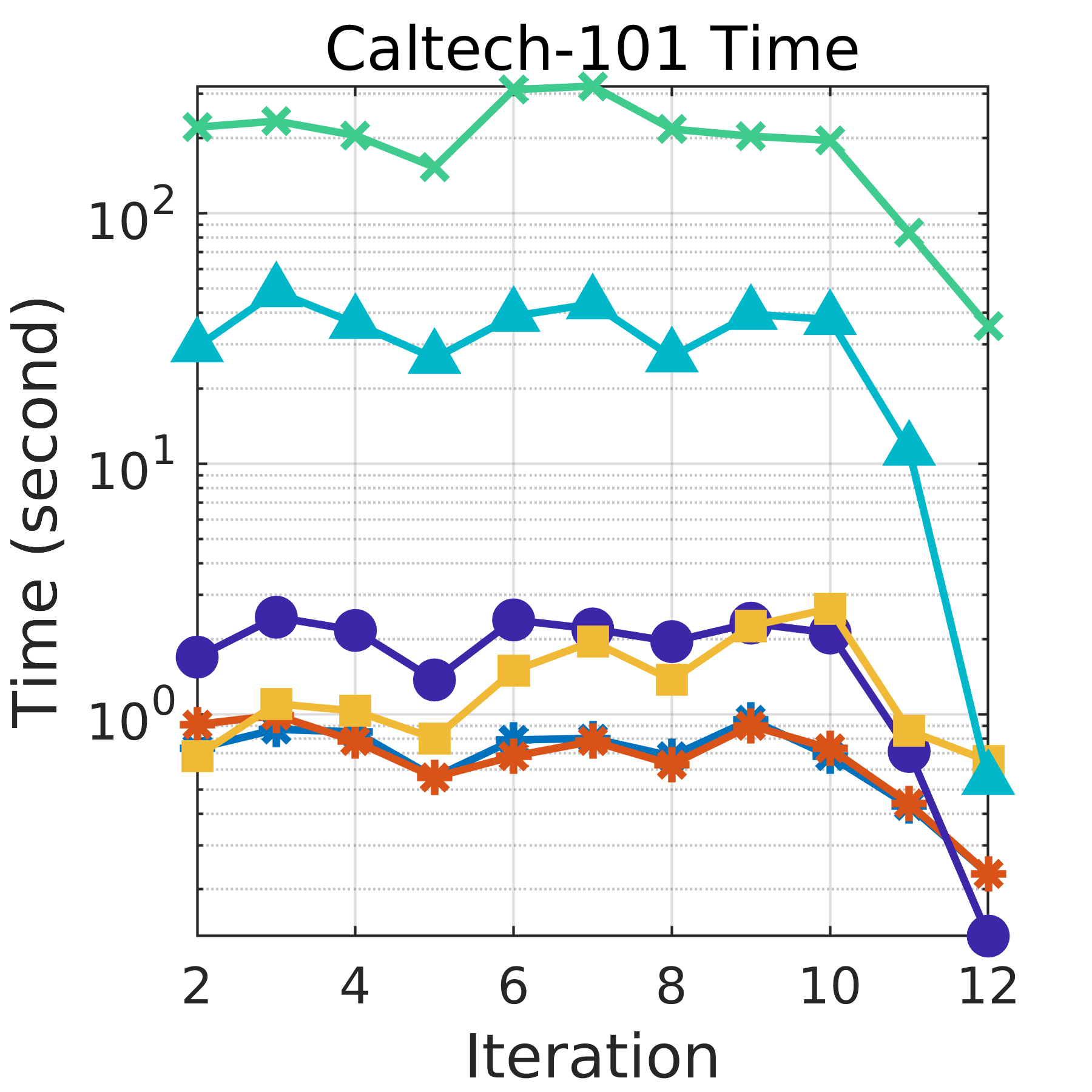}
	\end{subfigure}
	\begin{subfigure}[b]{0.23\textwidth}   
		\centering 
		\includegraphics[width=4.2cm,height=3.6cm]{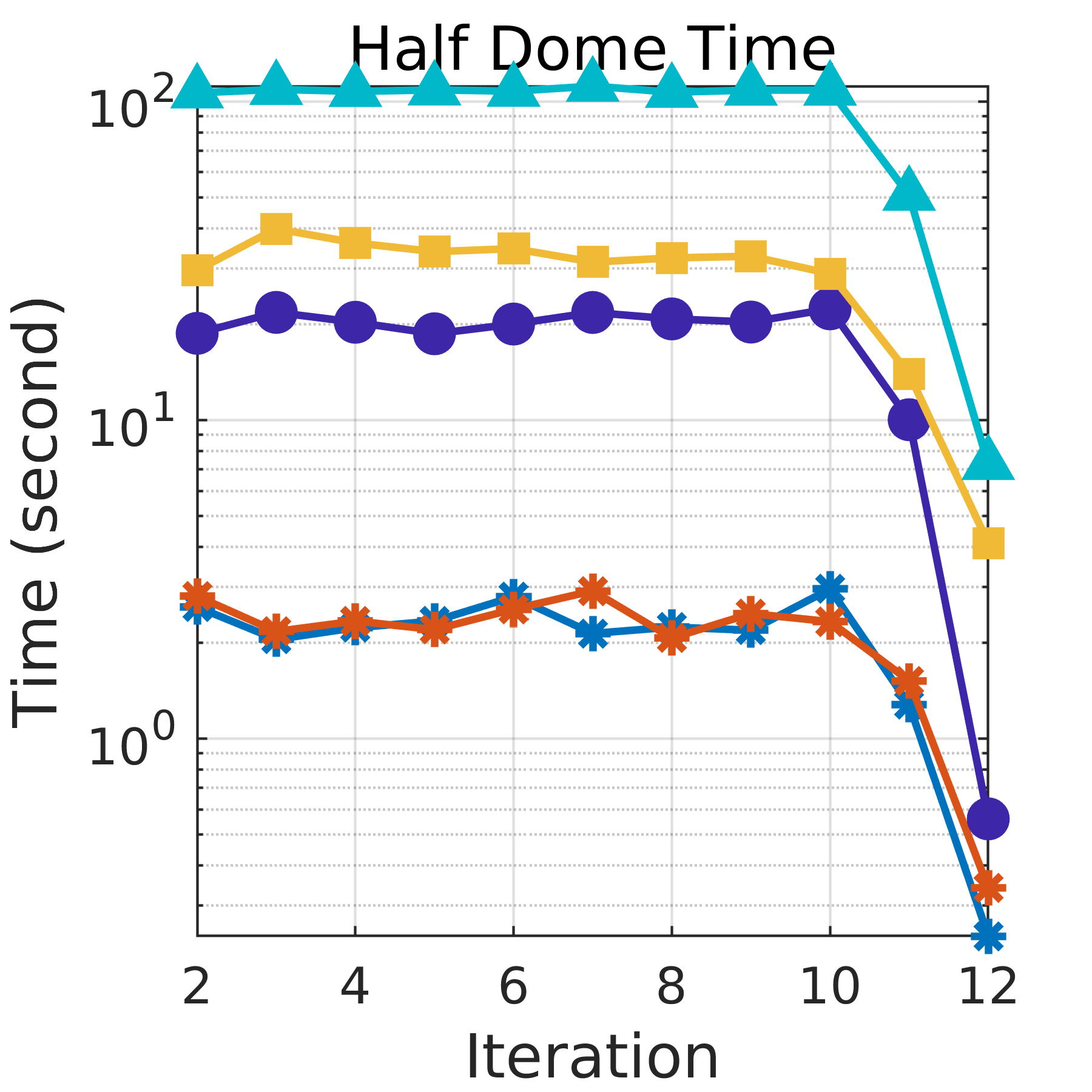}
	\end{subfigure}
	\begin{subfigure}[b]{0.23\textwidth}   
		\centering 
		\includegraphics[width=4.2cm,height=3.6cm]{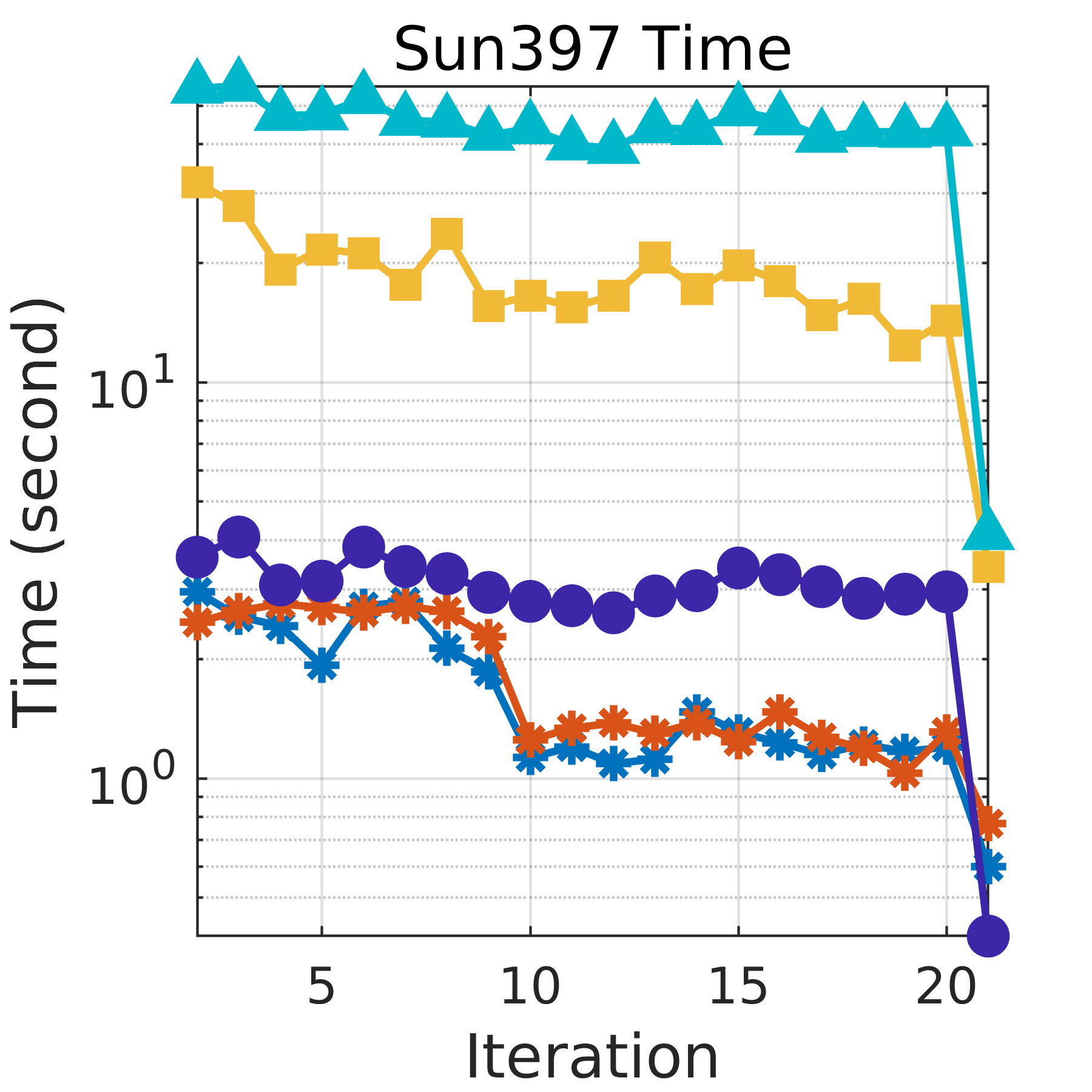}
	\end{subfigure}
	\caption[ Results for news and image retrieval in a dynamic database comparison against online hashing methods. Recall@20 performance (1st row) and Update time cost (2nd row). 1st column: News20. 2nd column: Caltech-101. 3rd column: Sun397. 4th column: Half dome. Time cost is in log scale.]
	{\small Results for news and image retrieval in a dynamic database comparison against online hashing methods. Recall@20 performance (1st row) and Update time cost (2nd row). 1st column: News20. 2nd column: Caltech-101. 3rd column: Half dome. 4th column: Sun397. Time cost is in log scale.} 
	\label{knn_img_online}
\end{figure*}

\begin{figure*}
	\centering
	\begin{subfigure}[b]{1\textwidth}
		\centering
		\includegraphics[width=0.6\textwidth]{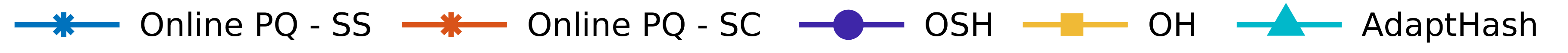}   
	\end{subfigure}
	\vskip\baselineskip

	\begin{subfigure}[b]{0.49\textwidth}
		\centering
		\includegraphics[width=9cm]{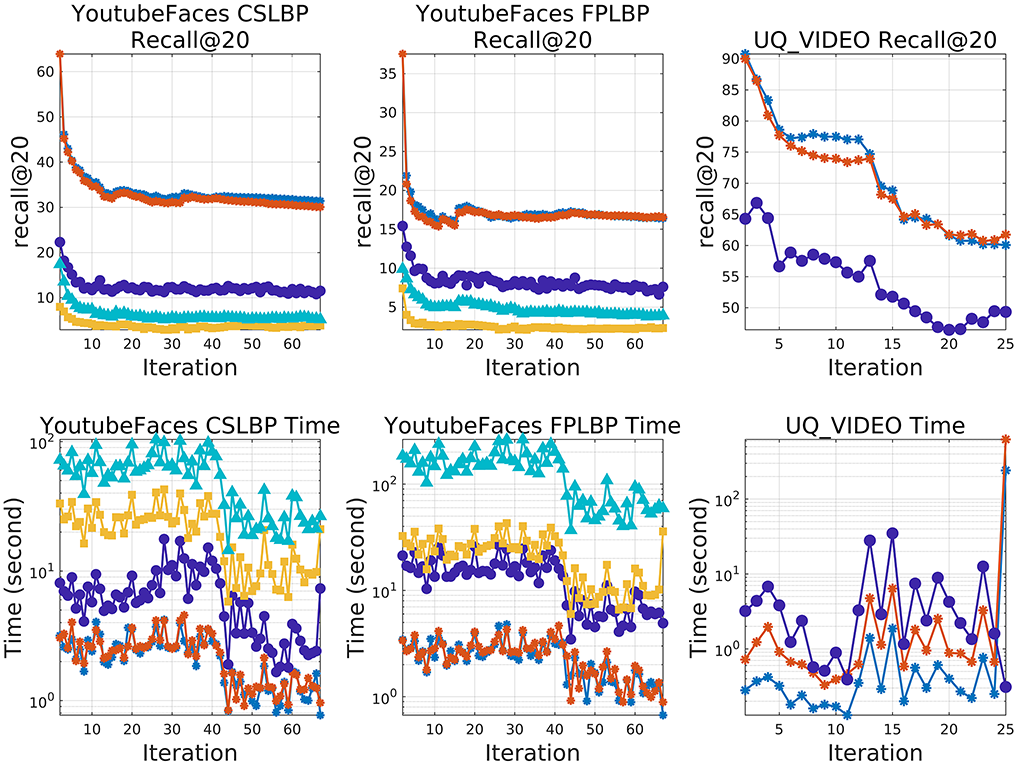}    
	\end{subfigure}
	\begin{subfigure}[b]{.48\textwidth}
		\includegraphics[width=8.5cm]{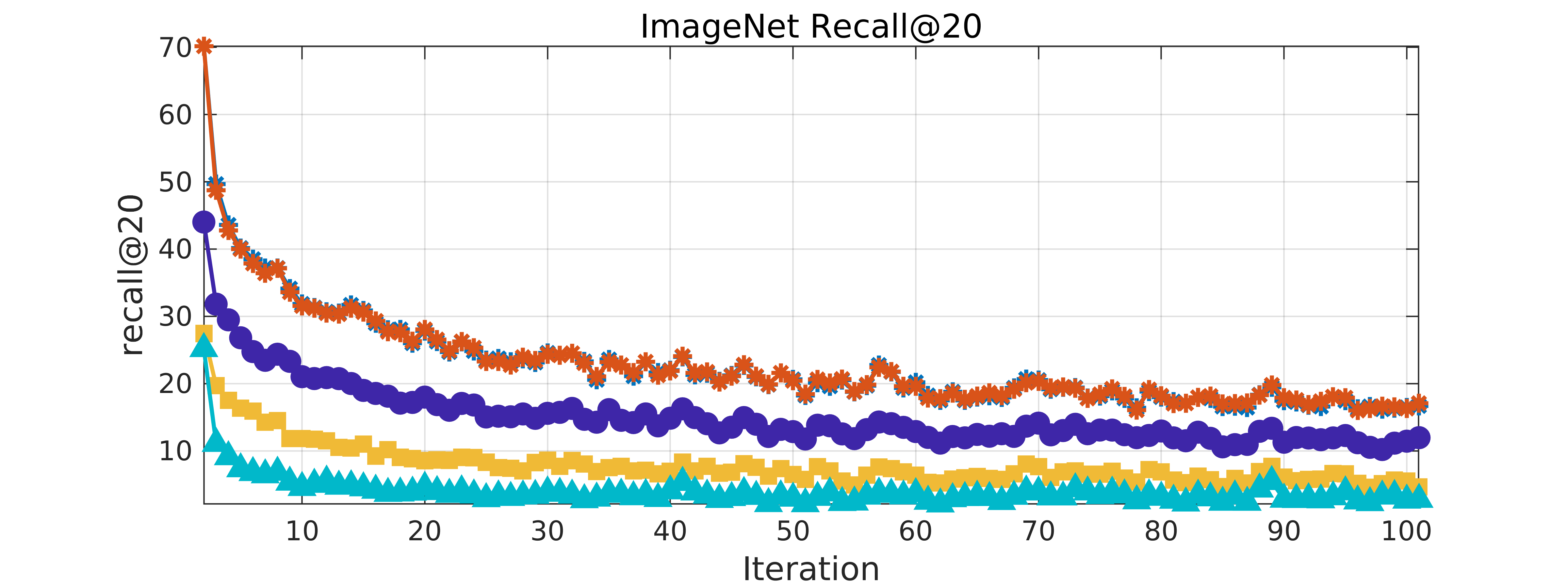}
		
		\vspace{2ex}
		
		\includegraphics[width=8.3cm]{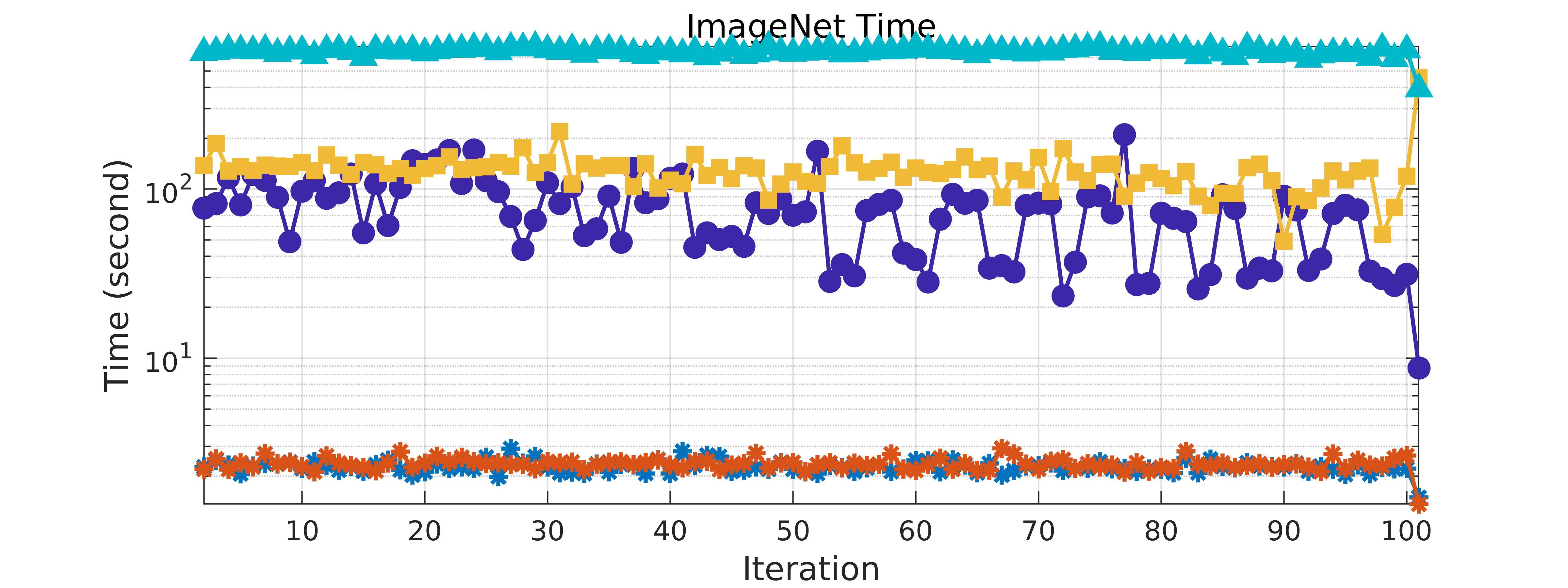}
	\end{subfigure}
	\caption[ ]
	{\small Results for YoutubeFaces on CSLBP and FPLBP features, UQ\_VIDEO and ImageNet in a dynamic database comparison against online hashing methods. Recall@20 (1st row) and Update time cost (2nd row). 1st column: YoutubeFaces CSLBP feature. 2nd column: YoutubeFaces FPLBP feature. 3rd column: UQ\_VIDEO. 4th column: ImageNet. Time cost is in log scale.}
	\label{knn_video_online}
\end{figure*}

\subsubsection{Online methods comparison}

Figure \ref{news20_diff_bits} demonstrate the performance of indexing update using different number of bits of the model on News20 dataset compared with four online hashing methods.
It clearly shows that our proposed models consistently outperforms other online hashing methods, with the lowest update time cost in different number of bits.
In particular, when the number of bits increases, the difference between online PQ and other online methods gets increasing.
SSBC achieves comparable search accuracy with online PQ in some of the iterations using 32 bits, but its update time cost is significantly higher than other methods.
The curve trends of all methods over different number of bits used are consistent.

As shown in Figure \ref{knn_img_online} and Figure \ref{knn_video_online}, it is evident that our proposed method with two different budget constraints can achieve superior performance in both efficiency and effectiveness compared to other online methods. 
Specifically, online PQ significantly outperforms the second best online method, OSH, in search accuracy for all datasets and is much faster in model update.
These two figures are quite revealing in several ways.
\begin{enumerate}[label={\alph*)},wide, labelwidth=!, labelindent=0pt]
\item Interestingly, there is a sharp recall drop at iteration 13 in the News20 dataset. Since the distribution between our dynamic query sets changes frequently, this drastic change is caused by the significant query distribution change. All of the methods can not respond quickly to this effect, which reflects the importance and the necessity of developing an online approach that accommodates streaming data with changing distributions. 
\item Similarly for the Half dome dataset, the sudden recall improvement for all methods at iteration 11 implies the similar data distribution between the query set and all the existing stored data sets at iteration 11.
\item The update cost for each method is the total update time for the mini-batch of new data at each iteration and it is relatively stable through most of the iterations because they have similar sizes of the mini-batch data to update. The last iteration only updates around half of the mini-batch size to the previous iterations, so its update cost drops with respect to the number of the data instances to be updated.
\item As OH and AdaptHash are supervised online hashing methods, and OSH performs the best over all baseline methods, we compare our model with two different budget constraints with OSH for UQ\_VIDEO dataset.
It is obvious that our method achieves better search accuracy with lower update time cost.
Moreover, although the performance difference between the two budget constraints of our model is minimum, updating sub-codewords in half of the subspaces performs slightly better than updating half of the sub-codewords of all in both search accuracy and update time.
\end{enumerate}

\begin{figure*}
	\centering
	\begin{subfigure}[b]{1\textwidth}
		\centering
		\includegraphics[width=1\textwidth]{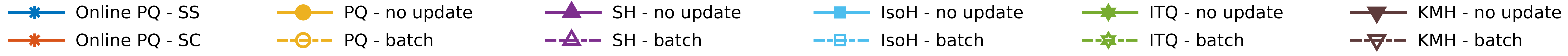}    
	\end{subfigure}
	\begin{subfigure}[b]{0.23\textwidth}
		\centering
		\includegraphics[width=4.2cm,height=3.6cm]{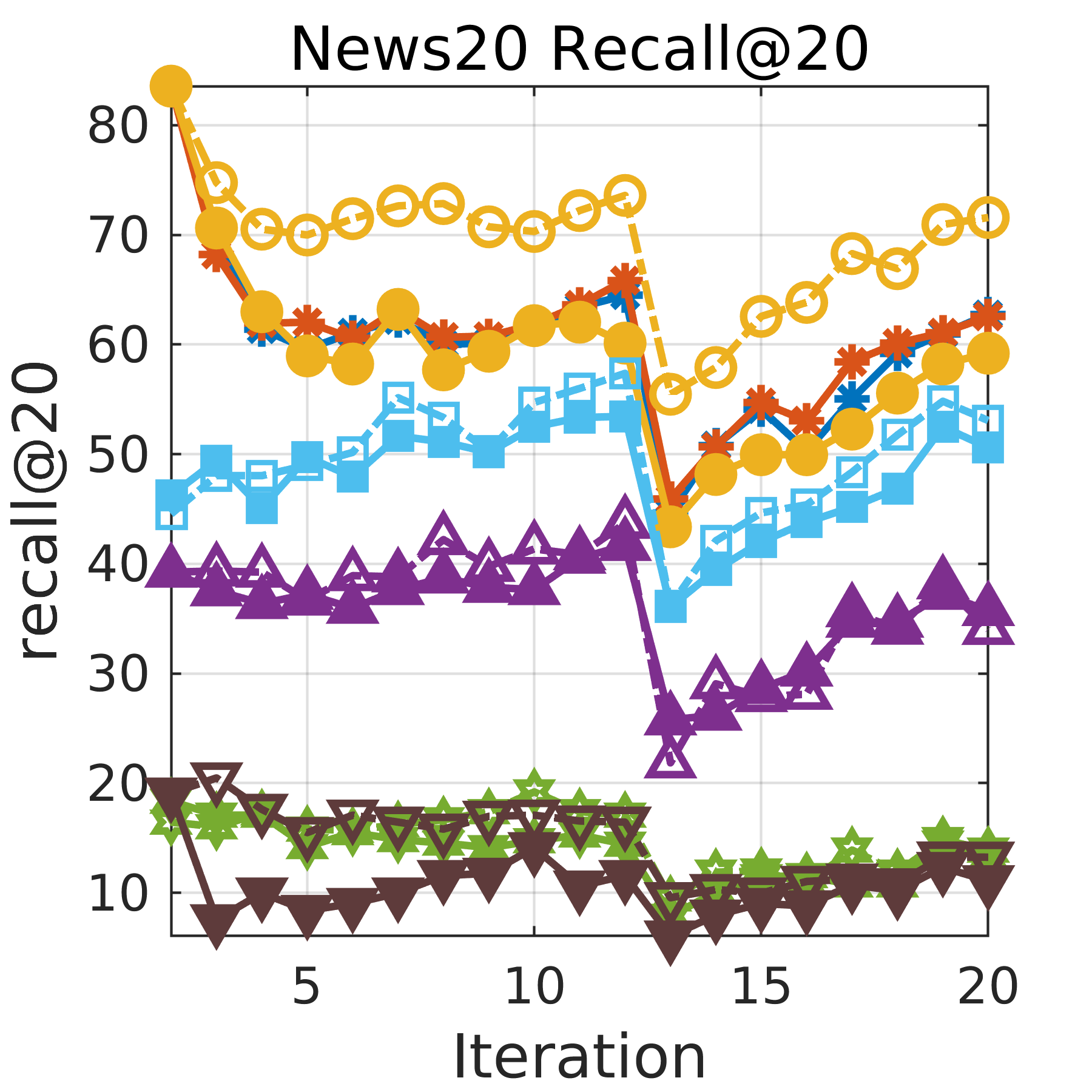}    
	\end{subfigure}
	\begin{subfigure}[b]{0.23\textwidth}  
		\centering 
		\includegraphics[width=4.2cm,height=3.6cm]{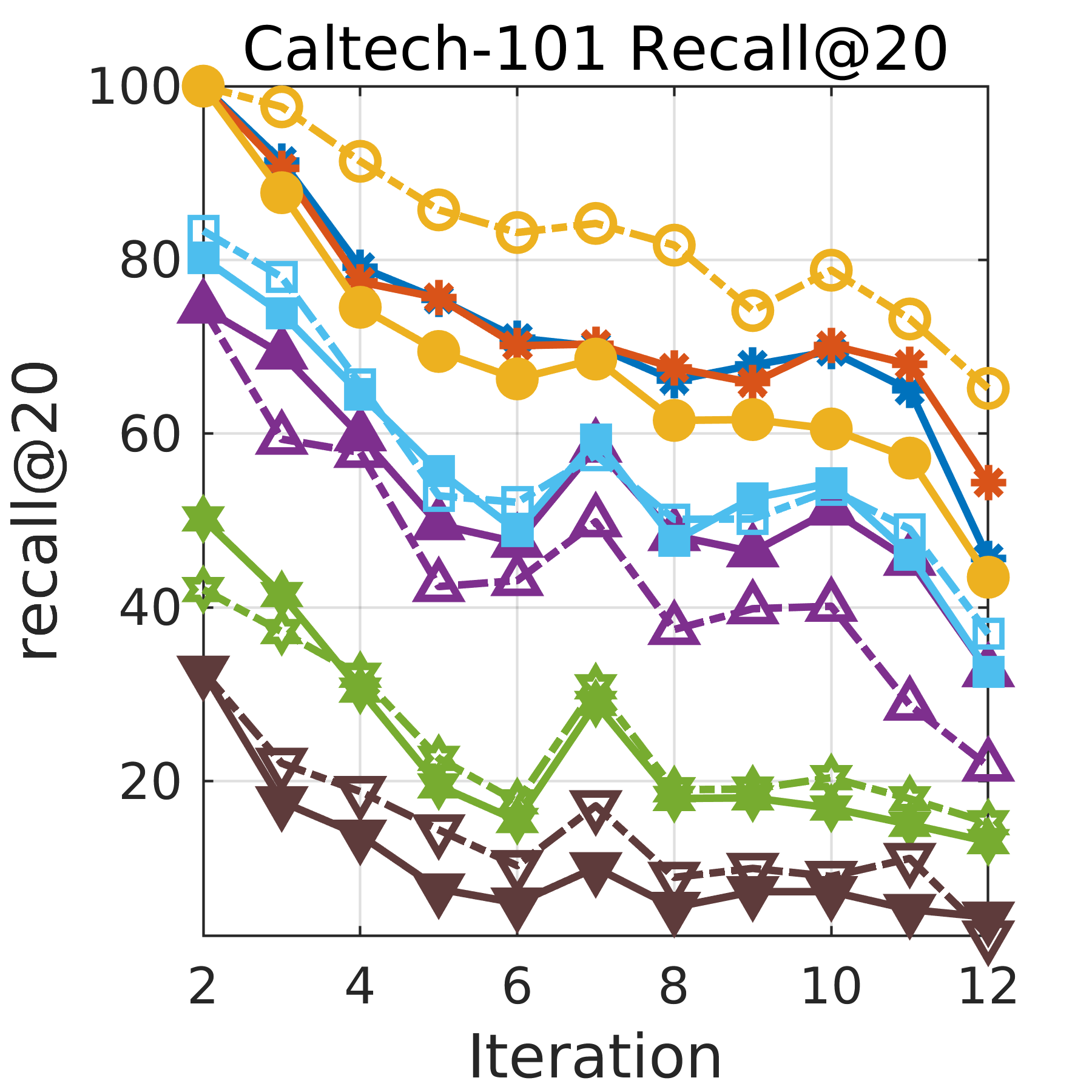}
	\end{subfigure}
	\begin{subfigure}[b]{0.23\textwidth}   
		\centering 
		\includegraphics[width=4.2cm,height=3.6cm]{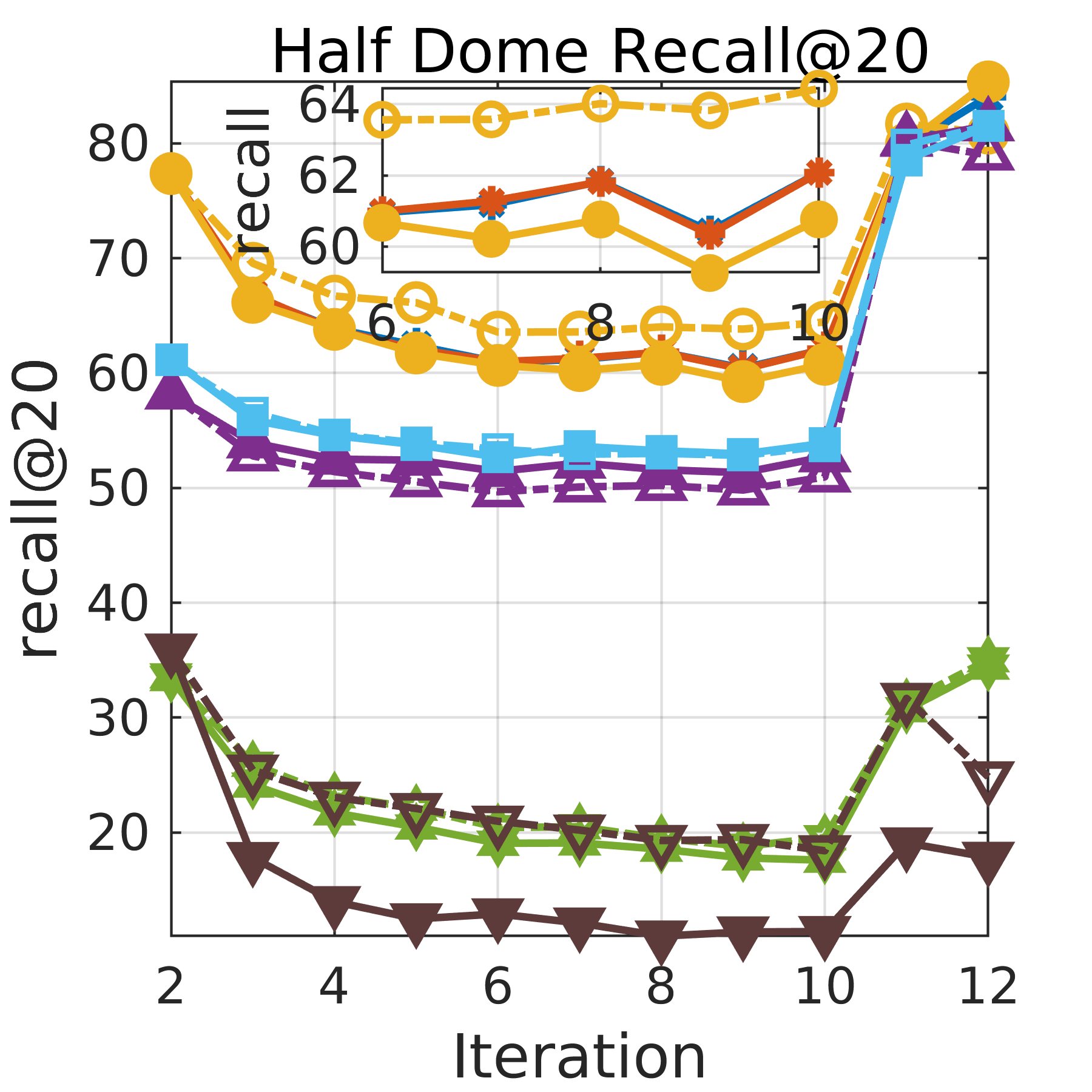}
	\end{subfigure}
	\begin{subfigure}[b]{0.23\textwidth}   
		\centering 
		\includegraphics[width=4.2cm,height=3.6cm]{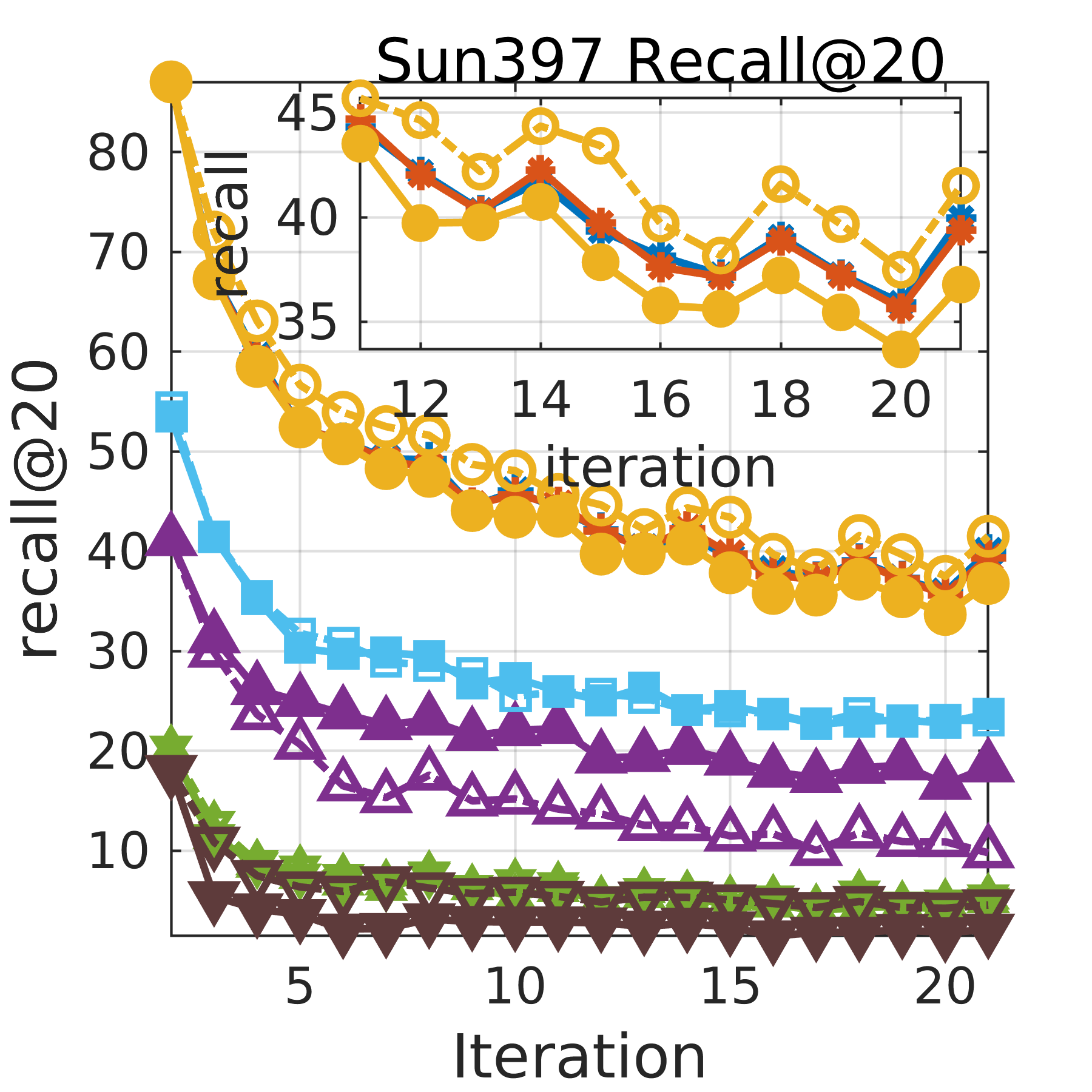}
	\end{subfigure}
	\begin{subfigure}[b]{0.23\textwidth}
		\centering
		\includegraphics[width=4.2cm,height=3.6cm]{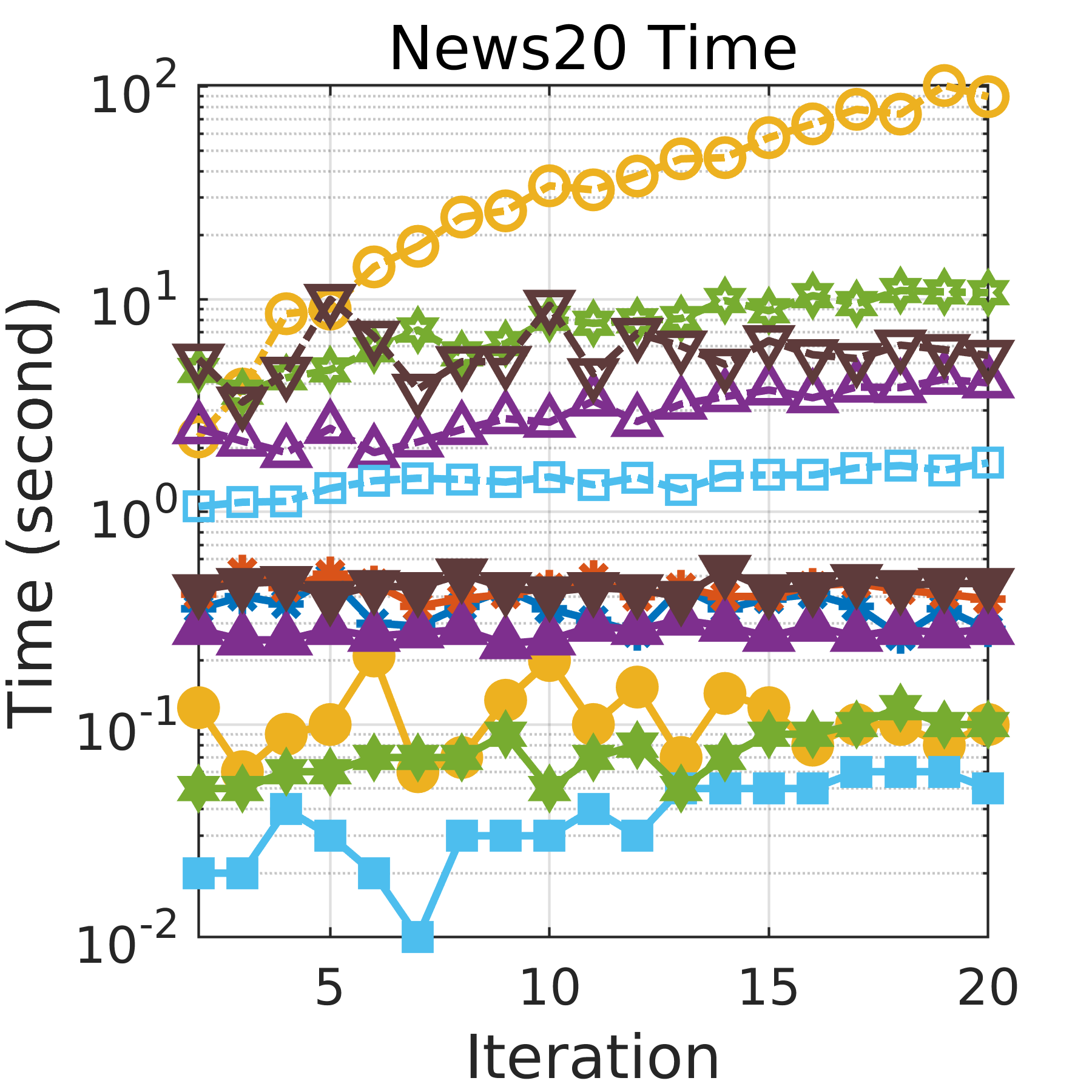}    
	\end{subfigure}
	\begin{subfigure}[b]{0.23\textwidth}  
		\centering 
		\includegraphics[width=4.2cm,height=3.6cm]{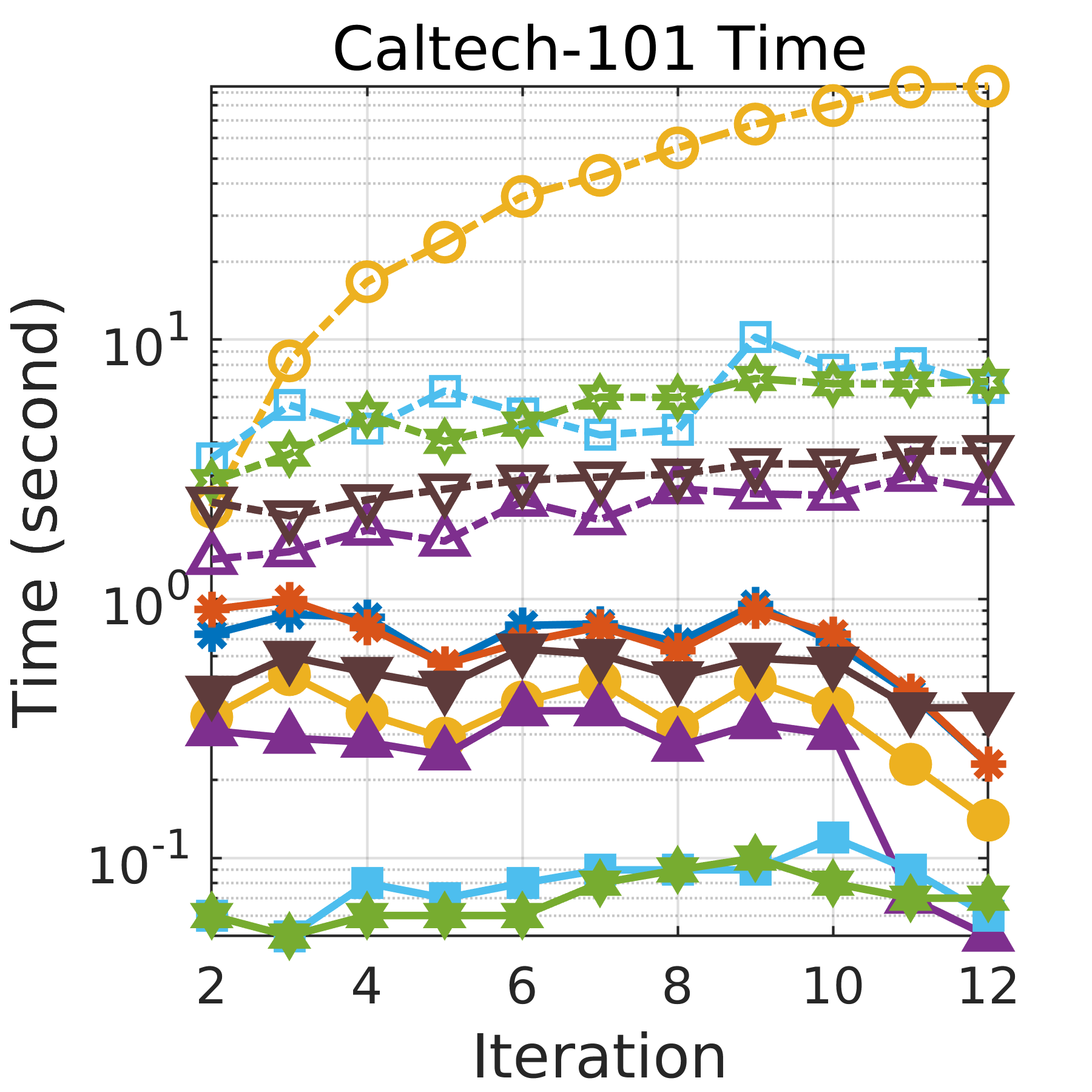}
	\end{subfigure}
	\begin{subfigure}[b]{0.23\textwidth}   
		\centering 
		\includegraphics[width=4.2cm,height=3.6cm]{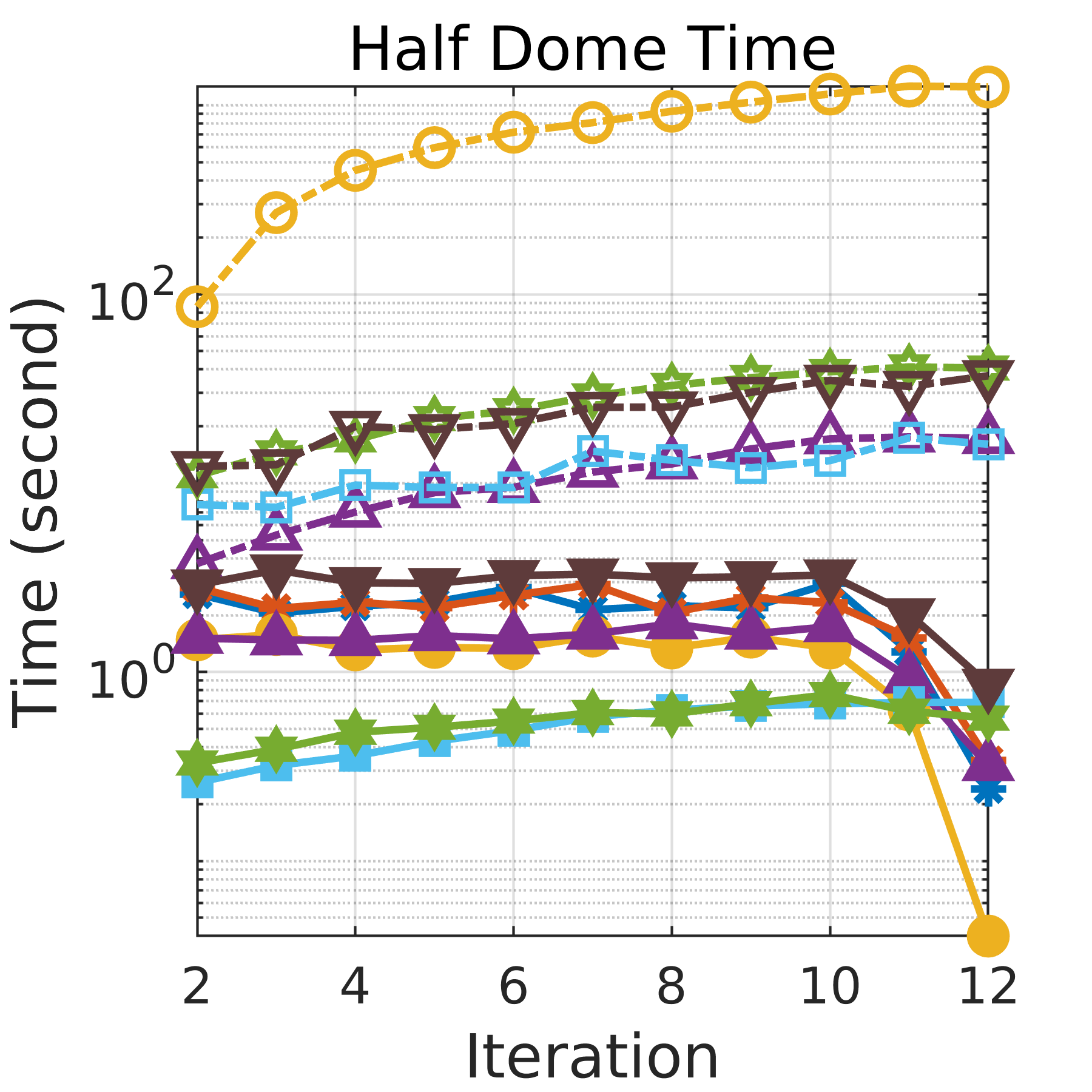}
	\end{subfigure}
	\begin{subfigure}[b]{0.23\textwidth}   
		\centering 
		\includegraphics[width=4.2cm,height=3.6cm]{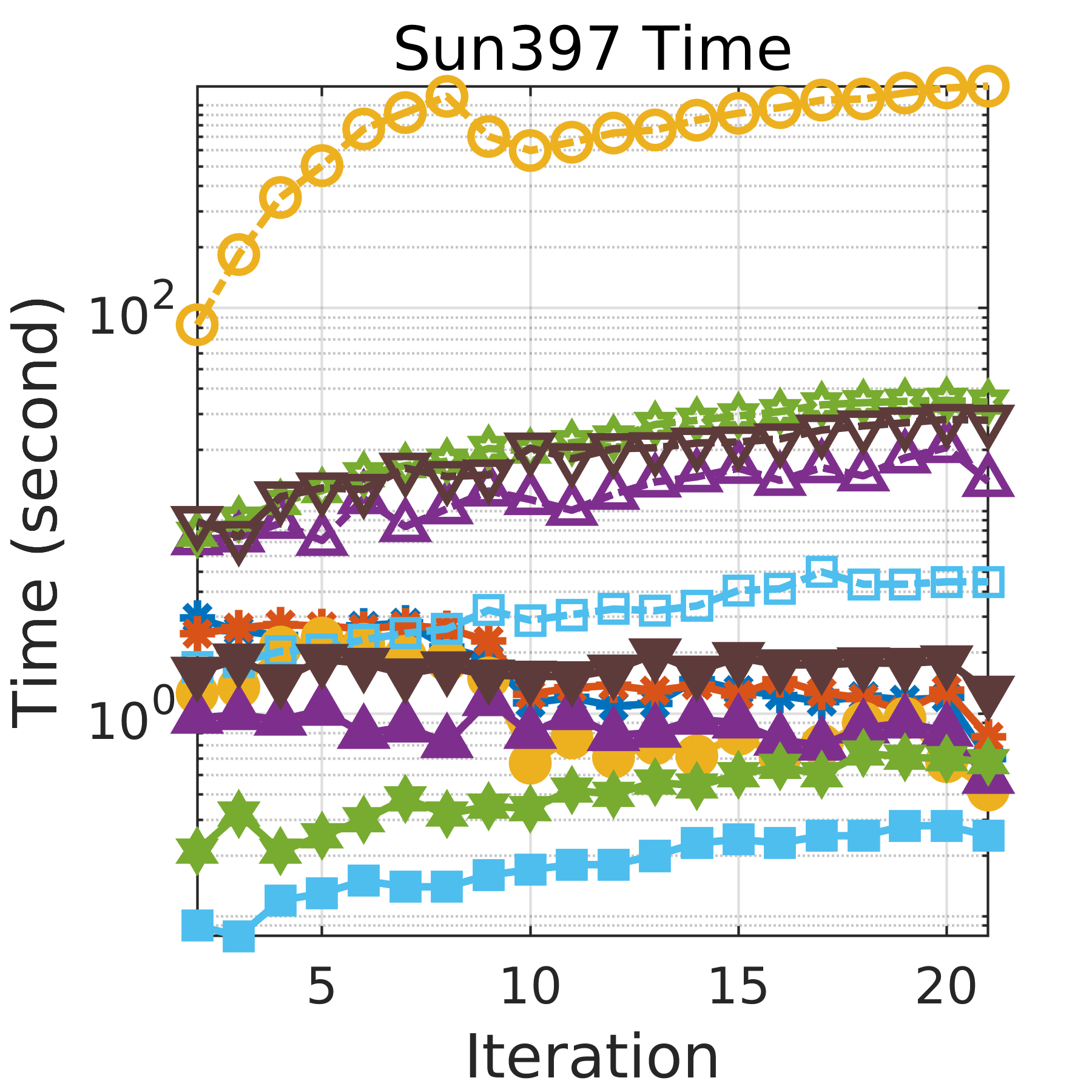}
	\end{subfigure}
	\caption[ ]
	{\small Results for news and image retrieval in a dynamic database comparison against batch methods. Recall@20 performance (1st row) and Update time cost (2nd row). 1st column: News20. 2nd column: Caltech-101. 3rd column: Sun397. 4th column: Half dome. Part of the recall plots of online and baseline PQ methods for Sun397 and Half dome are enlarged. Time cost is in log scale.} 
	\label{knn_img_batch}
\end{figure*}

\subsubsection{Batch methods comparison}
To further evaluate the performance of nearest neighbor search of our online model on how well it approaches to the search accuracy of batch mode methods and to the model update time of ``no update'' methods, we compare our model with each of the batch mode methods in two ways: retrain the model at each iteration (batch) and using the model trained on the initial iteration once for all (no update).
The comparison results displayed in Figure \ref{knn_img_batch} implies several interesting observations.
First, as the update time cost graphs are plot in log scale, the update time of online PQ is only slightly more than the one of the ``no update'' methods, but significantly lower than the one of the ``batch'' methods.
Second, online PQ and PQ methods significantly outperform other batch hashing methods, and online PQ performs slightly worse than ``batch'' PQ and better than ``no update'' PQ.
Therefore, we can conclude that our online model achieves good tradeoff between accuracy and update efficiency.
Though KMH performs slightly better than ITQ in its original paper, it performs worse in our experiment setting. This is because that we are in the online setting where the data distribution of the query set may be a lot different from the one of the existing database.
More results on our online model compared with batch methods for video datasets are in the Supplementary Material.

\subsection{Continuous querying on dynamic real-time data}
In many emerging application environments, they commonly require to emphasize the most recent data and ignore the expired data in retrieval search.
Examples of such applications include network monitoring and online portfolio selection.
Furthermore, applications such as hot topic news article retrieval system, object tracking given a recent period of time from social network albums or live videos require the real-time behaviour of the data.
Therefore, to reflect this requirement in an online indexing system, we employ sliding window technique.
In this experiment, we investigate the comparison between with and without employing the sliding window technique, and presents the comparison results on different hash methods.

\begin{figure}[ht]
	\centering
	\includegraphics[scale=0.45]{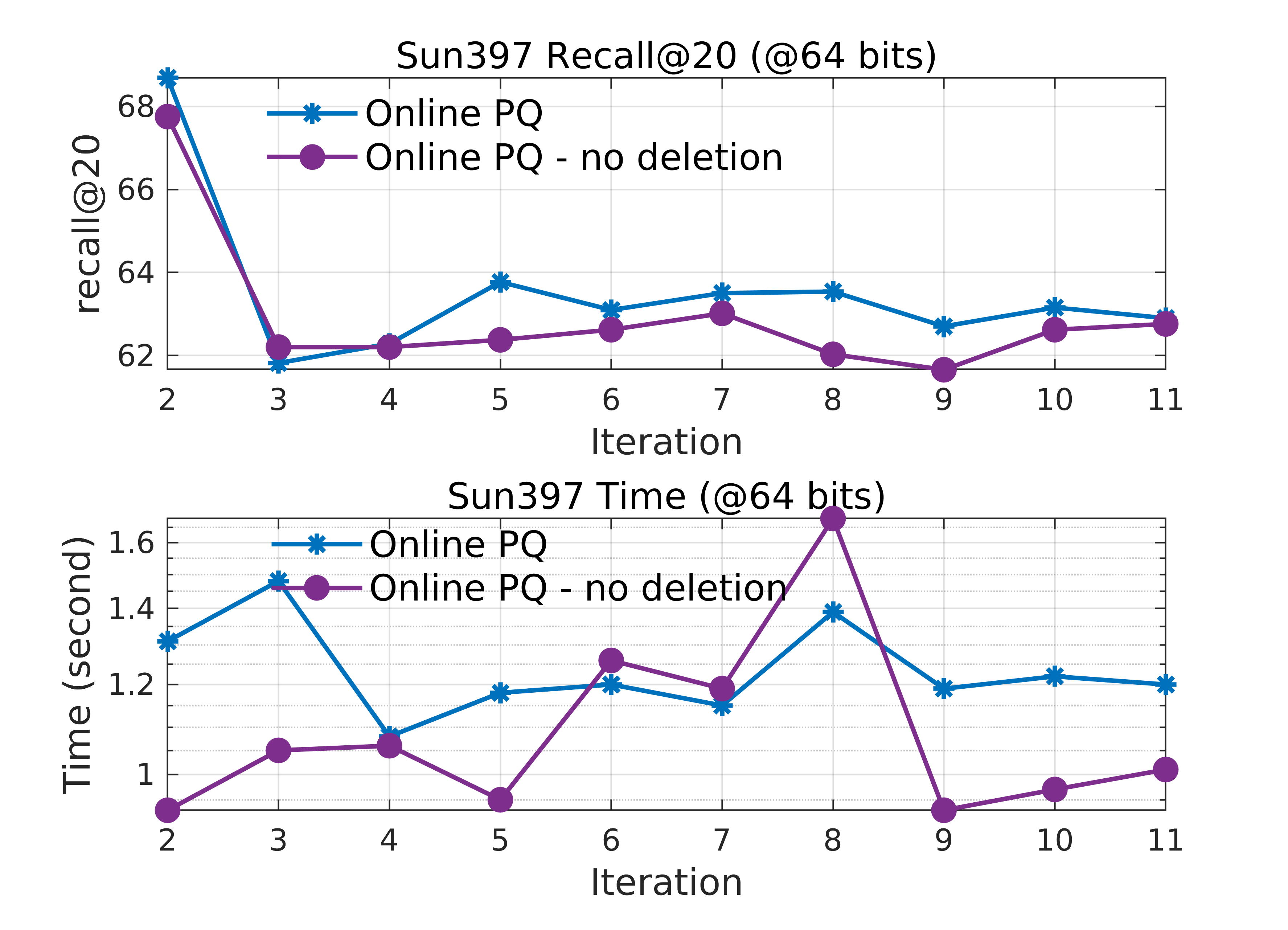}
	\caption{\label{slidingwindow_sun397_nodeletion} 
		\small Online PQ over a sliding window approach between deletion and no deletion of the expired data to the model for Sun397. Recall@20 (1st row) and Update time cost (2nd row).}
\end{figure}

\begin{figure*}
	\centering
	\begin{subfigure}[b]{1\textwidth}
		\centering
		\includegraphics[width=0.6\textwidth]{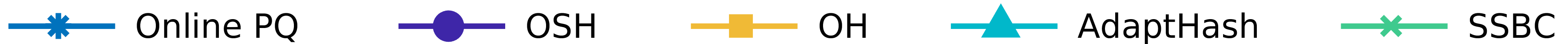}    
	\end{subfigure}
	\begin{subfigure}[b]{0.23\textwidth}
		\centering
		\includegraphics[width=4.2cm,height=3.6cm]{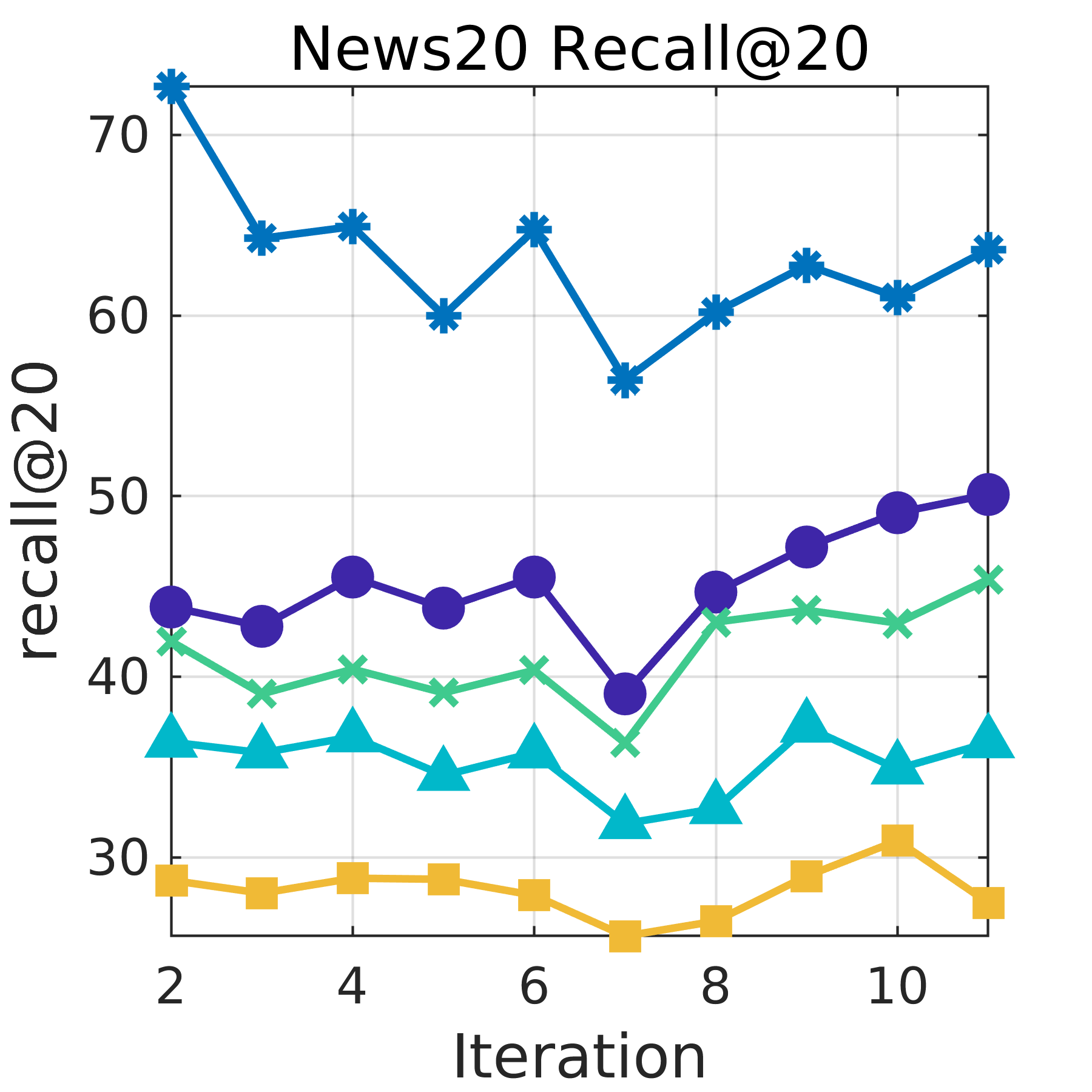}    
	\end{subfigure}
	\begin{subfigure}[b]{0.23\textwidth}  
		\centering 
		\includegraphics[width=4.2cm,height=3.6cm]{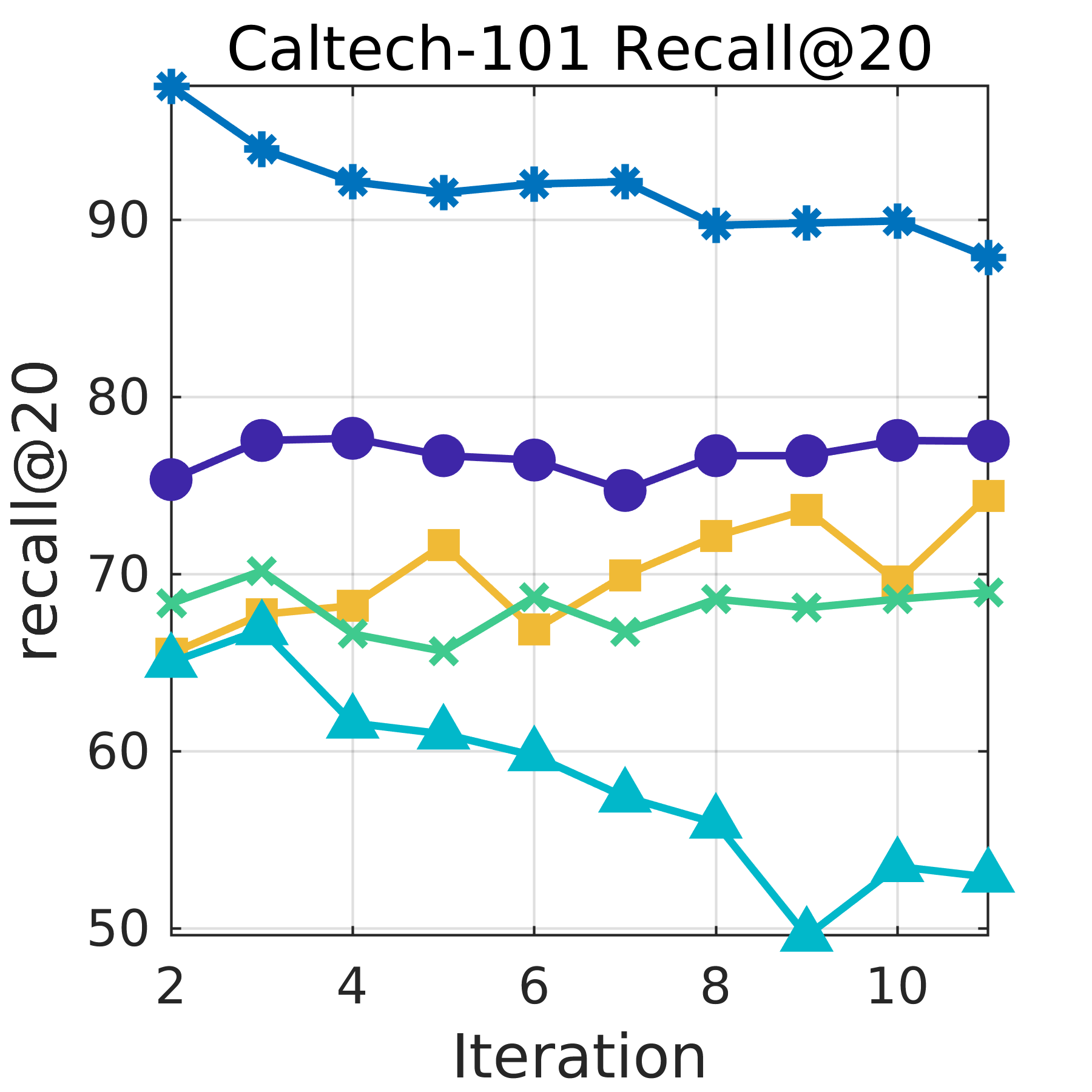}
	\end{subfigure}
	\begin{subfigure}[b]{0.23\textwidth}   
		\centering 
		\includegraphics[width=4.2cm,height=3.6cm]{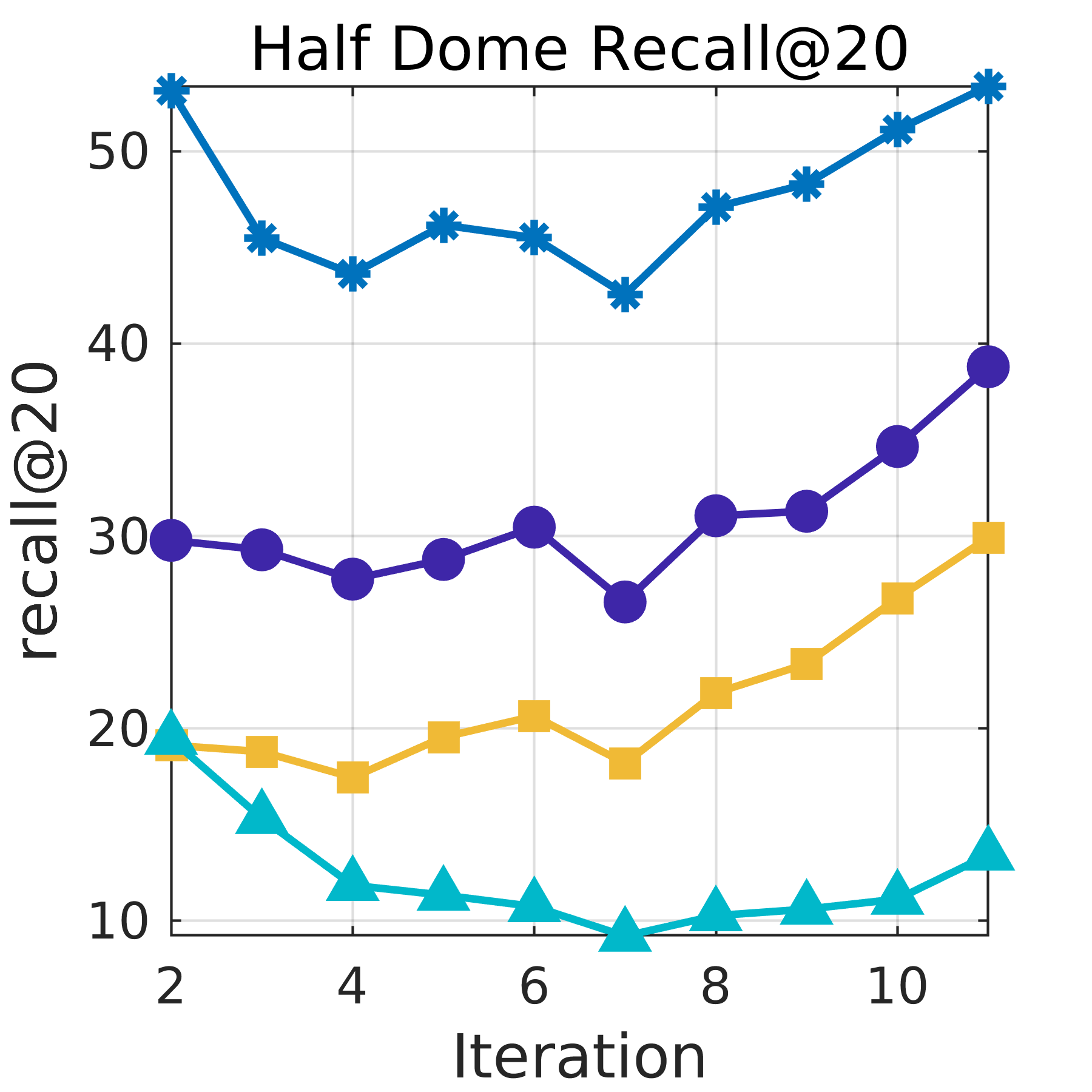}
	\end{subfigure}
	\begin{subfigure}[b]{0.23\textwidth}   
		\centering 
		\includegraphics[width=4.2cm,height=3.6cm]{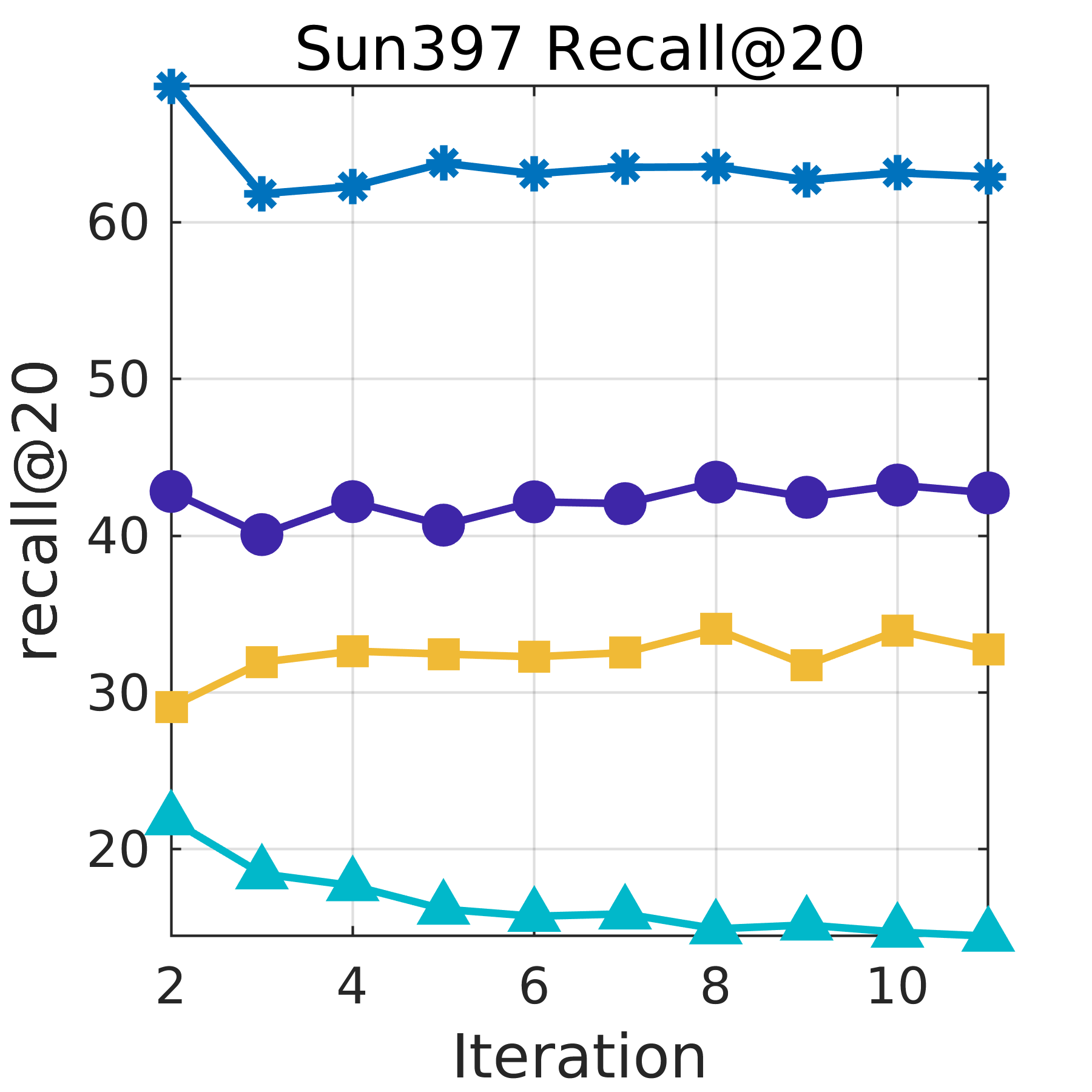}
	\end{subfigure}
	\begin{subfigure}[b]{0.23\textwidth}
		\centering
		\includegraphics[width=4.2cm,height=3.6cm]{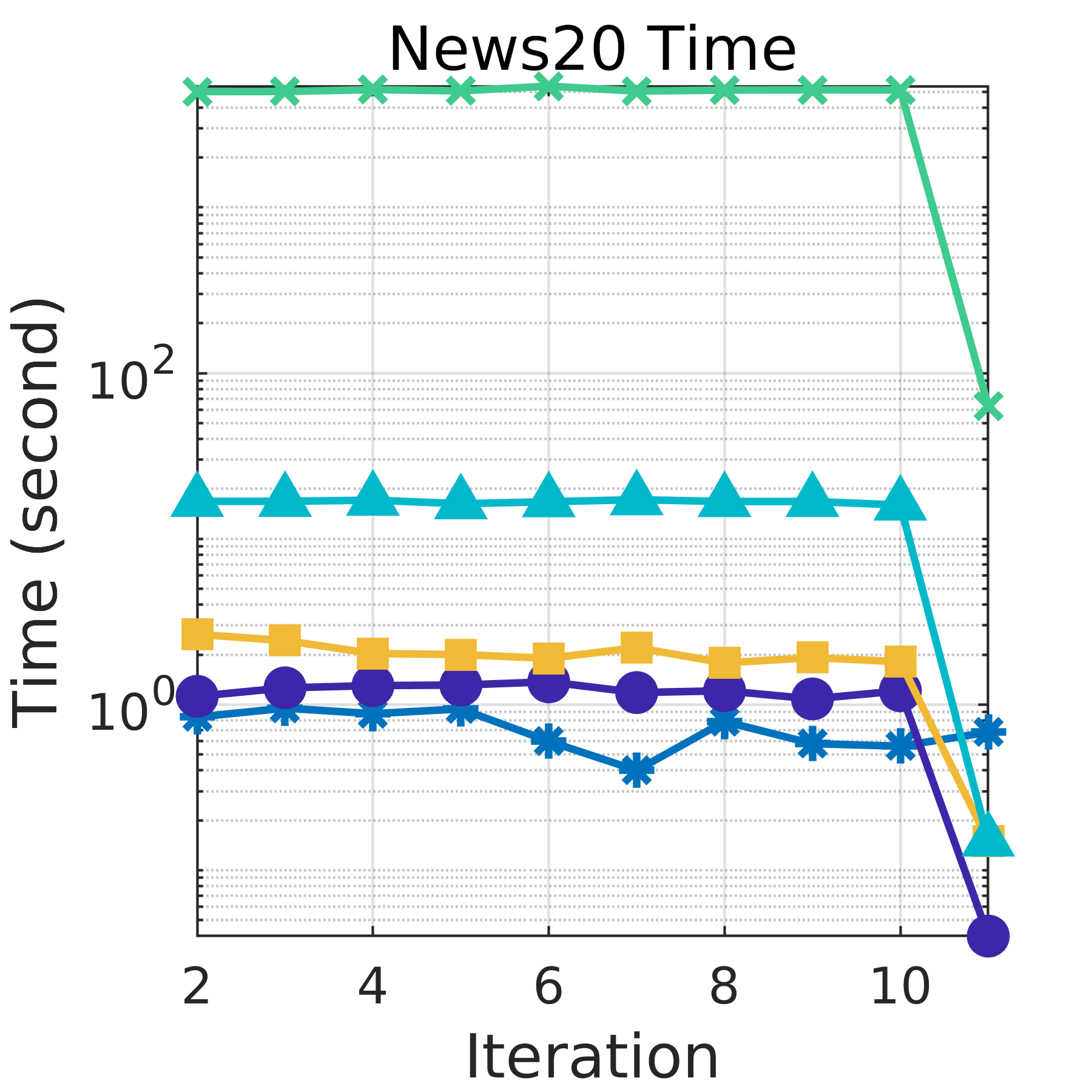}    
	\end{subfigure}
	\begin{subfigure}[b]{0.23\textwidth}  
		\centering 
		\includegraphics[width=4.2cm,height=3.6cm]{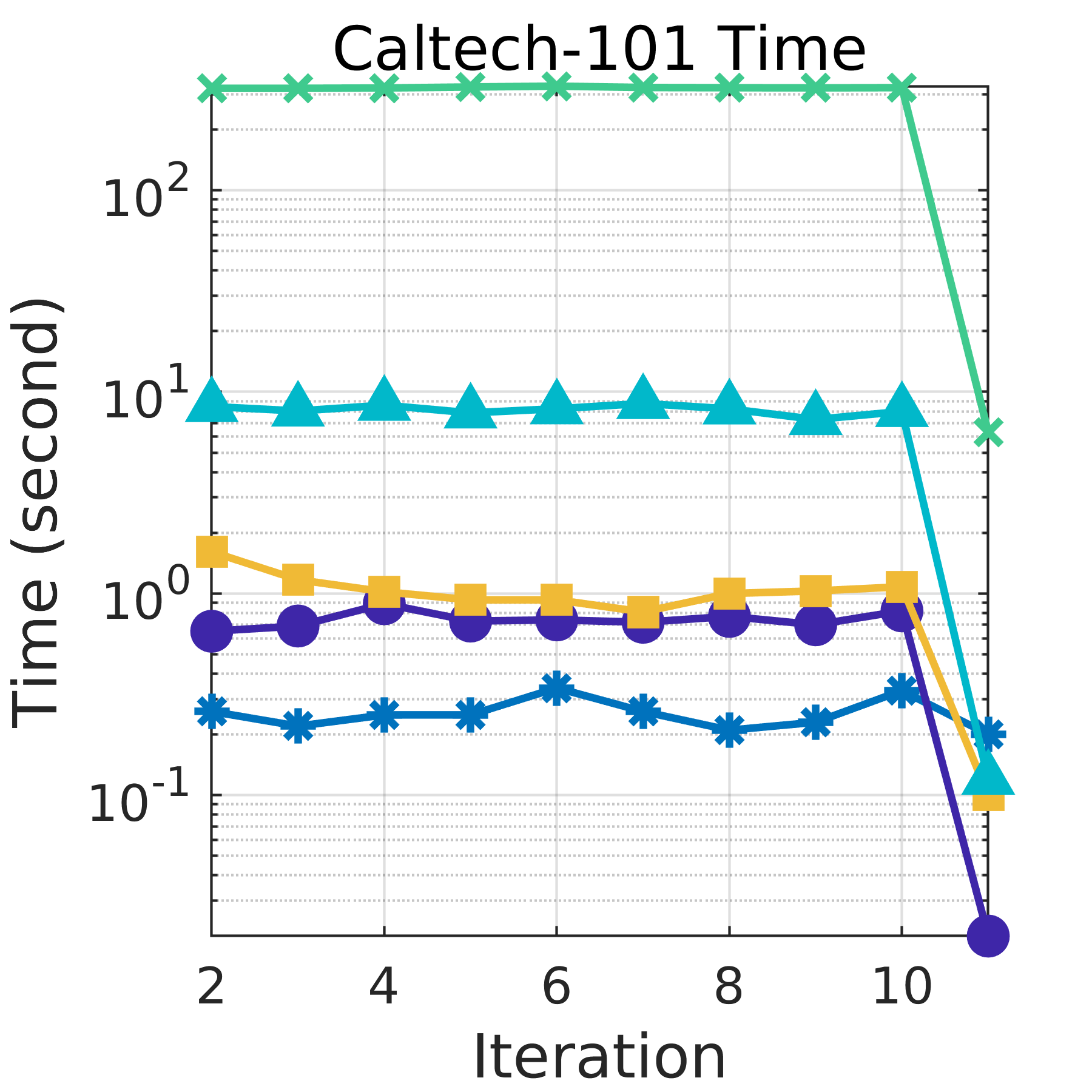}
	\end{subfigure}
	\begin{subfigure}[b]{0.23\textwidth}   
		\centering 
		\includegraphics[width=4.2cm,height=3.6cm]{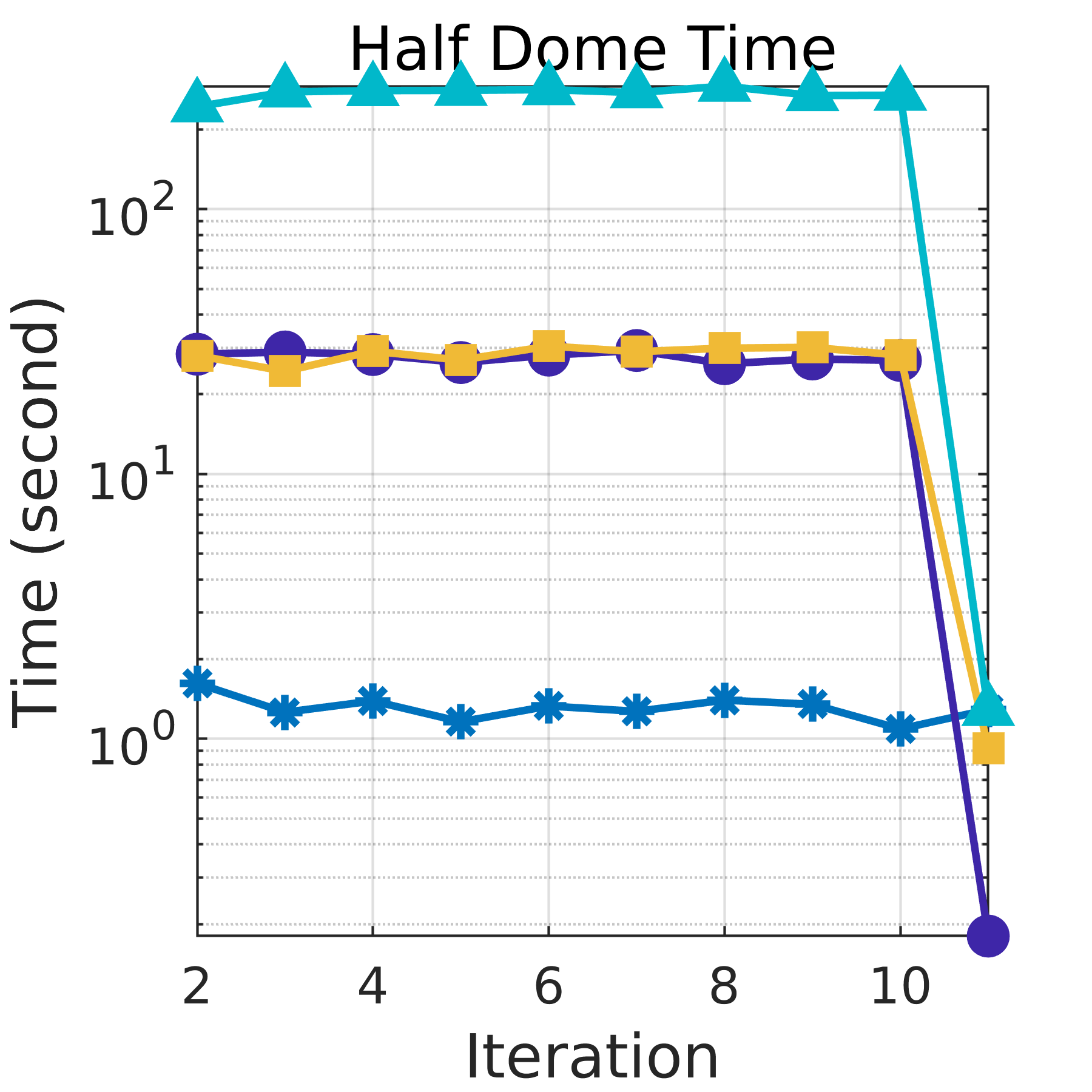}
	\end{subfigure}
	\begin{subfigure}[b]{0.23\textwidth}   
		\centering 
		\includegraphics[width=4.2cm,height=3.6cm]{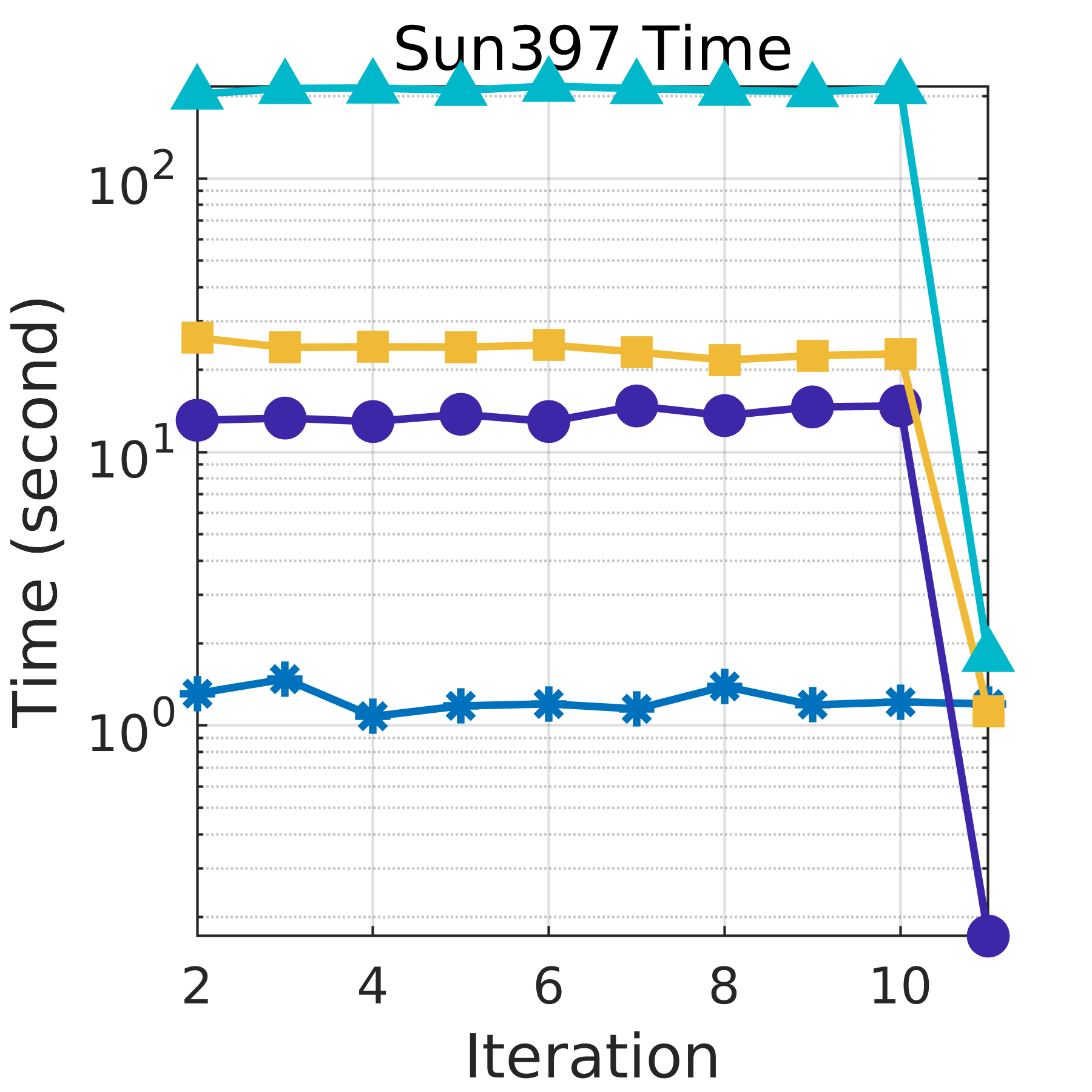}
	\end{subfigure}
	\caption[]
	{\small Results for news and image retrieval over a sliding window comparison against online hashing methods. Recall@20 performance (1st row) and Update time cost (2nd row). 1st column: News20. 2nd column: Caltech-101. 3rd column: Sun397. 4th column: Half dome. Time cost is in log scale.} 
	\label{slidingwindow_img_online}
\end{figure*}

\begin{figure}
	\centering
	\begin{subfigure}[b]{0.5\textwidth}
		\centering
		\includegraphics[width=0.8\textwidth]{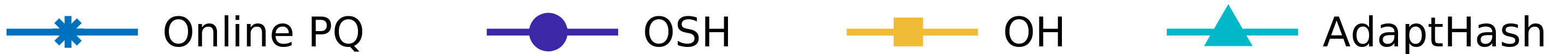}    
	\end{subfigure}
	\vskip\baselineskip
	\begin{subfigure}[b]{0.48\textwidth}
		\centering
		\centerline{\includegraphics[width=8.9cm]{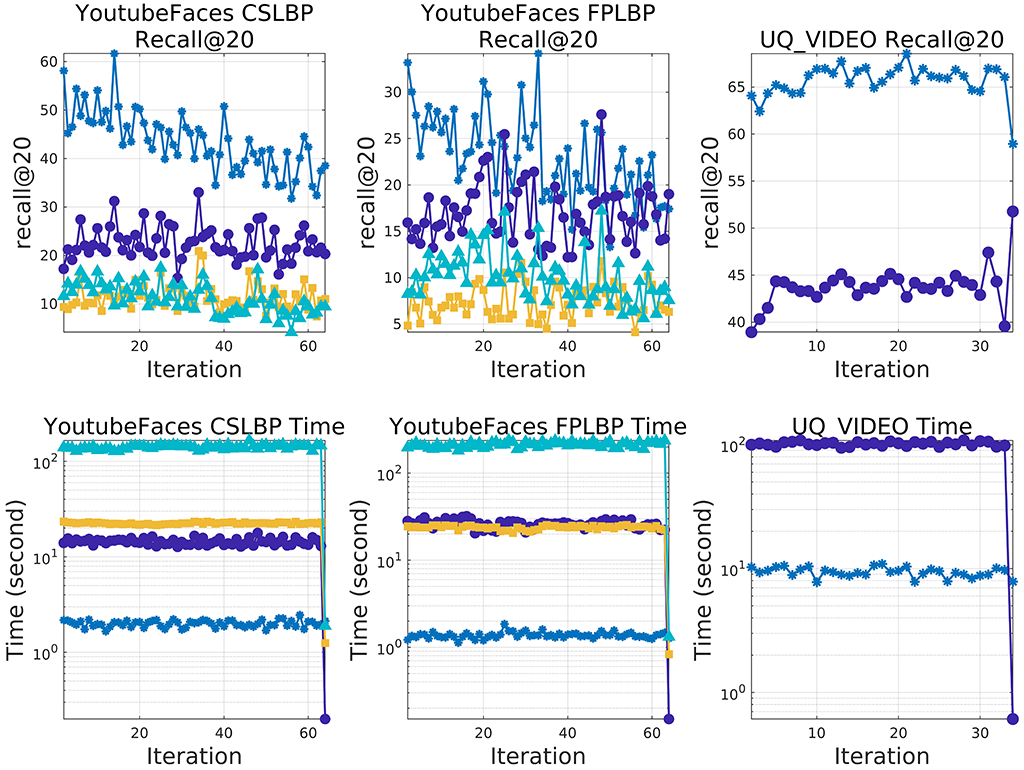}}    
	\end{subfigure}
	
	\caption[]
	{\small Results for YoutubeFaces dataset on CSLBP and FPLBP features and UQ\_VIDEO over a sliding window comparison against online hashing methods. Recall@20 (1st row) and Update time cost (2nd row). 1st column: YoutubeFaces CSLBP feature. 2nd column: YoutubeFaces FPLBP feature. 3rd column: UQ\_VIDEO. Time cost is in log scale.} 
	\label{slidingwindow_video_online}
\end{figure}

\subsubsection{Setting}
We follow the same way as in the setting in Section \ref{setting} to order the data in text and video datasets and the image data is ordered by classes without the overlapping of classes in each pair of two consecutive mini-batches this time.
We set the number of iterations to be 11 for text and image datasets, and 65 and 34 for YoutubeFaces and UQ\_VIDEO respectively.
The sliding window size is set to be 2000 for News20, 1000 for Caltech-101, 10000 for Sun397, Half dome and YoutubeFaces, and 100000 for UQ\_VIDEO.
Except for News20 dataset which contains news articles in chronological order, for the rest of the datasets, the sliding window contains images/videos belonging to a certain amount of classes/people at each iteration, so that the contribution of the classes/people for the expired data are removed from our proposed model.
We use the dynamic query set in this setting for all the datasets, so we use each new coming mini-batch of data as the query set first to retrieve similar data from the sliding window of the previous iteration and then use this mini-batch of data to update the model.
In our model, we remove the contribution of the data once it is removed from the sliding window.
We set $M$ = 8 and $K$ = 256, and update all the codewords over all subspaces.
Batch mode baseline methods retrain the model using the data in the sliding window at each iteration.

\subsubsection{Online methods comparison}

Our proposed method over a sliding window approach adds the contribution of the new incoming data to the index model and removes the contribution of the expired data to the index model at each iteration.
To evaluate our approach on data deletion using the sliding window technique, we highlight the difference between the models with and without expired data deletion in nearest neighbor search task in Figure \ref{slidingwindow_sun397_nodeletion}.
The update time of online PQ with expired data deletion is reasonably slightly higher than that of online PQ without expired data deletion and the search accuracy of online PQ with expired data deletion is slightly better as it emphasizes on the ``real-time'' data.

From Figure \ref{slidingwindow_img_online} and Figure \ref{slidingwindow_video_online}, we can see that online PQ over a sliding window approach achieves the best performance over other online methods in terms of update time efficiency and search effectiveness.
Specifically, the search accuracies for YoutubeFaces dataset of CSLBP feature and UQ\_VIDEO are significantly higher than OSH, with low update time cost.

\subsubsection{Batch methods comparison}

To investigate how well our proposed method over a sliding window technique with expired data deletion approaches the performance of the batch mode methods where the batch models will be retrained on the data from the sliding window at each iteration, we compare our model with batch mode methods.
In addition, we compare with ``no update'' model to show the update time complexity of our method.
Since the results are similar to the batch methods comparison for online PQ model in Section 6.6.3, we put them in the Supplementary Material.

\section{Conclusion and future work}
In this paper, we have presented our online PQ method to accommodate streaming data. In addition, we employ two budget constraints to facilitate partial codebook update to further alleviate the update time cost.
A relative loss bound has been derived to guarantee the performance of our model.
In addition, we propose an online PQ over sliding window approach, to emphasize on the real-time data.
Experimental results show that our method is significantly faster in accommodating the streaming data, outperforms the competing online and batch hashing methods in terms of search accuracy and update time cost, and attains comparable search quality with batch mode PQ.

In our future work, we will extend the online update for other MCQ methods, leveraging the advantage of them in a dynamic database environment to enhance the search performance.
Each of them has challenges to be effectively extended to handle streaming data.
For example, CQ \cite{DBLP:conf/icml/ZhangDW14} and SQ \cite{DBLP:conf/cvpr/ZhangQTW15} require the old data for the codewords update at each iteration due to the constant inter-dictionary-element-product in the model constraint.
AQ \cite{DBLP:conf/cvpr/BabenkoL14} requires a high computational encoding procedure, which will dominate the update process in an online fashion.
TQ \cite{DBLP:conf/cvpr/BabenkoL15} needs to consider the tree graph update together with the codebook and the indices of the stored data. 
Extensions to these methods can be developed to address the challenges for online update.
In addition, online PQ model can be extended to handle other learning problems such as multi-output learning \cite{liu2018metric,shen2017multilabel}.
Moreover, the theoretical bound for the online model will be further investigated.


%



\ifCLASSOPTIONcompsoc
  \section*{Acknowledgments}
\else
  \section*{Acknowledgment}
\fi

Ivor W. Tsang is supported by the ARC Future Fellowship FT130100746, ARC LP150100671 and DP180100106. Ying Zhang is supported by ARC FT170100128 and DP180103096.

\ifCLASSOPTIONcaptionsoff
  \newpage
\fi



\bibliographystyle{IEEEtran}

%



%





\begin{IEEEbiography}[{\includegraphics[width=1in,height=1.25in,clip,keepaspectratio]{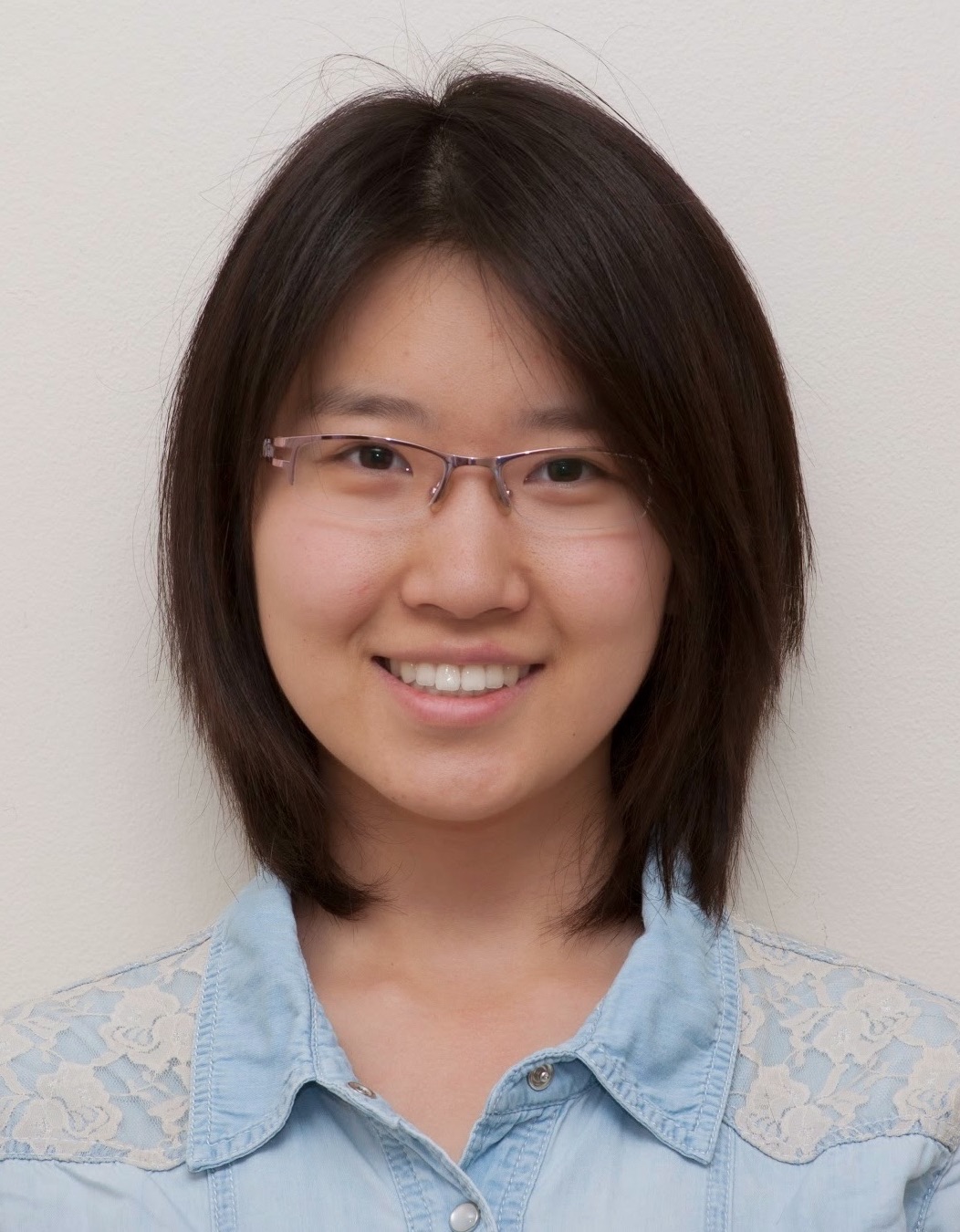}}]{Donna Xu}
	received a BCST (Honours) in computer science from the University of Sydney in 2014. She is currently pursuing a PhD degree under the supervision of Prof. Ivor W. Tsang at the Centre for Artificial Intelligence, FEIT, University of Technology Sydney, NSW, Australia. Her current research interests include multiclass classification, online hashing and information retrieval.
\end{IEEEbiography}
\begin{IEEEbiography}[{\includegraphics[width=1in,height=1.25in,clip,keepaspectratio]{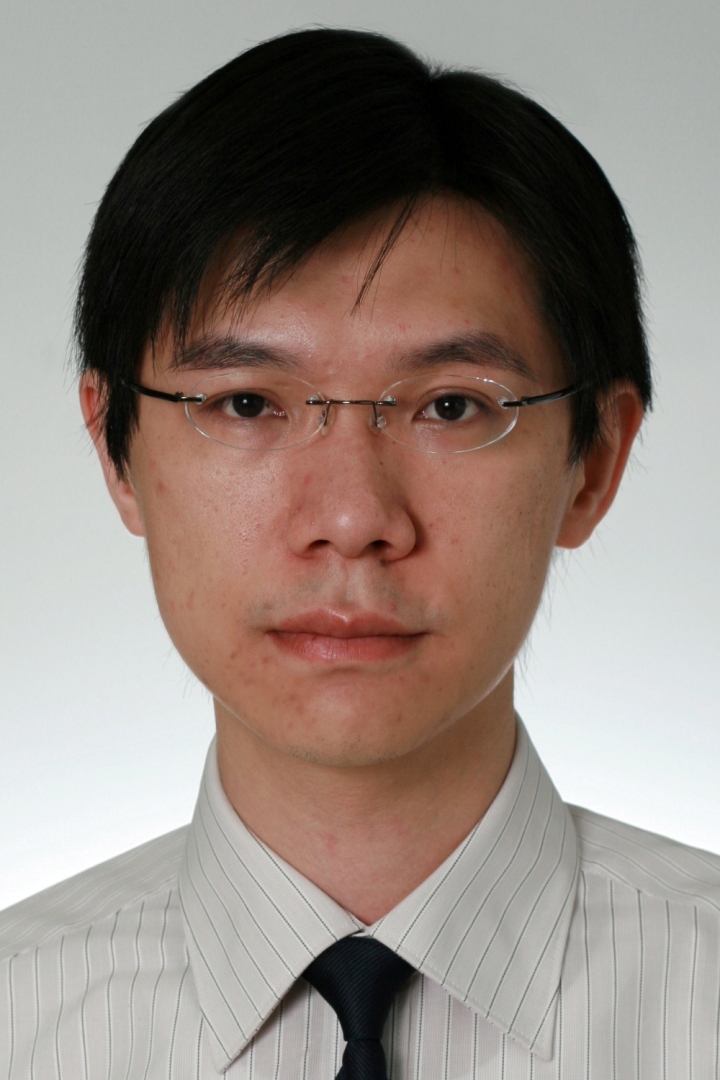}}]{Ivor W. Tsang}
	is an ARC Future Fellow and
	Professor at University of Technology Sydney (UTS). He 
	is also the Research Director of the UTS Priority Research
	Centre for Artificial Intelligence (CAI). He received his PhD 
	degree in computer science from the Hong Kong University
	of Science and Technology in 2007. In 2009, Dr Tsang was 
	conferred the 2008 Natural Science Award (Class II) by Ministry 
	of Education, China.  In addition, he had received the prestigious 
	IEEE Transactions on Neural Networks Outstanding 2004 Paper 
	Award in 2007, the 2014 IEEE Transactions on Multimedia Prize 
	Paper Award.
\end{IEEEbiography}
\begin{IEEEbiography}[{\includegraphics[width=1in,height=1.25in,clip,keepaspectratio]{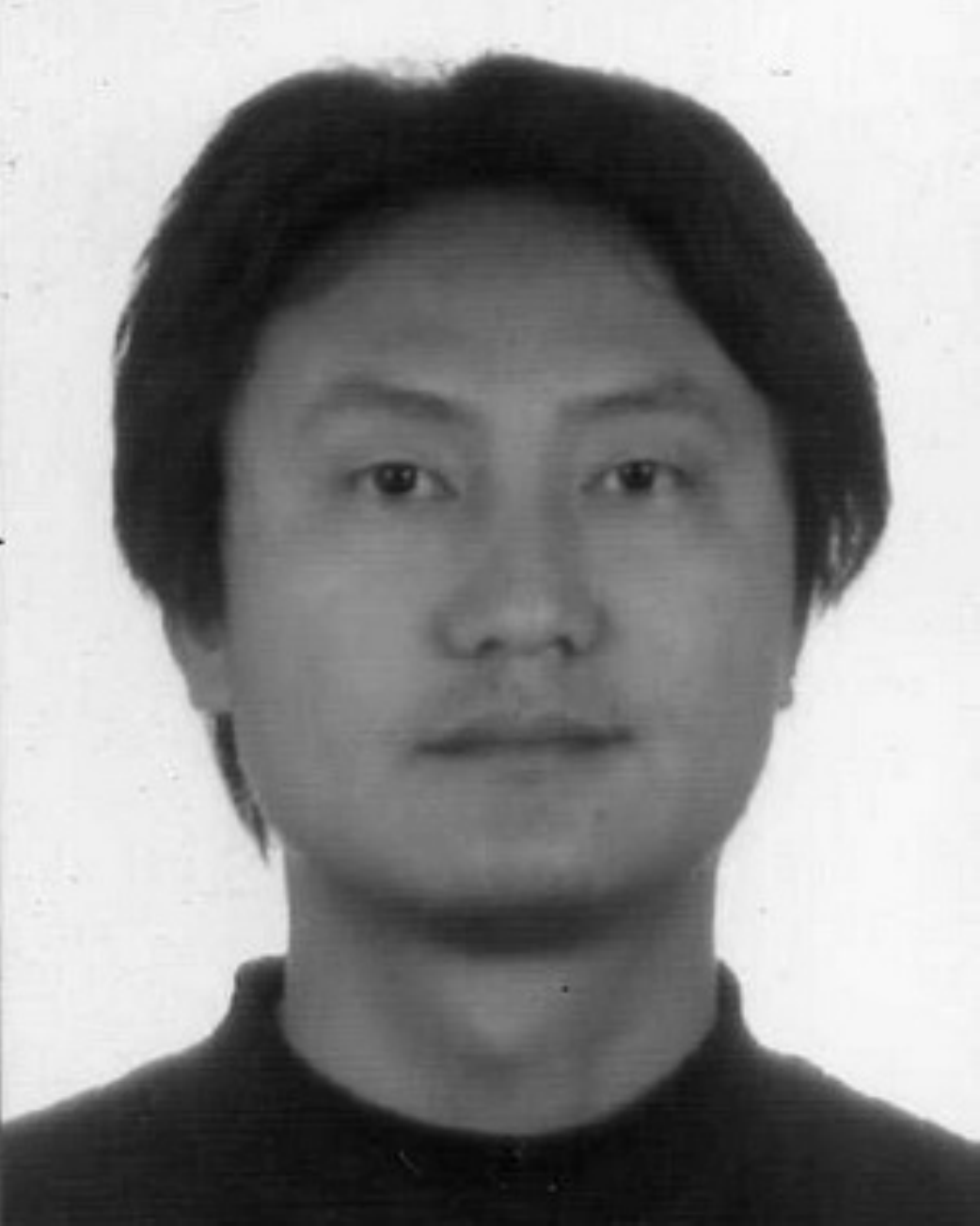}}]
	{Ying Zhang} is a senior lectuer and ARC DECRA research fellow
	(2014-2016) at QCIS, the University of Technology, Sydney (UTS).  He
	received his BSc and MSc degrees in Computer Science from Peking
	University, and PhD in Computer Science from the University of New
	South Wales. His research interests include query processing on data
	stream, uncertain data and graphs.  He was an Australian Research
	Council Australian Postdoctoral Fellowship (ARC APD) holder
	(2010-2013).
\end{IEEEbiography}




\end{document}